\documentclass[nonblindrev,copyedit]{informs1} 

\usepackage{amsmath, natbib, bm, mathrsfs, url, multirow, enumerate, xcolor}
\usepackage[normalem]{ulem}
\usepackage{natbib}
\bibpunct[, ]{(}{)}{,}{a}{}{,}%
\def\bibfont{\small}%

\newcommand*{\set}[1]{\left\{ #1 \right\}}

\usepackage[font={small}]{subcaption}
\usepackage{graphicx}
\usepackage{array}
\usepackage{booktabs}
\usepackage{multirow}
\usepackage{algorithm} 
\usepackage{algpseudocode} 

\usepackage{hyperref}
\usepackage{xcolor}
\definecolor{hypercolor}{RGB}{25,25,112} 
\hypersetup{
    colorlinks=true,
    linkcolor=hypercolor,
    filecolor=hypercolor,   
    citecolor=hypercolor,   
    urlcolor=black,
}

\newcommand{\non}{\nonumber}

\newcommand{\fignote}[1]{
\begin{flushleft}
    {\footnotesize {\it Note:} #1}
\end{flushleft}
}
\TheoremsNumberedThrough     

\MANUSCRIPTNO{} 

\begin{document}

\RUNAUTHOR{Kang et al.}

\RUNTITLE{An Inception Framework for Image-Based SPC}

\TITLE{Deep Vision in Smart Manufacturing: MODERN Framework for Intelligent Quality Monitoring and Diagnosis}
%

\ARTICLEAUTHORS{
	\AUTHOR{Yicheng Kang}
	\AFF{Farmer School of Business, Miami University, OH. USA, \EMAIL{kangy10@miamioh.edu}}
	\AUTHOR{Yuling Jiao}
	\AFF{School of Mathematics \& Statistics, Wuhan University, Hubei, China, \EMAIL{yulingjiaomath@whu.edu.cn}} 
	\AUTHOR{Xin Geng}
	\AFF{Miami Herbert Business School, University of Miami, FL. USA, \EMAIL{xgeng@bus.miami.edu}} 
	\AUTHOR{Mahesh Nagarajan}
	\AFF{Sauder School of Business, The University of British Columbia, BC. Canada, \EMAIL{mahesh.nagarajan@sauder.ubc.ca}} 
} 

\ABSTRACT{
Smart manufacturing processes are often installed with a large number of sensors, imaging devices and computers, which not only enable instant communication across various modules of a production system but also aid in intelligent manufacturing management. 
In this paper, we introduce ``MODERN'', a deep learning framework for quality monitoring and fault isolation, which integrates these enhanced capabilities into the practice of industrial quality control. Using the architecture of an inception residual neural network, we develop a control chart that monitors the likelihood of a product containing defects. We also propose a faulty region estimator that identifies the defective area using transfer learning. To extend our framework to cases where there are not sufficient training data, we suggest a transfer monitoring technique that requires only a small sample size and a hypothesis testing approach for quantitatively assessing the applicability of our method. Theoretically, we establish the minimax optimal convergence rate for both our defect likelihood estimation and fault diagnosis. Our results lead to a seemingly counter-intuitive managerial implication - it may not always be in a manufacturer's best interests to keep upgrading its monitoring equipment regardless of the cost. Empirically, we demonstrate the superior performance of our method in comparison with a state-of-the-art approach using both simulated experiments and real data. 
}%

\KEYWORDS{artificial intelligence, deep learning, image monitoring, quality management, transfer monitoring.}

\maketitle

\section{Introduction}
\label{sec:1}

\subsection{Background and Motivation}\label{intro_mot}

Over (nearly) 100 years since Statistical Process Control (SPC) was introduced by Shewhart (1931), SPC methods have always been widely used across industries, especially manufacturing, to monitor production process and control product quality. 
%
Among others, control charts as the main technique of SPC aim to quickly detect any assignable causes of variation so that investigation and correction can be done in time. 
If used effectively, SPC methods play an important role in decreasing the production variation and reducing the overall cost of quality. As reported by the American Society for Quality\footnote{Source: \url{https://asq.org/quality-resources/cost-of-quality}.}, the true quality-related costs could reach to 15-20\% of sales revenue, some going as high as 40\% of the operating expenses in manufacturing firms. Hence, setting up effective control charts can help firms substantially reduce cost and make direct contribution to profit. 
Aside from the practical impact, there are many theoretical studies of SPC control charts in the management science literature. Starting from the classic Shewhart's $x$-bar chart and $p$-chart, researchers have developed many control charts with more sophisticated design for quality characteristics of either continuous variables or attributes (see, e.g., \citealt{page1961cumulative, montgomery1972economic, lowry1992multivariate, hawkins2003changepoint}). The traditional design of control charts involves selecting the sample size, sample frequency, and control limits to satisfy certain statistical and/or economic requirements. Examples include the sensitivity to false alarm leading to requirement on in-control (IC) \emph{average run length} (ARL) and the timeliness of delivery of out-of-control (OC) signals. 
Naturally, given the costs of sampling, investigating, and correcting, traditional designs often seek to minimize the average long-run total cost (see, for example, \citealt{jensen1984monitoring, tagaras1988economic, calabrese1995bayesian, porteus1997opportunities, wang2015multistate, wang2016minimizing, montgomery2019introduction}). 
%


It is noteworthy that, even though control charts are important tools for quality monitoring, the research on this topic has quieted down in the past decade. This is because the design and analysis of traditional control charts have been well understood. However, in recent years the landscape of quality monitoring has been drastically changed by rapid advancements of emerging technologies, necessitating a fresh look at the SPC techniques in modern manufacturing. The high degree of automation in modern manufacturing comes with quality monitoring systems which use powerful technologies including computer vision, robotics, and artificial intelligence --  such digitization throughout the entire manufacturing process is referred to as \emph{smart manufacturing}, an example of Industry 4.0 \citep{olsen2020industry}. 
In contrast, in traditional manufacturing settings, human workers and analog machines are tasked with picking non-conforming products. These impose severe restrictions on their ability to keep up with high production volumes. Thus, only a fraction of products are sampled. Because of this, the design of traditional control charts focuses on sample size and inspection frequency. In smart factories, these trade-offs are largely absent. Sensors and cameras are able to unceasingly ``watch'' the products and capture information about product specifics such as dimension, surface defects and spatial patterns in real time.  

Since the essential objective of SPC, i.e., to monitor the potential process shifts and inform decision-making in real time, remains the same even in Industry 4.0, the SPC control charts need to evolve and adapt to the technologies. In fact, 
the investment in the advanced quality inspection technology has set a even higher requirement for the {\it responsiveness} and {\it accuracy} of the control chart. Specifically, to harness the power of the computer vision for smart manufacturing quality control, how should manufacturers recognize and classify high volume of images captured by cameras, and how should the image data be processed and analyzed to inform the control decisions in a timely fashion and offer post-signal diagnosis with high accuracy? This new challenge motivates us to exploit advanced artificial intelligence techniques, coupled with appropriate statistical methods, to build a general framework for image-based quality monitoring and fault diagnosis.

\subsection{Literature Review on Image-Based Quality Control}\label{intro_review}

Machine learning, especially deep learning, has revolutionized image analysis. While classical analysis methods could be easily rendered infeasible because of the high dimensional input (pixels in millions) of an image, methods based on deep learning can effectively deal with the dimensionality issue and efficiently extract image features for analysis. In particular, deep \emph{convolutional neural networks} (CNNs) \citep{freshlook}, the most commonly adopted deep learning basis, use an architecture of many convolutional and pooling layers to gradually reduce the dimensionality of the input data while keeping the most relevant features preserved. Specifically, convolutional layers are mathematical filters that preserve the translation-invariant features of images. They are usually followed by pooling layers which detect the locations and strengths of those features. The combined use of convolutional and pooling layers vastly reduces the number of parameters required by fully connected neural networks. 
Because of their effectiveness in image analysis, CNNs are widely used for image-related management in operations, marketing, and information systems \citep[see, e.g.,][]{davidms, liumk, kevin}. For image-based quality management, methods using CNNs have been studied in several industrial settings, including surface grading in ceramic tile production (\citealt{lopez2008performance}), anomaly detection in rolling processes in the steel industry (\citealt{feng2018difference}), monitoring nonconforming patterns in textile manufacturing (\citealt{bui2018monitoring}),  surface uniformity inspection in the liquid crystal display industry (\citealt{jiang2005liquid}), and defect detection in the welding industry (\citealt{zhang2019weld, park2019convolutional}). We refer the readers to \citet{survey2} and \citet{survey1} for surveys on the CNN applications to industrial inspection. 

In all of the above papers, images of products are collected, and the quality characteristics (e.g., product geometry or surface finish) extracted from these images are used to determine whether the manufactured products meet the quality standards. Therefore, these papers only deal with one aspect of quality control, namely, identifying defective \emph{products} via inspection. By contrast, we focus on the charting methods that detect persistent quality shifts in the \emph{process} via monitoring. Given the on-going nature of a typical manufacturing process, using control charts to manage the potential quality abnormality reduces the possible delay of out-of-control signal delivery, lowering the quality-related costs. Therefore, departing from the above line of works, our main research output is an image-based SPC control chart rather than a product quality inspection technique. 

The underlying logic for an image-based control chart to work is to check the conformity of a manufactured product by comparing its image with the pre-determined image of a good-quality product. So far, two types of monitoring methods have been proposed in the literature. However, they all impose restrictive assumptions on the captured images, limiting their use in many practical settings. Here we aim to propose a flexible method that is able to handle a large variety of images. To illustrate the limitations of the existing methods and the capability of our approach, we provide three types of images in different contexts; see Figure \ref{fig:1} below. 
\begin{figure}[htp]
  \centering
  \caption{An illustration for the insufficiency of the existing image-based quality monitoring.}\label{fig:1}
  \begin{tabular}{ccc}
    \includegraphics[width = 0.3\textwidth]{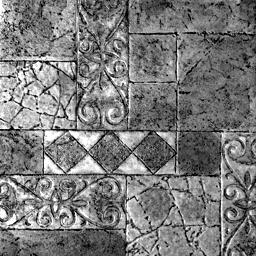} & \includegraphics[width = 0.31\textwidth]{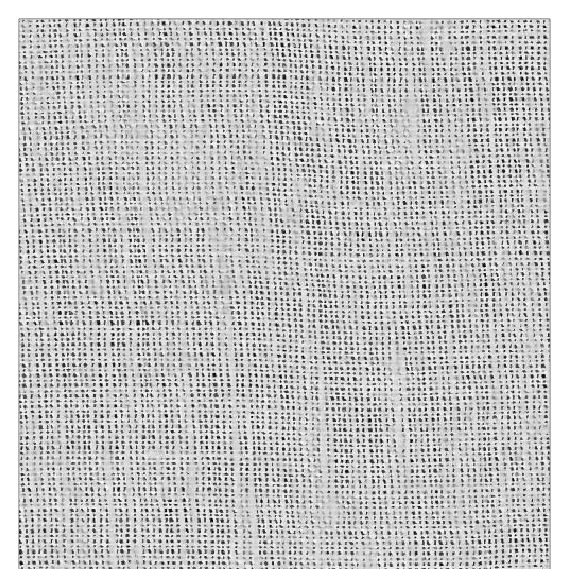} & \includegraphics[width = 0.302\textwidth]{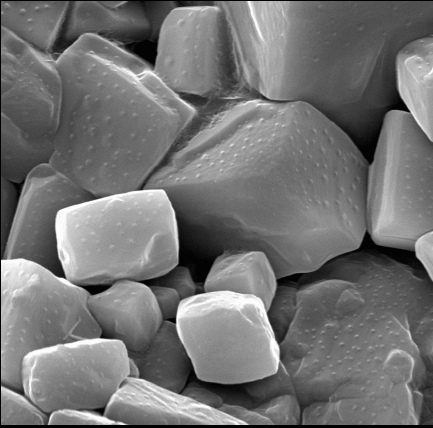} \\
    (a) A tile surface & (b) A textile fabric sample & (c) Microscopic structures of salt
  \end{tabular}
  \vspace{6pt}
  \begin{flushleft}
   {\footnotesize   {\it Note:} Image (a) has a deterministic pattern to conform to. Image (b) satisfies Markov Field conditions and its conformity can be defined in a stochastic sense. Image (c) does not have either property and no existing process monitoring method can treat this type of images well.  }
  \end{flushleft}
\end{figure}
First, Figure \ref{fig:1}(a) is an image of a tile with no surface defects. This image encompasses characteristics of good-quality products and can be used as a gold standard with distinct deterministic features. As a result, when monitoring this tile production process, images with significant deviation from the gold standard would be deemed out-of-control (OC). Most existing image monitoring methods in the literature are designed under the assumption that there is a gold standard image \citep[see, e.g.,][]{jin1999feature, lin2007computer, lin2007automated, megahed2012spatiotemporal, koosha2017statistical, feng2018difference, kang2022statistical}. 
Second, Figure \ref{fig:1}(b) displays an image of a textile fabric sample. Clearly, in this setting, there does not exist a gold standard as two good-quality products may look completely different in the sense that there is no pixel-to-pixel match between them. In fact, the in-control (IC) textile weave patterns exhibit stochastic behavior and are not deterministically repeated. Images of this type are considered in \citet{levina2006texture} and \citet{bui2018monitoring}. Their method relies crucially on the assumption that the joint distribution of pixel intensities (i.e., the brightness measure at each pixel location) must follow a Markov Field (MF) model. Specifically, the marginal distributions of individual pixels are the same regardless of the pixel location (i.e., the stationarity condition), and the dependence of a pixel's intensity on the other pixels is only through several neighboring pixels (i.e., the locality condition). 
%
%
These conditions seem to hold for Figure \ref{fig:1}(b), but it may not be the case in other applications. 
Lastly, Figure \ref{fig:1}(c) shows an image of the microscopic structures of margarita salt. In contrast to the previous two images, this image neither has a gold standard, as the minuscule cubes of salt are randomly placed together, nor does it satisfy the stationarity condition, as the small cubes appear brighter than the larger ones. Thus, the existing methods will all fail at the process monitoring and quality control for products with this type of images. To the best of our knowledge, we are the first to build a general CNN-based framework for image-based SPC control chart that has the capability to handle a variety of images without making restrictive assumptions (e.g., gold standard, locality, or stationarity).


\subsection{Results Overview and Contributions}\label{intro_con}


Our work has three sets of contributions. First, we provide a ready-made and implementable quality control technique that can be adapted in smart manufacturing. Second, we show that our method has attractive theoretical properties. Third, we are able to provide managerial insights on interpreting the results as well as guidance on decisions that may improve monitoring and control. We detail our contributions as follows.


\emph{Practical Contribution.} 
The foremost contribution of our research is to propose MODERN -- \underline{Mo}nitoring with \underline{De}ep \underline{R}esidual \underline{N}etwork -- a deep learning framework for image-based SPC quality control. The proposed framework accomplishes the following two goals that ensure successful manufacturing quality control: (i) \textit{quality monitoring}, i.e., quick delivery of a signal if the production process starts working abnormally, and (ii) \textit{post-signal diagnosis}, i.e., accurate isolation of the faulty region of the defective products after an OC signal is given by the control chart. Using the architecture of a deep neural network with residual connections, we are able to describe a wide range of deterministic and stochastic IC behaviors, filling the gap in the literature of image-based control charts. Based on the general framework and the neural network model, we provide an exponentially weighted moving average (EWMA) control chart (MODERN-Chart) to monitor possible process shifts and signal abnormality in real time. Our proposed monitoring technique has a desirable re-starting mechanism that accumulates the evidence when the observed images suggest a process shift and resets to the initial state when the data suggest otherwise. The implication is that our control chart is able to detect small but persistent shifts for the manufacturing process, which may be overlooked if only inspection methods are used for individual products. Moreover, the proposed chart is easy to implement and thus can be adopted in practice without much difficulty. In addition, by modifying the initialization and the final layer of the neural network, we develop a faulty region estimator to facilitate the post-signal diagnosis (MODERN-Diagnosis) by locating the defective area in the OC images. We show that the faulty region estimator can be calibrated on a relatively small dataset of defective images. {\color{black}It is worth noting that the MODERN framework is not restrictive in terms of the specific choice of the neural network. Since its implementation remains the same if a different neural network is used, it can evolve at the same pace as deep learning technology. In this sense, our research provides a road map for dynamically integrating deep learning technology into the practice of quality control. }



We also establish an important generalization that ensures a convenient implementation of MODERN framework in a wide range of manufacturing processes. Since different production lines have different sets of IC/OC images, a common but major challenge to apply deep learning to a specific setting is to quickly train the neural network from scratch to a sufficient level, especially when there is a relatively small set of training images. To overcome this obstacle, we propose a novel \emph{transfer monitoring} technique that consists of two components, namely, an applicability test and an augmented bootstrap algorithm for control limit determination. Given a specific manufacturing setting, we first conduct an applicability test assessing whether our pre-trained MODERN-Network is capable of distinguishing the IC and OC images at hand. The applicability test also safeguards against misuses of the framework, avoiding the cost of quality mismanagement. Then, we propose the augmented bootstrap algorithm that adapts the control limit of our MODERN-Chart to the specific setting. Notably, compared to retraining the neural network, the sample size required for our augmented bootstrap algorithm to recompute the control limit is significantly smaller. {\color{black}Hence, in the context of image-based quality control, our transfer monitoring solves the notorious problem of the formidable sample size requirement of deep learning technology. Furthermore, we provide the mathematical basis for this sample size reduction property by proving that the convergence rate by transfer monitoring is much faster than the convergence rate achieved by retraining the neural network.}


\emph{Theoretical Contribution.} 
For both the defect likelihood estimation and post-signal fault diagnosis based on the proposed framework, we establish the \emph{minimax asymptotic optimality}. Specifically, we first prove that the error of our quality control signaling and defect region isolation will approach to zero as the sample size grows. We then show, using novel techniques, that our neural network estimation achieves the minimax optimal rate of convergence. That is, no other methods can attain a faster rate, suggesting that our MODERN framework makes very efficient use of the training sample. 
The main theoretical challenge is that the underlying deep CNNs have a complex nonlinear structure, which prevents us from applying conventional approximation tools and using basis series such as polynomials, splines, or wavelets \citep[see, e.g.,][]{kang2022statistical}. 
As a result of this difficulty, the associated optimization problem cannot be reduced to solving a finite set of stationary equations. {\color{black}While the desired approximation result has been established for fully connected neural networks in recent developments of deep learning theory (e.g., \citealt{shen2019deep}), no such result exists for CNNs. To address this gap, we adapt existing techniques to approximate the loss functional in an infinite-dimensional space by constructing and bounding it by locally quadratic functionals. Ultimately, we show that the same convergence rate holds for CNNs as well. }


{\color{black}We also address an important matter in image-based quality control: the impact of mislabeling. Industrial images can get labeled erroneously due to negligence of the quality personnel or poor resolution of the image. Thus, in Industry 4.0, it is crucial for quality managers to understand the effect of mislabeling when quality control is integrated with deep learning technology. Our research shows that the estimation error of the defect likelihood can be decomposed into two terms: the CNN convergence rate and the image mislabeling rate. This allows quality managers to set a bound on the mislabeling rate relative to the total sample size, with the understanding that the deep learning-based quality control becomes less effective if the bound is exceeded.} 
%

\emph{Managerial Contribution.} 
As an extension to the previous two contributions, we also offer managerial insights for manufacturers who are transitioning to Industry 4.0. First, compared to image-based quality control using MODERN framework, the traditional manual inspection approach inevitably suffers from incompleteness and subjectivity: Human operators are easily subject to errors and fatigue, and they may have different training and visual perceptions. Overcoming these drawbacks, MODERN framework for smart manufacturing quality control achieves higher efficiency (machines do not need rest) and greater repeatability (human subjectivity is eliminated). 
Second, our convergence results have useful implications on the equipment/device investment. Despite the benefit of accumulating additional training images (the estimation error would diminish), the convergence rate is found to be decreasing in the image resolution, suggesting that it may not always be in a manufacturer's best interests to keep upgrading its monitoring equipment for higher image resolution. Instead, {\color{black}the manufacturer should consider image quality together with labeling quality. 
In other words, if mislabeling does not occur or occurs at a tolerable level (i.e., within the aforementioned asymptotic bound), then it would be beneficial for the manufacturer to not upgrade or even downgrade their imaging equipment. On the other hand, if the mislabeling rate relative to the total sample size exceeds the asymptotic bound, our deep learning-based quality control becomes less effective. In such cases, higher-resolution images are necessary to allow precise labeling and thus investments in equipment upgrades are warranted.} Third, our MODERN framework offers \textit{interpretable} diagnosis to quality managers. Following an OC signal given by MODERN-Chart, MODERN-Diagnosis identifies the specific faulty part of the defective product that triggers the OC signal, providing useful information about the root cause.

%

\vspace{8pt}


Finally, we conduct a series of extensive numerical studies to showcase the performance of the proposed MODERN framework. 
First, using DAGM, a commonly used benchmark dataset for industrial image analysis, we evaluate the numerical performance of our method. MODERN-Chart is observed to give a signal at the exact change point (where process just starts to shift) most of the time, a satisfactory performance for quality monitoring. The numerical assessment also reports that our faulty region estimator does well in locating the defect, even though the estimator is trained on a small set of OC images. 
Second, we compare our method with a state-of-the-art approach in simulated experiments. Since the images in our setting typically do not have a gold standard, the comparison is done with the Markov Field (MF) approach suggested by \citet{bui2018monitoring}. 
Using synthetic images that satisfy the Markov Field conditions, we show that our method performs better than the MF approach in terms of timely signal delivery and accurate faulty area isolation. 
%
Third, we demonstrate the proposed method on {\color{black} two sets of real product images, one from a commutation system manufacturer and the other from a machine software supplier.} Our method is shown to deliver the OC signal quickly and flag the faulty area. We also apply the MF approach to the same sets of images and find that it may send a false alarm before the process starts to shift. 
%



\subsection{Organization of the Paper}\label{intro_org}

The remainder of this paper is organized as follows. Section \ref{sec:2} describes our model for the image data analysis based on a deep CNN of inception architecture with residual connections. 
Section \ref{sec:3} introduces the proposed MODERN framework. Section \ref{sec:4} discusses theoretical results and their implications. Section \ref{sec:5} presents a benchmark dataset of industrial images and evaluates the proposed method using numerical studies. Section \ref{sec:6} summarizes the work and suggests possible extensions beyond industrial manufacturing. Technical details are relegated to the appendix.

\section{Model Descriptions} \label{sec:2}

Extracting information from images is challenging because image data are unstructured and contain complex spatial patterns. Although composed of pixels, an image cannot simply be treated as a collection of individual pixels; otherwise, the spatial information, which captures the main characterizations of the image, would be lost. Therefore, it is necessary to create a set of meaningful features that best represent the image. Manually defining such features is time-consuming and often domain-specific, making it impractical and non-generalizable. The recent development of deep learning offers a much more efficient tool for image analysis. In particular, deep CNNs automate the step of feature extraction by learning the representative features from data. Since its invention, deep CNNs have seen a number of successes in difficult tasks such as image classification \citep{krizhevsky2017imagenet}, image segmentation \citep{long2015fully}, human pose estimation \citep{toshev2014deeppose} and object tracking \citep{wang2013learning}. Refer to \citet{stevens2020deep} for a more detailed discussion. 
In the following, we describe our image analysis model, which is built upon the deep convolutional neural networks. First, we introduce the transformation layers as the basic building blocks for the network. Then, in order to deal with a large amount of layers, we incorporate an efficient deep CNN design into our model.

\subsection{Basic Building Blocks}\label{sec:2.1}

A CNN involves various types of transformation layers. The output of each layer serves as the input into the next one. It is through these layers that a CNN learns the key attributes of an image. Typically, the first several layers extract basic image features, deeper layers detect complete objects, and the final layer returns outcome variables with meaningful interpretations. Same as most of CNNs seen in literature (e.g., \citealt{krizhevsky2012imagenet, lecun2015deep}), our model utilizes four types of transformation layers: convolutional, activation, pooling and fully connected layers. We briefly describe each type next.


\subsubsection{Convolutional Layer.} \label{sec:2.1.1}
Consider an input image represented by a pixel matrix, $X_{\texttt{in}}$, and a corresponding \textit{kernel}, which is a weight matrix, $\{K(i, j): -k \leq i, j \leq k\}$, of size $2k + 1$\footnote{Odd-sized kernels are often used in convolutional layers for analytical convenience.}. Then, the output data of a convolutional layer, denoted by $X_{\texttt{out}}$, is element-wise computed by sliding the kernel on all input locations and performing the weighted sum. Specifically, the pixel at location $(m, n)$ of the transformation output is defined as the scalar product of the kernel $K$ with the neighborhood of input pixel $X_{\texttt{in}}(m, n)$: 
\begin{align*}
  X_{\texttt{out}}(m, n) = \sum_{i = -k}^{k}\sum_{i = -k}^{k} X_{\texttt{in}}(m + i, n + j) \cdot K(i, j) + b,
\end{align*}
where $b$ is the bias parameter associated with the kernel. Note that the size of the output image can vary according to different designs of the layer. For example, the number of pixels the kernel slides, called \textit{stride}, can affect the output size; i.e., the larger the stride, the smaller the output image. Another factor is how the convolution is defined on the boundary of the input image, referred to as \emph{padding}. There are two approaches for padding. \textit{Same padding} adds pixels of value zero around the border so that the output image has the same size as the input image, whereas \textit{valid padding} means sliding the convolution kernel within the input image, resulting in an output image smaller than the input. Therefore, the dimension of $X_{\texttt{out}}$ is determined by the dimension of $X_{\texttt{in}}$, the kernel size, the stride, and the padding approach. Moreover, there can be several kernels in a convolutional layer, each called a \emph{filter}. The number of filters defines the width of the layer, which is also the number of output images of that layer. The hyperparameters of the convolutional layer, i.e., stride, padding, kernel size, and filter quantity are all specified by the CNN's architecture (or user). The learnable parameters, including the kernel elements and biases, are estimated from data.

The key rationale behind convolutions is two-fold. First, it is a localized operation in the sense that it only examines how nearby pixels are arranged, disregarding pixels from far away. This is a reasonable design because patterns of objects in images are indeed localized. Second, the convolution transformation is translation invariant, because the same kernel is applied across the input image. This ensures that the location of an object does not matter if that object is to be detected in an image. For instance, if we want to recognize an airplane, we are not concerned with whether the airplane is in the upper left or lower right corner of the image. This parameter-sharing design facilitates a significant reduction of the number of parameters in the layer.

\subsubsection{Activation Layer.}\label{sec:2.1.2}
An activation layer applies an element-wise nonlinear activation function to input from the previous layer. Thus, the dimensions of the output and input are the same. While convolutional layers, no matter how many times they are applied to input data, only result in an affine/linear functional form, the nonlinearity introduced by activation layers allows the CNNs to approximate more complex functional relationships. There are a few popular choices for activation functions. We use one of the best-performing activation functions, namely, the ReLU (rectified linear unit) activation in our model: 
\begin{align*}
  \text{ReLU}(x) = \max\set{0, x}.
\end{align*}
Therefore, the ReLU activation layer has a threshold property: pixels without the feature defined by the corresponding kernel are suppressed to zero values in the subsequent layers. Due to this property, ReLU activation layers can detect signal strengths and signal locations in our model. 

\subsubsection{Pooling Layer.}
\label{sec:2.1.3}
A pooling layer is often inserted between convolutional layers to transform the features of the input data in a collective manner. Two commonly seen pooling operations, namely,  max pooling and average pooling, are used in our model. Max [average] pooling takes the maximum [average] over each neighborhood of the input image as the new pixel of the output image. Similar to convolutional layers, it has hyperparameters such as the stride and the size of the pooled neighborhood that determine the output dimension. Moreover, if the previous layer has multiple filters, pooling operates independently on every filter so it has the same width. 
%
%
Intuitively, the output images from a convolution layer followed by a ReLU activation tend to have a high magnitude at places where features associated with some kernels are detected. By keeping the maximum or the average in the neighborhood as the down-sized output, the pooling layer ensures that the detected features are preserved. 

A convolutional layer, activation layer and pooling layer together form important building blocks for constructing CNNs. Most CNNs use the three-layer building blocks in a repeated fashion, with varying dimensions. Typically, as the network gets deeper, the output image progressively shrinks in size while the number of filters increases.

\subsubsection{Fully Connected Layer.}
\label{sec:2.1.4}
Fully connected layers are usually at the end of a CNN, where the final output is about to be produced. Output images from the previous layer are first flattened as a vector, which is then fed into a fully connected layer that operates as follows. Let $Z_{\texttt{in}}$ be an input vector of length $l_{\texttt{in}}$ and $W_{\texttt{FC}}$ be the weight matrix of size $l_{\texttt{out}} \times l_{\texttt{in}}$ associated with the fully connected layer. Then, the output vector $Z_{\texttt{out}}$ is computed as
\begin{align*}
  Z_{\texttt{out}} = W_{\texttt{FC}} \; Z_{\texttt{in}} + b_{\texttt{FC}},
\end{align*}
where $b_{\texttt{FC}}$ is the bias parameter of length $l_{\texttt{out}}$. Hence, the fully connected layer includes a total of $l_{\texttt{out}} \times l_{\texttt{in}} + l_{\texttt{out}}$ learnable parameters. Unlike convolutional layers where each output pixel is connected only to a local region of the input, units in a fully connected layer are connected to the entire subsequent layer. As such, the flattened vector is often regarded as derived image features. The main purpose of fully connected layers is to map these features to target variables with contextual interpretations. 
For quality monitoring where the objective is to signal whether the process has shifted, the target variable is the probability of the process being OC. 
Given that the output unit from a fully connected layer can take any real value, we relate it to the OC probability via the \textit{logistic} function, defined as
\begin{align*}
  p(z) = \frac{e^{z}}{1 + e^z},
\end{align*}
where $z$ is the output from a fully connected layer.  In the post-signal diagnosis model, however, the last layer output $\bm{z}$ is a five-dimension variable that represents the location of the faulty region, and there is no need to apply the logistic function in that case. See Section \ref{sec:3.3} for details.

%


\subsection{Inception Architecture with Residual Connections}
\label{sec:2.2}

Applying multiple transformation layers of different types gives a CNN model. If the number of convolutional layers is relatively small, the CNN model is too shallow to perform satisfactorily. \emph{Deep} CNNs refer to those with many (usually $>100$) convolutional layers, and the resulting model capability is increased. Indeed, with depth, the CNN model is able to approximate much more complex functional relationships and even capture the hierarchical information when we need to understand the context. For instance, a shallow network could only identify a person's shape in a photo, but a deep network could identify the person's face and the mouth in the face. 

The inception architecture, initially proposed by \citet{szegedy2015going}, is an efficient deep CNN design for computer vision. At the ImageNet Large-Scale Visual Recognition Challenge in 2014, the first incarnation of inception-style CNN, ``GoogLeNet'', outperformed ``AlexNet'', which was the state-of-the-art deep CNN at that time \citep{krizhevsky2012imagenet}. Since then, the inception design has been improved and further refined by \citet{szegedy2016rethinking} and \citet{szegedy2017inception}. 
%
%
The inception design aligns with the intuition that visual information should be processed at various scales and then aggregated so that the next stage can abstract features from different scales simultaneously. Based on this rationale, the key feature of an inception architecture is \textit{filter concatenation}, which concatenates the output images from different filters to become what is called an inception module; and the inception-style CNNs involve repeated use of the inception modules, which are stacked upon each other in the network. Figure \ref{fig:incep_res}(a) provides a graphical illustration of filter concatenation using a generic inception module as an example. 

%

%
\begin{figure}[ht!]
\centering
\caption{Graphical illustration of filter concatenation and residual connection in a generic inception module.} \label{fig:incep_res}
\begin{subfigure}[t]{.45\linewidth}
\centering
			 \includegraphics[width = \linewidth]{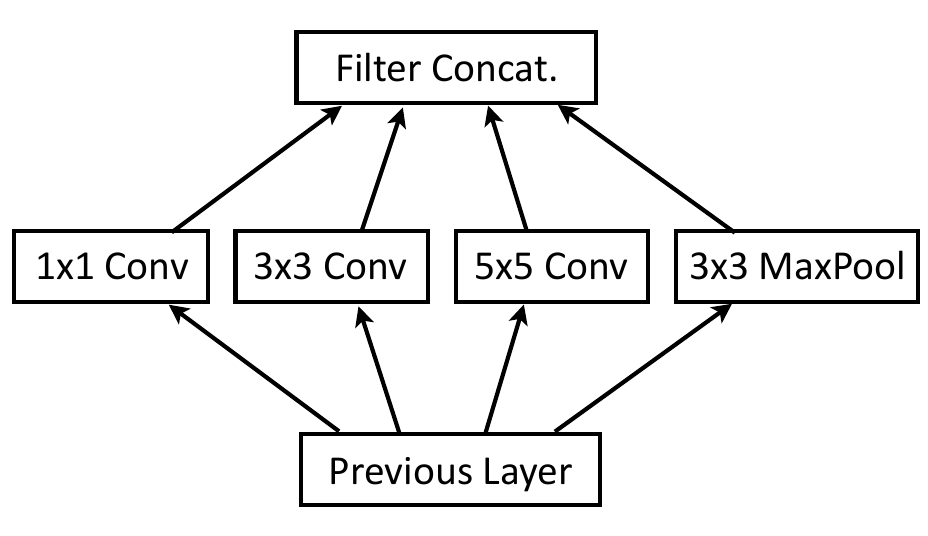}
             \caption{An example of filter concatenation.}  
\end{subfigure}
~ 
\begin{subfigure}[t]{.45\linewidth}
\centering
			\includegraphics[width=\linewidth]{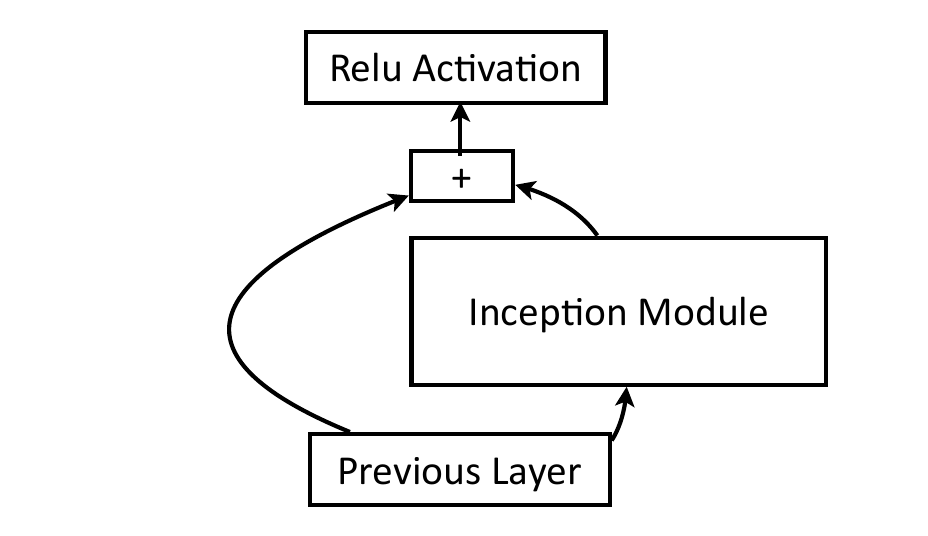} %
            \caption{An illustration of residual connection.}  
\end{subfigure}

\vspace{8pt}
\fignote{Several filters of a transformation layer are concatenated into an inception module. Then, that inception module and the previous layer are fed together into the next activation layer.}
\end{figure}

A major challenge with deep CNNs is that the training becomes harder as the depth increases. Indeed, features carried by an early layer could easily vanish after undergoing a long chain of transformation, leading to ineffective training of that layer. The \textit{residual connection} design suggested by \citet{he2016deep} addresses this issue. The main idea is to skip layers by adding the input to the output before the next activation layer, forcing the features learned from the shallower layers to be carried over to the deeper layers; see Figure \ref{fig:incep_res}(b) for an illustration and note that the residual connection is represented by the ``+'' box. Hence, adding residual connections to the inception architecture accelerates the training of the networks while maintaining the performance. This idea was incorporated by \citet{szegedy2017inception} in their latest version of inception design ``Inception ResNet'', which shows high computational efficiency and excellent approximation performance. Therefore, we will adopt the residual inception architecture for the neural network in our image data analysis model, which is further described next.

\subsection{The Network Architecture and Hyperparameters in Our Model}\label{sec2_ours}

Our model utilizes the basic building blocks of various transformation layers from Section \ref{sec:2.1} and incorporates the residual inception architecture from Section \ref{sec:2.2}. Being a deep CNN model with numerous layers where multiple inception modules are applied at different places, the model has a complex structure and many hyperparameters; In particular, our network has about 23.4 million learnable parameters. 
To present our inception residual CNN model, Figure \ref{fig:schema} provides a complete schematic view of the network architecture and hyperparameters. At the center of the figure, there is a (linear) top-level structure comprising six main modules of operations, namely, ``Stem'', ``InceptionRes-A'', ``Reduction-A'', ``InceptionRes-B'', ``Reduction-B'' and ``InceptionRes-C''. Note that ``InceptionRes-A'' module is repeated five times in the architecture, ``InceptionRes-B'' ten times, and ``InceptionRes-C'' five times. Moreover, the inception modules in Figure \ref{fig:schema}, e.g., filter concatenations and residual connections, operate in the way as described in Section \ref{sec:2.2}. 

The main modules are further depicted as indicated alongside the main structure with detailed information. As the main element of the network, convolutional layers are used many times. Each of them is represented by a box named ``Conv'', with all the relevant hyperparameters reported in the box. Specifically, kernel size is given (e.g., 3$\times$3) on top and ``(filter quantity, stride, padding approach)'' are given in the parenthesis. 
Furthermore, the box named ``MaxPool'' refers to a pooling layer with max operation and the hyperparameters are given as ``[stride, padding approach]'' in the box. Lastly, after the six main modules of operations, the ``Final Layers'' in the figure refer to an average pooling layer and a fully connected layer with the logistic function. To avoid overfitting, we set a dropout probability $0.6$ to the output data before applying the softmax function.


\begin{figure}[ht]
  \centering
  \caption{The complete schematic view of MODERN-Net architecture.}
  \label{fig:schema} 
    \includegraphics[width = \textwidth]{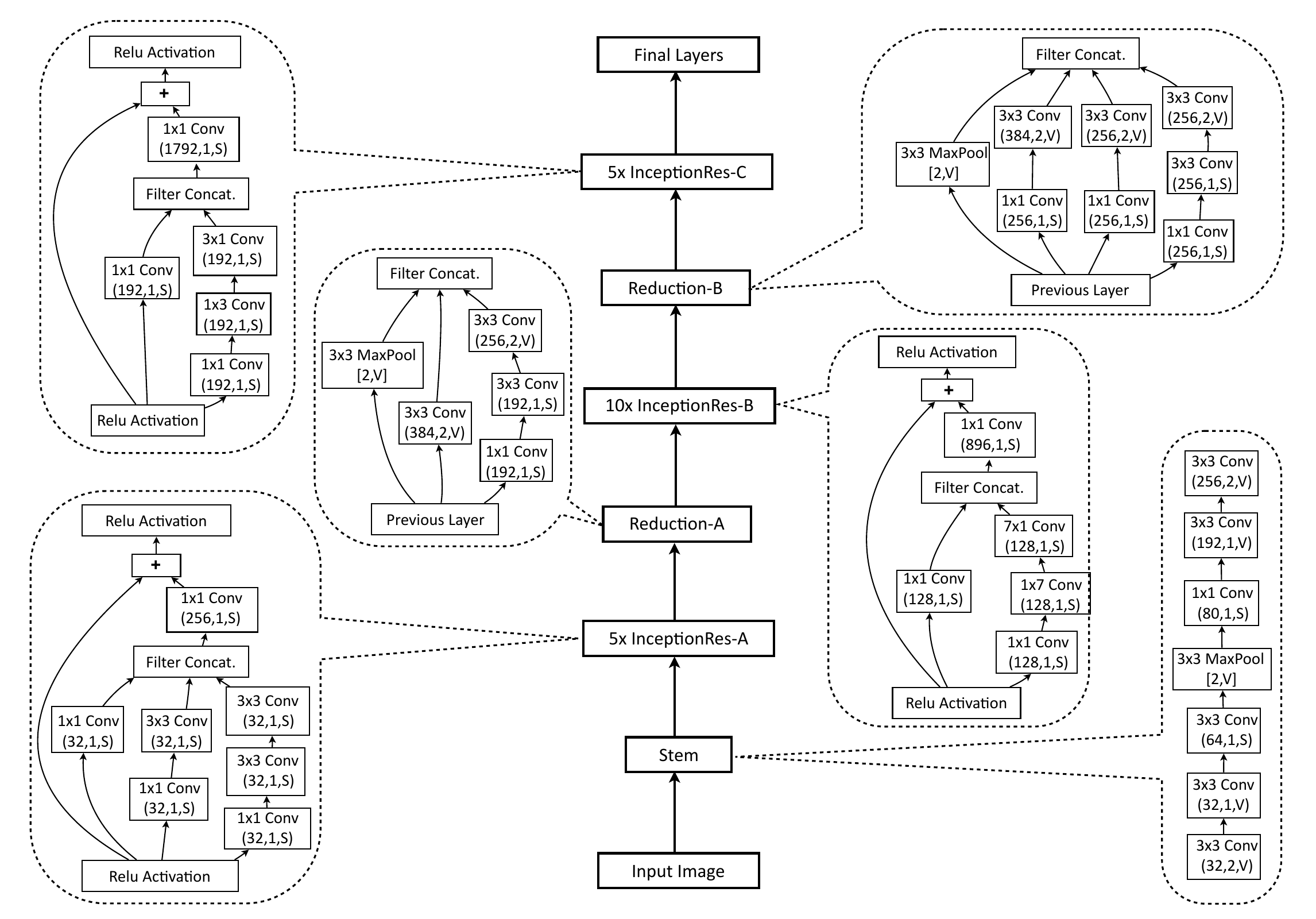} 
\end{figure}

\section{The Proposed MODERN Framework}\label{sec:3}


The underlying architecture of MODERN has been described in Figure \ref{fig:schema}. In this section, we specify how the network is trained and, more importantly, how the control chart and the faulty region estimator are constructed using the MODERN framework. We also show how to extend the applicability of our MODERN framework to different industrial settings.

\subsection{The Training of MODERN-Net}\label{sec:3.1}

We first discuss the core technique for training the CNN in our model, i.e., the stochastic gradient descent method with non-constant learning rate. Recall that our quality control uses the MODERN framework to accomplish two tasks, signaling OC events during the process monitoring and isolating the faulty regions on defective products. Although the outputs of these two tasks are different, they are based on a common set of image features. Hence, we focus on the first task, where the objective of the CNN is to estimate the probability of the input image containing defects. In Section \ref{sec:3.3}, we will modify this network to become the faulty region estimator. 

Let $\mathscr{F}$ represent our focal network, which is referred to as MODERN-Net hereafter. Furthermore, we use $\mathscr{F}(\bm{X}; \mathcal{V})$ to denote the outcome probability from applying MODERN-Net $\mathscr{F}$ to an input image $\bm{X}$ using the collection of learnable parameters $\mathcal{V}$. Then, given a set of training images, $\{(\bm{X}_i, y_i): 1 \leq i \leq N \}$, where the $i$-th image $\bm{X}_i$ is either normal ($y_i = 0$) or defective ($y_i = 1$), 
%
%
we train MODERN-Net by finding the parameters in $\mathcal{V}$ to minimize the cross-entropy loss 
\begin{align*}
 \frac{1}{N}\sum_{i=1}^N L\left( y_i, \mathscr{F}(\bm{X}_i; \mathcal{V}) \right) = \frac{1}{N}\sum_{i=1}^N -\Big\{ y_i\log\left( \mathscr{F}(\bm{X}_i; \mathcal{V})\right) + (1 - y_i) \log\left( 1 - \mathscr{F}(\bm{X}_i; \mathcal{V}) \right) \Big\}.
\end{align*}
One way to solve the above minimization problem is through gradient descent method, which is to iteratively update the decision variable $\mathcal{V}$ along the steepest descent until convergence. Specifically, the gradient-descent update is
\begin{align*}
  \mathcal{V} \leftarrow \mathcal{V} - \alpha \cdot \Delta\mathcal{V}, \quad \Delta\mathcal{V} = \frac{1}{N}\sum_{i=1}^N \frac{\partial L\left( y_i, \mathscr{F}(\bm{X}_i; \mathcal{V}) \right)}{\partial \mathcal{V}},
\end{align*}
where $\alpha \in (0, 1]$ is the \textit{learning rate}. 
We modify the above method in two aspects. First, instead of processing all pairs $\{(\bm{X}_i, y_i)\}$ before making a gradient step, we process randomly sampled smaller batches to decide the gradient-descent update. That is, the gradient step is given by 
\begin{align*}
  \Delta\mathcal{V} = \frac{1}{N_k}\sum_{\bm{X}_i \in \mathcal{B}_k} \frac{\partial L\left( y_i, \mathscr{F}(\bm{X}_i; \mathcal{V}) \right)}{\partial \mathcal{V}},
\end{align*}
where $\mathcal{B}_k$ denotes the $k$-th batch of size $N_k$. By estimating the gradient on a few samples at a time, we prevent the optimization process from getting stuck in local minima. We process the smaller random samples until all training pairs have been used in the gradient step calculations. This is called an \emph{epoch}. Epochs of updates are repeated until the cross-entropy loss converges. The training set is shuffled at each epoch, so that the gradient-descent update is accompanied by stochasticity of sampling. 
By introducing randomness to this process, we ensure that the sequence of gradients estimated over small batches is representative of the gradients computed across the full set.


The second modification to the gradient-descent update is that we consider variable, rather than constant, learning rate. With a constant learning rate $\alpha$, the algorithm may converge slowly if $\alpha$ is too small but could diverge if $\alpha$ is too large, limiting the performance of the network. 
%
%
Therefore, it is beneficial to vary the learning rate as the iteration progresses. Based on the work of \citet{smith2017cyclical}, we adopt the optimal learning rate policy with the triangular cyclical schedule: At the $j$-th iteration ($j = 1, ,2, \cdots$), the learning rate is
\begin{align*}
  \alpha_j =
  \begin{cases}
    \alpha_L + (\alpha_U - \alpha_L)\frac{j - l\cdot C}{C}, & l\cdot C < j \leq (l + 1)C \\
    \alpha_U - (\alpha_U - \alpha_L)\frac{j - (l + 1) \cdot C}{C}, &(l + 1)C < j \leq (l + 2)C
  \end{cases}, \quad l = 0, 2, 4, \cdots ,
\end{align*}
where $\alpha_L$ and $\alpha_U$ are the lower and upper bounds of the learning rate and $C$ is half the cycle, all of which are predetermined. Using the above variable learning rates in the cyclical schedule, near optimal learning rates are attained throughout the training, and thus the efficiency is improved.  


The above stochastic gradient descent method with variable learning rates can solve the cross-entropy loss minimization. The optimal solution leads to our trained MODERN-Net. Let $\widehat{\mathcal{V}}$ denote the estimated $\mathcal{V}$ after training. Define
\begin{align*}
  \widehat{p} = \mathscr{F}\left( \bm{X}; \widehat{\mathcal{V}} \right)
\end{align*}
to be the estimated probability of the product being defective given the input image $\bm{X}$. Therefore, $\widehat{p}$ is the key output of MODERN-Net for us to construct the control chart (referred to as MODERN-Chart) for quality monitoring. 

\subsection{Quality Monitoring by MODERN-Chart}
\label{sec:3.2}

Let $\{\bm{X}_t: t \geq 1 \}$ denote the images being collected from a manufacturing process. Since we are concerned about the process becoming OC, a signal should be delivered if $\widehat{p}$ is too high and the process should be stopped for root cause investigation. If $\widehat{p}$ remains low, then the process should keep running. Hence, we are only interested in detecting any upward shift in the sequence $\{\widehat{p}_t = \mathscr{F}(\bm{X}_t; \widehat{\mathcal{V}}): t \geq 1 \}$. To this end, we propose the following recursive charting statistics:
\begin{align}
  \label{eq:1}
  E_t = \max\left\{ 0, \lambda (\widehat{p}_t - \mu_{\texttt{IC}}) + (1 - \lambda) E_{t-1} \right\},
\end{align}
where $E_0 = 0$, $\lambda \in (0, 1)$ is a weighting parameter, and $\mu_{\texttt{IC}}$ is the mean of $\widehat{p}_t$ when the process is IC. Clearly, $E_t$ incorporates the accumulation of the possible OC evidence, which is represented by the upward deviation of $\widehat{p}_t$ from $\mu_{\texttt{IC}}$, over time. In particular, assuming that $\widehat{p}_t - \mu_{\texttt{IC}} \geq 0$ for all $t$, the accumulative OC evidence captured by $E_t$ is an exponentially weighted moving average (EWMA): 
\begin{align*}
  \sum_{s=1}^t \lambda(1 - \lambda)^{t - s}(\widehat{p}_s - \mu_{\texttt{IC}}) . 
\end{align*}
One can verify that the asymptotic variance of this EWMA sequence is $(\lambda \sigma^2_{\texttt{IC}})/(2 - \lambda)$, where $\sigma_{\texttt{IC}}$ is the standard deviation of $\widehat{p}_t$ when the process is IC. Therefore, a signal of upward shift should be sent if
\begin{align}
  \label{eq:2}
  E_t > \rho \sqrt{\frac{\lambda}{2 - \lambda} \sigma^2_{\texttt{IC}}}, 
\end{align}
where $\rho > 0$ is a parameter chosen to achieve a pre-specified IC average run length (ARL$_0$). 

Therefore, our MODERN-Chart is defined as the combination of the charting statistics \eqref{eq:1} and the signal criterion \eqref{eq:2}. It is worth noting that, compared with a conventional EWMA, MODERN-Chart has a desirable feature: The charting statistic $E_t$ is reset to $0$ every time when $\lambda (\widehat{p}_t - \mu_{\texttt{IC}}) + (1 - \lambda) E_{t-1} < 0$. Since there is little evidence suggesting OC in those cases, keeping the negative value in the charting statistics would only reduce the chart's sensitivity to upward shifts. Hence, such a re-starting mechanism helps MODERN-Chart to respond quickly to quality degrading. 

Four parameters need to be determined in order to use MODERN-Chart: $\lambda$,  $\mu_{\texttt{IC}}$, $\sigma^2_{\texttt{IC}}$ and $\rho$. 
The choice of $\lambda$ has been well studied in the literature, and we 
adopt one of the popular choices, $\lambda = 0.1$, throughout our discussion (see \citealt{prabhu1997designing} for a similar setup). 
To estimate $\mu_{\texttt{IC}}$ and $\sigma^2_{\texttt{IC}}$, suppose that the training dataset can be written as $\mathcal{C} = \mathcal{C}_0 \cup \mathcal{C}_1$, where $\mathcal{C}_0 = \{\bm{X}_k^{\texttt{IC}}: k = 1, \cdots, n_{\texttt{IC}} \}$ and $\mathcal{C}_1 = \{\bm{X}_k^{\texttt{OC}}: k = 1, \cdots, n_{\texttt{OC}} \}$ respectively denote the set of IC and OC images. Then, the estimated $\mu_{\texttt{IC}}$ and $\sigma^2_{\texttt{IC}}$ are given by
\begin{align*}
  \widehat{\mu}_{\texttt{IC}} = \frac{1}{n_{\texttt{IC}}}\sum_{k=1}^{n_{\texttt{IC}}}\mathscr{F}\left( \bm{X}_k^{\texttt{IC}}; \widehat{\mathcal{V}} \right), \qquad
  \widehat{\sigma}^2_{\texttt{IC}} = \frac{1}{n_{\texttt{IC}} - 1}\sum_{k=1}^{n_{\texttt{IC}}}\left( \mathscr{F}\left( \bm{X}_k^{\texttt{IC}}; \widehat{\mathcal{V}} \right) - \widehat{\mu}_{\texttt{IC}} \right)^2 .
\end{align*}
The value of $\rho$ is directly related to the signal-triggering criterion and thus should be carefully chosen to achieve the desired IC average run length, ARL$_0$. Unlike the above three parameters, which are relatively straightforward to determine, the control limit is difficult to choose efficiently and properly. The foremost obstacle is that the distribution of the charting statistics is non-stationary, so it lacks analytical formula and is hard to estimate. A common way to estimate ARL$_0$ is to use computationally intensive methods such as Monte Carlo simulation. 

Given a pre-specified ARL$_0$, we propose an algorithm, Augmented Bootstrap (AB), to select the control limit $\rho$. The algorithm is based on the idea of bootstrapping but uses image transformation to expand the set of IC images. The enlarged sample size can facilitate a more effective and efficient estimation of ARL$_0$. Specifically, consider the following operations\footnote{We do not consider other rotational degrees (e.g., 30 degrees) in  the augmentation because they would require image extension to address the boundary problem.} on an image: (1) Rotate the image counter-clockwise by 90 degrees. (2) Rotate the image counter-clockwise by 180 degrees. (3) Rotate the image counter-clockwise by 270 degrees. (4) Flip the image horizontally. (5) Flip the image vertically. (6) Apply no transformation at all. 
Define $\mathcal{T}$ to be the collection of operations (1)-(5) and let $\mathcal{T}_0$ be the set of all six operations above. The selection process of $\rho$ is given by Algorithm 1 (the proposed AB algorithm). 
\begin{algorithm}
\caption{Augmented Bootstrap (AB) Algorithm} 
\begin{algorithmic}[1]
\State Given an ARL$_0$, estimate the range of $\rho\in(\underline{\rho},\bar{\rho})$, and initialize $\rho = (\underline{\rho} + \bar{\rho})/2$.
\While{$|\text{ARL}(\rho)-\text{ARL}_0|>\epsilon$} \Comment{$\epsilon$ is the desired precision}
\For {$r=1,2,\cdots, R$}  \Comment{$R$ is the simulation number}
 \State Set $b=1$.
 \Repeat 
  \State Randomly draw with replacement an IC image, $\bm{X}_b$, from $\mathcal{C}_0$ 
  \State Randomly apply an operation from $\mathcal{T}_0$ to $\bm{X}_b$, resulting in $\tilde{\bm{X}}_b$ 
  \State Compute $E_b = \max\left\{ 0, \lambda \left(\mathscr{F}\left(\tilde{\bm{X}}_b; \widehat{\mathcal{V}}\right) - \widehat{\mu}_{\texttt{IC}}\right) + (1 - \lambda) E_{b-1} \right\}$  \Comment{Recall $E_0=0$}
  \State $b \leftarrow b + 1$ 
 \Until{$E_b \geq \rho \sqrt{(\lambda \widehat{\sigma}^2_{\texttt{IC}}) /(2 - \lambda)}$}
 \State Let $\text{RL}(r) = b - 1$ 
\EndFor
\State Compute $\text{ARL}(\rho) = 1/R \sum_{r=1}^R \text{RL}(r)$. 
\If{$\text{ARL}(\rho) < \text{ARL}_0$}
    \State $\underline{\rho} \leftarrow \rho$ and $\rho \leftarrow (\underline{\rho} + \bar{\rho})/2$ 
\Else
    \State $\bar{\rho} \leftarrow \rho$ and $\rho \leftarrow (\underline{\rho} + \bar{\rho})/2$ 
\EndIf
\EndWhile
\end{algorithmic} 
\end{algorithm}

In the above algorithm, the bootstrap resampling on line 6 ensures the effectiveness of the algorithm in finding the corresponding $\rho$ given an ARL$_0$. Furthermore, the data augmentation on line 7 mitigates the possible issue of limited IC sample size (augmented by a factor 6), enhancing the algorithm efficiency. 
The number of run lengths simulated is governed by the choice of simulation number $R$, which is usually chosen to be a relatively large number. Lastly, we use a bisection search method on lines 15 and 18 for fast convergence because $\text{ARL}(\rho)$ is increasing in $\rho$.

%
%

\subsection{MODERN-Diagnosis: A Faulty Region Estimator}
\label{sec:3.3}

After MODERN-Chart gives a signal, the manufacturing process often needs to stop, and post-signal diagnosis is initiated. The diagnosis is critical for the ensuing repair process, so a quick and accurate isolation of the defective area can greatly reduce the equipment downtime. Motivated by this practical need, we propose a faulty region estimator, named MODERN-Diagnosis, to be used along with MODERN-Chart. The output of MODERN-Diagnosis is an ellipses-shaped mark on the OC image within which the defect is located; see Figure \ref{fig:sample} in Appendix \ref{app:dagm} for an example. The shape of the mark is not important, as we can easily adapt to cases where the faulty region is marked differently, e.g., using rectangular boxes.


Since MODERN-Diagnosis works on the same images that have been monitored by MODERN-Chart, it is built upon the same underlying CNN. Hence, to obtain our faulty region estimator, we re-purpose MODERN-Net $\mathscr{F}$ by making the following two changes: (i) We remove the logistic link from the last layer, and (ii) we reconstruct the last layer such that it outputs five numbers instead of a probability. The five numbers, denoted by $\bm{z} = (z_1, \cdots, z_5)^\prime$ are used to fully describe an ellipse in a plane: the \textit{x}- and \textit{y}-coordinates of the center, the lengths of the major and minor axes, and the angle between the major axis and the \textit{x}-axis. Denote the re-purposed network by $\mathscr{G}(\cdot; \mathcal{U})$ with $\mathcal{U}$ being the collection of learnable parameters in $\mathscr{G}$. Next, we estimate $\mathcal{U}$ by minimizing the following weighted $L_1$ loss: 
\begin{align}
  \label{eq:wtL1}
  \frac{1}{n_{\texttt{OC}}}\sum_{i = 1}^{n_{\texttt{OC}}} L_1\left(\bm{z}_i, \mathscr{G}\left(\bm{X}_i^{\texttt{OC}}; \mathcal{U}  \right) \right) =  \frac{1}{n_{\texttt{OC}}}\sum_{i = 1}^{n_{\texttt{OC}}} \bm{w}^\prime \left|\mathscr{G}\left( \bm{X}_i^{\texttt{OC}}; \mathcal{U} \right) - \bm{z}_i \right|,
\end{align}
where $\bm{z}_i$ denotes the five parameters of the ellipse in $\bm{X}_i^{\texttt{OC}}$, and $\bm{w} = (w_1, \cdots, w_5)^\prime$ are pre-specified weights representing the relative importance of the five parameters. For example, if both the location and the size of the faulty region are important, then an equal-weighted $L_1$ loss where $w_i = 1$ for $i = 1, \cdots, 5$ would be used; however, in some applications, the location of the defective area is more informative during the root cause investigation, and therefore the center coordinates in the loss function would receive more weights. 


Similar to $\mathscr{F}(\cdot; \mathcal{V})$, $\mathscr{G}(\cdot; \mathcal{U})$ is trained using stochastic gradient descent with variable learning rates. Different from the training of  $\mathscr{F}(\cdot; \mathcal{V})$, the training of $\mathscr{G}(\cdot; \mathcal{U})$ only uses the OC image set $\mathcal{C}_1$, which is much smaller than $\mathcal{C}_0$ in most manufacturing settings (defective products are usually in small quantity compared to non-defective ones). To address the issue, we apply the idea of feature reuse from the transfer learning literature (e.g., \citealt{zhuang2020comprehensive}). Specifically, in the stochastic gradient descent algorithm, the parameter values in $\mathcal{U}$ are initialized to be the same as those in $\widehat{\mathcal{V}}$ except for the last layer, which are randomly initialized. In this way, the knowledge learned by $\mathscr{F}(\cdot; \widehat{\mathcal{V}})$ is carried onto $\mathscr{G}(\cdot; \mathcal{U})$. Since there usually are commonalities in the image features used for IC/OC classification and those for fault isolation, via the above transfer learning technique, we can train MODERN-Diagnosis in a more efficient and effective manner. {\color{black}Finally, we remark that our MODERN framework is not limited to a specific CNN architecture. In our discussion, we adopt the Inception-ResNet architecture as it is widely regarded as state-of-the-art in the deep learning literature. However, the proposed monitoring and diagnostic methods can be carried out in the same manner with other CNN architectures. For illustration, refer to Appendix \ref{app:alex} for a comparison study using ``AlexNet''.  }



\subsection{Transfer Monitoring: Extending the Applicability of MODERN Framework}\label{sec:3.4}

Once the networks $\mathscr{F}(\cdot; \widehat{\mathcal{V}})$ and $\mathscr{G}(\cdot; \widehat{\mathcal{U}})$ are trained over image data from a certain manufacturing process, the proposed MODERN framework can be used for that process. However, it is not appropriate to directly apply the trained networks in a different manufacturing process, simply because the images observed in one setting may not be representative in the other and thus the results may be misleading. Moreover, it is often not practically feasible to sufficiently train the networks from scratch due to the lack of enough OC images as most manufacturers have relatively limited OC data. Thus, it would be desirable for managers to know whether and how the networks trained somewhere else can be applied in their own settings. 

In this subsection, we propose a two-step procedure to measure and extend the applicability of MODERN framework trained in a specific setting. The first step is \textit{applicability test}, where the direct applicability of our framework is quantitatively measured by a hypothesis testing approach; moreover, we provide a graphical tool for the ease of applicability assessment. If the applicability test is passed, then the second step is \emph{transfer monitoring}, where the knowledge learned by the pre-trained network is transferred to find the new control limit for the monitoring problem at hand. We describe these two steps in detail below. 

%

Recall that our network $\mathscr{F}(\cdot; \widehat{\mathcal{V}})$ is trained on IC/OC images from sets $\mathcal{C}_0$ and $\mathcal{C}_1$. In a different manufacturing setting, a possibly much smaller set of IC and OC images would typically be available as the training data. Let $\mathcal{S}_0 = \{\bm{S}_1^{\texttt{IC}}, \cdots, \bm{S}^{\texttt{IC}}_{m_0} \}$ be the set of $m_0$ IC images and $\mathcal{S}_1 = \{\bm{S}_1^{\texttt{OC}}, \cdots, \bm{S}^{\texttt{OC}}_{m_1} \}$ be the set of $m_1$ OC images in this new setting. We apply $\mathscr{F}(\cdot; \widehat{\mathcal{V}})$ to $\mathcal{S}_0$ and $\mathcal{S}_1$, and define two sets of outcome probabilities
%
%
\begin{align*}
  \widehat{\mathcal{P}}_0 = \left\{  \mathscr{F}\left( \bm{S}^{\texttt{IC}}_k; \widehat{\mathcal{V}} \right): 1 \leq k \leq  m_0 \right\}, \quad  \widehat{\mathcal{P}}_1 = \left\{  \mathscr{F}\left( \bm{S}^{\texttt{OC}}_k; \widehat{\mathcal{V}} \right): 1 \leq k \leq  m_1 \right\}.
\end{align*}
Then, a two-sample \textit{t}-test is used for our applicability test, with the test statistic being
\begin{align*}
  &t_{\texttt{app}} = \frac{\overline{\mathcal{P}}_1 - \overline{\mathcal{P}}_0}{\sqrt{\widehat{\sigma}^2_{\texttt{app}}\left( \frac{1}{m_0} + \frac{1}{m_1} \right)}}, \; \overline{\mathcal{P}}_0 = \frac{1}{m_0}\sum_{k=1}^{m_0} \mathscr{F}(\bm{S}_k^{\texttt{IC}}; \widehat{\mathcal{V}}), \; \overline{\mathcal{P}}_1 = \frac{1}{m_1}\sum_{k=1}^{m_1} \mathscr{F}(\bm{S}_k^{\texttt{OC}}; \widehat{\mathcal{V}}), \\
  &\widehat{\sigma}^2_{\texttt{app}} = \frac{\sum_{k=1}^{m_0}\left(\mathscr{F}(\bm{S}_k^{\texttt{IC}}; \widehat{\mathcal{V}}) - \overline{\mathcal{P}}_0  \right)^2 + \sum_{k=1}^{m_1}\left(\mathscr{F}(\bm{S}_k^{\texttt{OC}}; \widehat{\mathcal{V}}) - \overline{\mathcal{P}}_1  \right)^2}{m_0 + m_1 -2}.
\end{align*}
If the corresponding \textit{p}-value (based on the $t$ distribution with $m_0 + m_1 - 2$ degrees of freedom) is smaller than a pre-specified threshold (e.g., $0.05$), then we consider $\mathscr{F}(\cdot; \widehat{\mathcal{V}})$ capable of distinguishing images in $\mathcal{S}_1$ from those in $\mathcal{S}_0$ and thus applicable to the new manufacturing process. 
Finally, as for $\mathscr{G}(\cdot; \widehat{\mathcal{U}})$, its applicability can be directly assessed by visualizing $\mathscr{G}(\bm{S}^{\texttt{OC}}_k; \widehat{\mathcal{U}})$ for $\bm{S}^{\texttt{OC}}_k \in \mathcal{S}_1$. If the defective areas cannot be properly located, then more OC images are needed for further training. 

When the applicability test is passed for the network $\mathscr{F}(\cdot; \widehat{\mathcal{V}})$, MODERN-Chart should be re-constructed for the new manufacturing process. Particularly, the control limit should be updated as it may change for different settings even though the underlying network is deemed directly applicable. Indeed, our applicability test only measures the ability of $\mathscr{F}(\cdot; \widehat{\mathcal{V}})$ to separate $\widehat{\mathcal{P}}_1$ from $\widehat{\mathcal{P}}_0$. The statistical distributions of $\{\mathscr{F}(\bm{X}_k^{\texttt{IC}}; \widehat{\mathcal{V}}): \bm{X}_k \in \mathcal{C}_0 \}$ (old setting) and $\{\mathscr{F}(\bm{S}_k^{\texttt{IC}}; \widehat{\mathcal{V}}): \bm{S}_k \in \mathcal{S}_0 \}$ (new setting) may be quite different, rendering the original control limit invalid. To recompute the control limit, we enter the second step, transfer monitoring. Specifically, we replace the original image set $\mathcal{C}_0$ with the new image set $\mathcal{S}_0$ and call our augmented bootstrap algorithm again to find the parameter $\rho$. Notably, with augmented bootstrap, we only need a few hundred images to achieve a commonly used ARL$_0$ values (e.g., 200, 500 or 1000). This is much smaller sample size requirement in comparison with tens of thousands of images typically needed for sufficiently training a deep neural network. Therefore, due to its efficiency and practicality, transfer monitoring substantially extends the applicability of our MODERN framework. 

Finally, we remark that the two-step procedure described above does not depend on the specific original training set. It could be from a real-life manufacturing process, or just a simulated/synthetic data set. In this paper, we trained our networks $\mathscr{F}(\cdot; \widehat{\mathcal{V}})$ and $\mathscr{G}(\cdot; \widehat{\mathcal{U}})$ over a benchmark dataset DAGM (to be introduced in Section \ref{sec:5.1} later), which includes a wide variety of industrial textures and defect types. Through the applicability test and transfer monitoring, we can apply MODERN framework to both simulated and real-life settings; see Section \ref{sec:5} for details.

\section{Asymptotic Properties}\label{sec:4}

In addition to its practical relevance, our proposed MODERN framework also has strong theoretical properties, which in turn offers useful managerial insights for practitioners. {\color{black}In Subsection \ref{sec:4.1}, by adapting some recent theoretical techniques in fully-connected neural networks for convolutional neural networks, we establish the minimax asymptotics for defect likelihood estimation and fault diagnosis. In Subsection \ref{sec:4.2}, we show that the AB algorithm in transfer monitoring achieves the convergence rate of $\sqrt{n}$.}

\subsection{Theoretical Results in Estimation and Diagnostics}\label{sec:4.1}
In this subsection, we show the consistency of the two networks $\mathscr{F}(\cdot; \mathcal{V})$ and $\mathscr{G}(\cdot; \mathcal{U})$, characterize their respective rates of convergence, and discuss the implications of the derived asymptotic properties. First, we show that $\mathscr{F}(\cdot; \mathcal{V})$ is able to estimate the OC likelihood with increasing accuracy (i.e., the estimation error approaches $0$) as the training set grows larger. Consider an image $\bm{X}$ and its binary label $y$. We assume that the conditional distribution $y| \bm{X}$ is Bernoulli with the probability of the image being OC equal to $\mathbb{P}(f^*(\bm{X}))$, where $\mathbb{P}(x) = 1/(1 + \exp(-x))$ is the logistic function. Thus, $f^*$ represents the log-odds of the true OC probability, and our network after training, $\mathcal{F}(\cdot; \widehat{\mathcal{V}})$, is trying to approximate the function $\mathbb{P}(f^*)$. {\color{black}This logistic regression setup ensures that the OC/IC probability is always bounded in $(0, 1)$. Therefore, as the total sample size grows, we accumulate both IC and OC images.} Let $\nu$ denote the distribution of $\bm{X}$, and we assume that there exists a positive constant $B$ such that the support of $\nu$ is included in $[-B, B]^d$, where $d$ is the dimension (i.e., resolution) of $\bm{X}$. Moreover, let $\mathcal{D}$ be the number of layers and $\mathcal{W}$ be the maximal number of filters of all layers in our CNN. 
Then, the following theorem establishes the asymptotic consistency of MODERN-Net and identifies its convergence rate for estimating $f^*$. 

\begin{theorem}
\label{ester}
%
Assume the following conditions hold: (i) $f^{*}({x})$ is Lipschitz continuous; (ii) $\mathcal{D}\mathcal{W} = \mathcal{O}(N^{d/(2d+4)})$; and (iii) $(\mathcal{D}\mathcal{W})^2\log (\mathcal{D}\mathcal{W})< Ne^{-1}$, where $N = n_{\texttt{IC}} + n_{\texttt{OC}}$ is the training sample size and $e$ is Euler's constant. Then we have 
\begin{equation*}
\mathbb{E} \left[\left\| \log\left(\frac{\mathscr{F}(\cdot; \widehat{\mathcal{V}})}{1 - \mathscr{F}(\cdot; \widehat{\mathcal{V}})} \right) - f^*\right\|_{L^2(\nu)}^2 \right] = \mathcal{O} \left( N^{-2/(2+ d)}\log^{3/2}N \right).
\end{equation*}
\end{theorem}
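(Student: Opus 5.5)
Write $\widehat f = \log\{\mathscr{F}(\cdot;\widehat{\mathcal{V}})/(1-\mathscr{F}(\cdot;\widehat{\mathcal{V}}))\}$ for the log-odds output of the trained network, i.e., the pre-logistic output of the final fully connected layer. The argument is the standard excess-risk decomposition for empirical risk minimization with the logistic (cross-entropy) loss. Define the population risk $\mathcal{R}(f) = \mathbb{E}\,[-y\,f(\bm X) + \log(1+e^{f(\bm X)})]$ and its empirical counterpart $\mathcal{R}_N$. I will work with the class $\mathcal{F}_{\mathcal{D},\mathcal{W}}$ of CNN log-odds functions of depth $\mathcal{D}$ and width $\mathcal{W}$, truncated at a level $\pm M$. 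Truncation is harmless since $f^*$ is bounded on $[-B,B]^d$. Since $f^*$ minimizes $\mathcal{R}$ and the logistic loss is strongly convex on $[-M,M]$, there are constants $c_1,c_2$ depending only on $M$ with
\[
c_1\|f-f^*\|_{L^2(\nu)}^2 \le \mathcal{R}(f)-\mathcal{R}(f^*) \le c_2\|f-f^*\|_{L^2(\nu)}^2 .
\]
This is the ``locally quadratic functional'' bound mentioned in the introduction. It reduces the claim to bounding $\mathbb{E}[\mathcal{R}(\widehat f)-\mathcal{R}(f^*)]$. Because $\widehat f$ minimizes $\mathcal{R}_N$, this excess risk splits as
\[
\mathcal{R}(\widehat f)-\mathcal{R}(f^*) \le \underbrace{\inf_{f\in\mathcal{F}_{\mathcal{D},\mathcal{W}}}\{\mathcal{R}(f)-\mathcal{R}(f^*)\}}_{\text{approximation}} + \underbrace{2\sup_{f\in\mathcal{F}_{\mathcal{D},\mathcal{W}}}|\mathcal{R}_N(f)-\mathcal{R}(f)|}_{\text{stochastic}} ,
\]
or, more sharply, a localized version of the stochastic term.

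\textbf{Approximation error.} This is the step I expect to be the main obstacle, since the existing Lipschitz approximation bounds are proved for fully connected ReLU nets (e.g., \citet{shen2019deep}), not for CNNs. I would first invoke the fully connected result: a Lipschitz $f^*$ on $[-B,B]^d$ is approximated in sup norm to accuracy $\mathcal{O}((\mathcal{D}\mathcal{W})^{-2/d})$, up to logarithmic factors, by a ReLU network with that size. I would then show that any fully connected ReLU network can be realized by a CNN of comparable depth and filter count. The idea is that a convolution with a kernel covering the whole input, or a stack of small convolutions followed by the final fully connected layer, reproduces each dense layer, and residual connections with padding preserve identity maps; this incurs at most a constant or logarithmic blow-up in $\mathcal{D}\mathcal{W}$. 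The quadratic upper bound then gives an approximation term of order $(\mathcal{D}\mathcal{W})^{-4/d}$.

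\textbf{Stochastic error and balancing.} For the stochastic term I would use the Lipschitz property of the logistic loss in $f$ together with a covering-number or pseudo-dimension bound for ReLU CNNs. Bartlett et al.-type bounds give pseudo-dimension $\mathcal{O}(\mathcal{S}\mathcal{D}\log\mathcal{S})$ with $\mathcal{S}\lesssim(\mathcal{D}\mathcal{W})^2$ parameters up to depth factors. Condition (iii), $(\mathcal{D}\mathcal{W})^2\log(\mathcal{D}\mathcal{W})<N/e$, is exactly what keeps the entropy integral in its valid regime. Via symmetrization and Dudley's bound, the stochastic term is of order $\sqrt{(\mathcal{D}\mathcal{W})^2\log(\mathcal{D}\mathcal{W})\log N/N}$, or $\mathcal{D}\mathcal{W}\log^{3/2}N/\sqrt N$ in the form giving the stated logarithmic power. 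Substituting $\mathcal{D}\mathcal{W}\asymp N^{d/(2d+4)}$ from condition (ii) makes this $N^{-2/(d+2)}\log^{3/2}N$, and the approximation term becomes $N^{-2/(d+2)}$. Summing the two terms and applying the quadratic lower bound yields the stated $L^2(\nu)$ rate. If the crude uniform bound fails to match the exponent, I would instead use a localized Rademacher or offset-complexity argument, which exploits the quadratic curvature to square the stochastic term.
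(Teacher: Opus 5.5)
Your outline follows the paper's proof essentially step for step. It identifies $f^*$ as the population minimizer and uses the two-sided quadratic bound on the excess cross-entropy risk over the truncated class (the paper's (A6)). It then splits the error into $2\sup_f|\mathcal{R}_N(f)-\mathcal{R}(f)|$ plus approximation error (A7), applies symmetrization with a pseudo-dimension bound of order $(\mathcal{D}\mathcal{W})^2\log(\mathcal{D}\mathcal{W})$, and uses the Shen et al.\ rate $(\mathcal{D}\mathcal{W})^{-2/d}$.

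The step that fails is the final balancing. With $\mathcal{D}\mathcal{W}\asymp N^{d/(2d+4)}$ you get $\mathcal{D}\mathcal{W}/\sqrt{N}=N^{d/(2d+4)-1/2}=N^{-1/(d+2)}$, not $N^{-2/(d+2)}$. So your non-localized stochastic term only gives $\mathcal{O}(N^{-1/(d+2)}\log^{3/2}N)$, and this dominates the approximation term $(\mathcal{D}\mathcal{W})^{-4/d}=N^{-2/(d+2)}$. No tuning rescues it: balancing $\mathcal{D}\mathcal{W}/\sqrt{N}$ against $(\mathcal{D}\mathcal{W})^{-4/d}$ optimally yields only $N^{-2/(d+4)}$.

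The paper's own last display makes the same slip: its ``some algebra'' asserts $\sqrt{(\mathcal{D}\mathcal{W})^2\log(\mathcal{D}\mathcal{W})/N}\,\log N\le CN^{-2/(2+d)}\log^{3/2}N$. Tracking it faithfully therefore reproduces the error. Note that $\log^{3/2}N=\sqrt{\log N}\cdot\log N$ in the statement comes from the square-root bound. The exponent in condition (ii), by contrast, is exactly the one that balances $(\mathcal{D}\mathcal{W})^{-4/d}$ against $(\mathcal{D}\mathcal{W})^2/N$, so it presupposes a fast-rate stochastic term.

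The localization you list as a fallback is therefore the missing idea, and it has to carry the proof. Because the logistic loss $\ell$ is $1$-Lipschitz in $f$, you have $\mathbb{E}[(\ell(f,Z)-\ell(f^*,Z))^2]\le\|f-f^*\|_{L^2(\nu)}^2\le c_1^{-1}\{\mathcal{R}(f)-\mathcal{R}(f^*)\}$. This is a Bernstein condition relative to $f^*$, and the excess loss is bounded by $2M$. A local Rademacher argument for a class of pseudo-dimension $P\le C(\mathcal{D}\mathcal{W})^2\log(\mathcal{D}\mathcal{W})$ then gives the oracle inequality $\mathbb{E}[\mathcal{R}(\widehat f)-\mathcal{R}(f^*)]\le C\inf_f\{\mathcal{R}(f)-\mathcal{R}(f^*)\}+CP\log N/N$, without requiring $f^*$ to lie in the class.

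At $\mathcal{D}\mathcal{W}\asymp N^{d/(2d+4)}$ this bound is $N^{-2/(d+2)}\log^2N$. Taking instead $\mathcal{D}\mathcal{W}\asymp(N/\log^2N)^{d/(2d+4)}$, which (ii) permits, balances both terms at $N^{-2/(d+2)}\log^{4/(d+2)}N$. That is within the stated bound for every $d\ge 1$.

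Finally, your CNN-realization step goes beyond the paper. The paper applies the fully connected results of Shen et al.\ and Bartlett et al.\ directly and never addresses the convolutional structure. If you keep this step, check two things: the depth and width blow-up must depend only on $d$, and the pseudo-dimension bound must be for the CNN class you actually minimize over.
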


Our next result shows that the faulty region estimator $\mathscr{G}(\cdot; \mathcal{U})$ (i.e., MODERN-Diagnosis) is also consistent. Given an OC image $\bm{X}^{\texttt{OC}}\in\mathbb{R}^d$, let its defective area be characterized by an ellipse $\bm{z}\in\mathbb{R}^5$; moreover, suppose the true functional relationship between the image and its faulty region is given by $g^*: \mathbb{R}^d \rightarrow \mathbb{R}^5$. Therefore, our trained network $\mathcal{G}(\cdot; \widehat{\mathcal{U}})$ aims to approximate the function $g^*$. Recall that the network is trained by minimizing the \emph{sample} weighted $L_1$ loss given by \eqref{eq:wtL1}. Let $\mathcal{L}(g^*) = \mathbb{E}\left\{ \bm{w}' |g^*(\bm{X}^{\texttt{OC}}) - \bm{z}| \right\}$ denote the weighted $L_1$ loss at the \emph{population} level. Then, the following theorem gives the convergence rate for both the loss function of our network and the faulty region estimator, establishing the asymptotic consistency.  


%
\begin{theorem}
  \label{esterlad}
  Suppose that $g^{*}({x})$ is Lipschitz continuous. 
  \begin{enumerate}[(i)]
  \item If we assume the conditions in Theorem \ref{ester} with $N$ replaced by $n_{\texttt{OC}}$, then we have
    \begin{equation*}
      \mathbb{E} \left[\left| \mathcal{L}(\mathscr{G}(\cdot; \widehat{\mathcal{U}})) - \mathcal{L}(g^*) \right| \right] = \mathcal{O}\left( n_{\texttt{OC}}^{-1/(2+ d)}\log^{3/2}n_{\texttt{OC}} \right).
    \end{equation*}
  \item If we further assume that there exists $a_1, a_2 > 0$ such that for any $|c| \leq a_1$,
    \begin{equation*}
      \left|F_{\bm{z} \mid \bm{X}^{\texttt{OC}}}\left( g^*(x)+c \right) - F_{\bm{z} \mid \bm{X}^{\texttt{OC}}} \left( g^*(x) \right) \right| \geq a_2|c|,  \text { a.s. },
    \end{equation*}
    where $F_{\bm{z} \mid \bm{X}^{\texttt{OC}}}(\cdot)$ is the cumulative distribution function of $\bm{z}$ given $\bm{X}^{\texttt{OC}}$, then we have
    \begin{align*}
      \mathbb{E} \left[ \left\| \mathscr{G}(\cdot; \widehat{\mathcal{U}}) - g^* \right\|_{L^2(\nu)}^2 \right] = \mathcal{O}\left( n_{\texttt{OC}}^{-1/(2+ d)}\log^{3/2}n_{\texttt{OC}} \right). 
    \end{align*}
  \end{enumerate}
\end{theorem}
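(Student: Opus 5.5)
Part (i) rests on the standard empirical-risk-minimization decomposition for the weighted $L_1$ loss. Write $\widehat g = \mathscr{G}(\cdot;\widehat{\mathcal{U}})$ and let $\mathcal{G}_{\mathcal{D},\mathcal{W}}$ be the class of CNN functions with depth $\mathcal{D}$ and width $\mathcal{W}$, truncated at a level $\mathcal{B}$ that grows at most logarithmically. Let $\mathcal{L}_n$ be the sample loss \eqref{eq:wtL1}. For any $\bar g \in \mathcal{G}_{\mathcal{D},\mathcal{W}}$, the fact that $\widehat g$ minimizes $\mathcal{L}_n$ gives
\begin{equation*}
0 \le \mathcal{L}(\widehat g) - \mathcal{L}(g^*) \le 2\sup_{g\in\mathcal{G}_{\mathcal{D},\mathcal{W}}}\left|\mathcal{L}(g) - \mathcal{L}_n(g)\right| + \left[\mathcal{L}(\bar g) - \mathcal{L}(g^*)\right].
\end{equation*}
The first inequality holds because $g^*$ minimizes the population loss (the conditional median). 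I will bound the two terms separately and then balance them through the choice of $\mathcal{D}\mathcal{W}$. This is the same structure I expect was used for Theorem \ref{ester}, but with a Lipschitz loss in place of the cross-entropy.

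\textbf{Approximation term.} The loss $u\mapsto \bm{w}'|u-\bm{z}|$ is $\|\bm{w}\|_\infty$-Lipschitz, so $\mathcal{L}(\bar g)-\mathcal{L}(g^*) \le \|\bm{w}\|_1 \|\bar g - g^*\|_{L^\infty}$. This term is therefore linear in the approximation error, not quadratic as in Theorem \ref{ester}. I will reuse the CNN approximation result for Lipschitz targets on $[-B,B]^d$ underlying Theorem \ref{ester}: a fully connected ReLU approximant of \citet{shen2019deep} type, realized by a CNN, applied coordinatewise to the five outputs. It gives $\|\bar g-g^*\|_\infty = \mathcal{O}((\mathcal{D}\mathcal{W})^{-2/d})$ up to logarithmic factors.

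\textbf{Stochastic term.} Since the loss is Lipschitz and bounded by $\mathcal{O}(\mathcal{B})$ on the truncated class, symmetrization and contraction bound the supremum by the Rademacher complexity of $\mathcal{G}_{\mathcal{D},\mathcal{W}}$. Through the pseudo-dimension bound $\mathrm{Pdim}\lesssim \mathcal{D}\cdot\#\text{params}\cdot\log(\#\text{params})$ for ReLU networks, this is $\mathcal{O}\big(\mathcal{B}\sqrt{(\mathcal{D}\mathcal{W})^2\log(\mathcal{D}\mathcal{W})\log n_{\texttt{OC}}/n_{\texttt{OC}}}\big)$. Condition (iii) is what keeps this pseudo-dimension/covering argument valid, since it requires $n_{\texttt{OC}}$ to exceed the pseudo-dimension. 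Because the loss is not strongly convex, I do not expect a localized fast rate here; the plain $n^{-1/2}$ rate is what produces the exponent $1/(2+d)$ in place of $2/(2+d)$. With condition (ii) taken with exponent $d/(2d+4)$, both terms become $n_{\texttt{OC}}^{-1/(2+d)}$ up to the stated $\log^{3/2} n_{\texttt{OC}}$ factor. The logarithms come from truncation, covering numbers and the approximation bound. Tracking these exactly to get the power $3/2$, and checking that the (ii) scaling balances the linear approximation term, is bookkeeping I would verify carefully.

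\textbf{Part (ii).} Here I convert excess risk to $L^2$ distance via the conditional CDF condition. For each coordinate $j$, Knight's identity gives
\begin{equation*}
\mathbb{E}\left[|z_j-g_j^*(x)-c|-|z_j-g_j^*(x)| \,\middle|\, x\right] = 2\int_0^{c}\left(F_{z_j|x}(g_j^*(x)+s)-F_{z_j|x}(g_j^*(x))\right)ds,
\end{equation*}
using that the median condition kills the linear term. By the assumption, the integrand is $\ge a_2 |s|$ for $|s|\le a_1$, so the right side is $\ge a_2 c^2$. For $|c|>a_1$, monotonicity gives a linear lower bound $\ge a_2 a_1(2|c|-a_1)\ge a_2 a_1|c|$, which with $|c|\le 2\mathcal{B}$ gives $\gtrsim c^2/\mathcal{B}$. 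Integrating over $\nu$ and summing with positive weights yields
\begin{equation*}
\|\widehat g-g^*\|_{L^2(\nu)}^2 \lesssim \mathcal{B}\left(\mathcal{L}(\widehat g)-\mathcal{L}(g^*)\right),
\end{equation*}
and part (i) then gives the claim, with $\mathcal{B}$ absorbed into the logarithmic factor.

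The main obstacle is the stochastic term, specifically obtaining a complexity bound for the convolutional class in terms of $\mathcal{D}\mathcal{W}$. Parameter sharing should only reduce complexity relative to the fully connected network of comparable size. I would first try to embed the CNN (including residual connections and pooling, which are piecewise linear) into a ReLU network with $\mathcal{O}(\mathcal{D})$ layers and an appropriately bounded width, so that the known pseudo-dimension bounds transfer directly.
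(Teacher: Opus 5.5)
Your proposal follows the paper's proof essentially step for step. It uses the same excess-risk decomposition (Lemma \ref{errdec}), the same symmetrization/contraction plus pseudo-dimension bound for the stochastic term, and the same \citet{shen2019deep} approximation with $\mathcal{D}\mathcal{W} = \mathcal{O}(n_{\texttt{OC}}^{d/(2d+4)})$ balancing a linear approximation error. In part (ii) you reach the same calibration inequality $\|\widehat g-g^*\|_{L^2(\nu)}^2 \le C\,\mathcal{B}\,(\mathcal{L}(\widehat g)-\mathcal{L}(g^*))$, which you actually derive via Knight's identity where the paper only cites \citet{belloni2011L1} and \citet{madrid2022risk}. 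The only adjustment is to fix $\mathcal{B}$ as a constant rather than letting it grow logarithmically; the paper takes $\mathcal{B}=2\|g^*\|_{\infty}$, which is available since $g^*$ is Lipschitz on a compact set. Otherwise the extra factor $\mathcal{B}$ in part (ii) multiplies the $\mathcal{B}$ already present in the stochastic term and can push the logarithmic power above $3/2$.
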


\textbf{Theoretical implications.} The convergence rate $\mathcal{O}({N}^{-2/(2+d)})$ established in Theorem \ref{ester} achieves the minimax (robust) optimal rate up to a log factor in nonparametric estimation of $d$-dimensional Lipschitz targets \citep{korostelev2012minimax}. It means that our MODERN-Net makes quite efficient use of the training images. Notably, our neural network, constructed as a nonlinear approximation to the target $f^*$, enjoys the same optimal convergence rate as those linearizing methods (e.g., splines) in the conventional setting. In Theorem \ref{esterlad}, the convergence rate of $\mathcal{G}$ under the functional $\mathcal{L}$ is of interest, because the parameter $\widehat{\mathcal{U}}$ is obtained by $L_1$ loss minimization. Note that the weighted $L_1$ loss function is used because of its numerical robustness to outliers. 
%
%
The additional assumption in Theorem \ref{esterlad}(ii) is a mild condition. It indicates that there exists a neighborhood around $g^*(x)$ in which the perturbation of the conditional probability distribution function $F_{\bm{z}|\bm{X}^{\texttt{OC}}}$ is larger than the fluctuation of  $g^*(x)$. Similar calibration  assumptions are common in the statistics literature (\citealt{belloni2011L1, madrid2022risk}). 

\textbf{Practical implications - part 1.} The results from the above theorems show that our estimation errors approach $0$ as the sample size grows, which confirms the benefits from acquiring additional training images. Notably, the convergence rates identified in both theorems become slower as $d$ (i.e, the dimension of the images, which is equivalent to the image's resolution) increases. 
In the context of image data analysis, intuition may suggest that images with higher resolution are always preferred and thus more advanced imaging devices are always beneficial. However, a meaningful and interesting practical implication of our results is that upgrading the monitoring equipment (e.g., cameras), which is usually for generating higher-resolution images, may not help enhance the accuracy of our MODERN framework. This counter-intuitive finding can be explained by the nature of deep-learning models in computer vision: with higher resolution, there are a greater number of image patterns or features for the model to sift through, making it more challenging to uncover the patterns associated with IC/OC classification. {\color{black}In practice, does this mean that manufacturers should never upgrade their equipment? Moreover, should they even downgrade their devices to get images of low resolutions? The answer is indeed yes according to Theorem \ref{ester} and Theorem \ref{esterlad}. We note, however, an important question arises in such a scenario concerning whether we can still tell the OC images apart from those IC images, with each image having only a few pixels. That is, are we still able to label the images correctly? Motivated by this question, we analyze the impact of mislabeling on MODERN-Net's performance. }

{\color{black}
Suppose that the true binary label $y$ of an image $\bm{X}$ may be contaminated. Denote the observed but contaminated version by $\tilde{y}$. Formally, we have 
 $$
\tilde{y}=
\left\{
  \begin{array}{ll}
    y, & \mathrm{with \quad probablity} \quad 1-\delta; \\
    1-y, & \mathrm{with \quad probability} \quad \delta.
  \end{array}
\right.
$$
As a result, the training data are $\{(\bm{X}_i, \tilde{y}_i): 1 \leq i \leq N \}$, and our neural network $\tilde{\mathcal{F}}(\cdot;\hat{\mathcal{V}})$ is obtained by minimizing the contaminated cross-entropy loss
 \begin{align*}
 \frac{1}{N}\sum_{i=1}^N \tilde{L}\left( \tilde{y}_i, \mathscr{F}(\bm{X}_i; \mathcal{V}) \right) = -\frac{1}{N}\sum_{i=1}^N \Big\{ \tilde{y}_i\log\left( \mathscr{F}(\bm{X}_i; \mathcal{V})\right) + (1 - \tilde{y}_i) \log\left( 1 - \mathscr{F}(\bm{X}_i; \mathcal{V}) \right) \Big\}.
\end{align*}
Our next theorem quantifies the impact from mislabeling. 
\begin{theorem}
\label{esternoisy}
%
Assume that the conditions in Theorem \ref{ester} hold. We have 
\begin{equation*}
\mathbb{E} \left[\left\| \log\left(\frac{\tilde{\mathscr{F}}(\cdot; \widehat{\mathcal{V}})}{1 - \tilde{\mathscr{F}}(\cdot; \widehat{\mathcal{V}})} \right) - f^*\right\|_{L^2(\nu)}^2 \right] = \mathcal{O} \left( N^{-2/(2+ d)}\log^{3/2}N \right)+\mathcal{O}\left(\delta\right).
\end{equation*}
\end{theorem}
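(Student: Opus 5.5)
The plan is to reduce Theorem \ref{esternoisy} to Theorem \ref{ester} by identifying what the contaminated loss actually estimates, and then controlling the discrepancy between that target and $f^*$.

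Under the flipping model, $\tilde y \mid \bm{X}$ is Bernoulli with success probability
\begin{align*}
\tilde p(\bm{X}) = (1-\delta)\,\mathbb{P}(f^*(\bm{X})) + \delta\,\bigl(1-\mathbb{P}(f^*(\bm{X}))\bigr) = \delta + (1-2\delta)\,\mathbb{P}(f^*(\bm{X})).
\end{align*}
The population minimizer of the contaminated cross-entropy is therefore $\tilde f = \log\bigl(\tilde p/(1-\tilde p)\bigr)$. The triangle inequality gives
\begin{align*}
\mathbb{E}\bigl\|\widehat{\tilde f} - f^*\bigr\|_{L^2(\nu)}^2 \le 2\,\mathbb{E}\bigl\|\widehat{\tilde f} - \tilde f\bigr\|_{L^2(\nu)}^2 + 2\,\bigl\|\tilde f - f^*\bigr\|_{L^2(\nu)}^2 ,
\end{align*}
where $\widehat{\tilde f}$ denotes the log-odds of $\tilde{\mathscr{F}}(\cdot;\widehat{\mathcal{V}})$. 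The two terms are handled separately.

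\textbf{Estimation term.} For the first term, I would rerun the argument behind Theorem \ref{ester} on the data $\{(\bm{X}_i,\tilde y_i)\}$, which are i.i.d.\ with a logistic model whose true log-odds is $\tilde f$. That argument needs two ingredients.
\begin{enumerate}[(i)]
\item $\tilde f$ must be Lipschitz. Since $f^*$ is Lipschitz on the compact set $[-B,B]^d$, it is bounded, say $|f^*|\le M$. Hence $\mathbb{P}(f^*)\in[\epsilon_0,1-\epsilon_0]$ for some $\epsilon_0>0$. Then $\tilde p$ stays in a compact subinterval of $(0,1)$ uniformly in $\delta\in[0,1/2)$, on which $\log(p/(1-p))$ is Lipschitz. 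So $\tilde f$ is a composition of Lipschitz maps, with constant at most $(1-2\delta)$ times a bounded factor, and it is bounded by $M$.
\item The truncation level and the network approximation constants in Theorem \ref{ester} depend only on the Lipschitz constant and the sup-bound, and both are uniform in $\delta$.
\end{enumerate}
Under conditions (ii)–(iii) of Theorem \ref{ester}, the first term is then $\mathcal{O}(N^{-2/(2+d)}\log^{3/2}N)$ with a constant not depending on $\delta$.

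\textbf{Bias term.} For the second term, the mean value theorem applied to the logit on the interval containing $\tilde p$ and $\mathbb{P}(f^*)$ gives
\begin{align*}
|\tilde f - f^*| \le C\,\bigl|\tilde p - \mathbb{P}(f^*)\bigr| = C\,\delta\,\bigl|1 - 2\,\mathbb{P}(f^*)\bigr| \le C\delta .
\end{align*}
Squaring yields $\mathcal{O}(\delta^2)\le\mathcal{O}(\delta)$, which is the stated bound.

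An alternative route for the bias is to compare excess risks. Note that $\tilde L(\tilde y, \cdot)$ has conditional expectation equal to $(1-2\delta)$ times the clean risk plus a term linear in $\delta$. By the local quadratic equivalence of the logistic excess risk and the $L^2$ distance (used in Theorem \ref{ester}), the bias would again contribute $\mathcal{O}(\delta)$. I would keep this route as a backup in case a direct pointwise bound is awkward.

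\textbf{Main obstacle.} The hard step is the uniformity claim in the estimation term. The proof of Theorem \ref{ester} presumably uses the local quadratic bound of the logistic loss, whose curvature constant depends on how far the true probability stays from $0$ and $1$. I would first verify that this curvature constant, the truncation threshold on network outputs, and the approximation error for Lipschitz targets all depend only on $(M,\text{Lip}(f^*))$. If the paper's oracle inequality instead requires the truncation level to match $\|f^*\|_\infty$, I would note that $\|\tilde f\|_\infty\le\|f^*\|_\infty$. The same network class and truncation then work for $\tilde f$, and the result follows.
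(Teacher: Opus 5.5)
Your proof is correct, but it takes a different route from the paper's. The paper never introduces the noisy log-odds. It writes $\tilde{\mathcal{L}}(D)=\mathcal{L}(D)+\delta\,\mathbb{E}_X\bigl[\log\bigl\{(1+e^{-D(X)})/(1+e^{D(X)})\bigr\}\bigl(1-2\mathbb{P}(f^*(X))\bigr)\bigr]$, where the logarithm is just $-D(X)$. Hence $|\tilde{\mathcal{L}}(D)-\mathcal{L}(D)|=\mathcal{O}(\delta)$ uniformly over the bounded class $\mathcal{N}_{\mathcal{D},\mathcal{W},\mathcal{B}}$. The paper then uses the machinery of Theorem~\ref{ester} to bound the contaminated excess risk $\tilde{\mathcal{L}}(\tilde{\widehat{D}})-\tilde{\mathcal{L}}(D^*)$. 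It pays $\mathcal{O}(\delta)$ twice to return to the clean excess risk and finishes with the quadratic lower bound in \eqref{A6}. That is essentially your backup route.

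Your main route is a bias--variance split around $\tilde f=\log\{\tilde p/(1-\tilde p)\}$, which requires checking that $\tilde f$ satisfies the hypotheses of Theorem~\ref{ester} uniformly in $\delta$. Writing $\tilde p=(1-2\delta)\,\mathbb{P}(f^*)+2\delta\cdot\tfrac12$ as a convex combination of $\mathbb{P}(f^*)$ and $1/2$ does this cleanly. It gives $\|\tilde f\|_\infty\le\|f^*\|_\infty$ and in fact $\mathrm{Lip}(\tilde f)\le(1-2\delta)\,\mathrm{Lip}(f^*)$. So the truncation level, the constants in \eqref{A6}, and the approximation bound of Lemma~\ref{shen} carry over unchanged.

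Your route buys a sharper conclusion: the contamination enters as $\mathcal{O}(\delta^2)$ rather than $\mathcal{O}(\delta)$. The risk-comparison argument turns an $\mathcal{O}(\delta)$ perturbation of the risk into an $\mathcal{O}(\delta)$ perturbation of the squared $L^2$ error. That loses a square root relative to the true pointwise bias $|\tilde f-f^*|\le C\delta$. Your route also makes explicit that the contaminated estimator is consistent for $\tilde f$ rather than $f^*$. The paper's route, in exchange, uses nothing about the noisy conditional probability beyond boundedness of the network class. It would therefore survive contamination mechanisms under which the noisy log-odds is not Lipschitz.

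Your restriction to $\delta<1/2$ is harmless. For $\delta$ bounded away from zero, the left-hand side is trivially $\mathcal{O}(1)=\mathcal{O}(\delta)$, since both the networks and $f^*$ are bounded.
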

}

{\color{black}\textbf{Practical implications - part 2}. Theorem \ref{esternoisy} suggests that, if the mislabeling rate exceeds $N^{-2/(2+ d)}\log^{3/2}N$, it starts to degrade the performance of the neural network. On the other hand, if the mislabeling rate is below or at the order of $N^{-2/(2+ d)}\log^{3/2}N$, the asymptotic performance of the neural network would be intact. Now based on Theorem \ref{ester} -- \ref{esternoisy}, our practical guidelines for manufacturers are as follows. It would be beneficial for them to work with low-resolution images by not upgrading or even downgrading their imaging devices, provided that the images are labeled at a high-fidelity level, with the asymptotic bound set at $N^{-2/(2+ d)}\log^{3/2}N$, for both IC/OC classification and faulty region isolation. In the case where existing cameras cannot render satisfactory labeling accuracy (i.e., the mislabeling rate exceeds the asymptotic bound), it becomes necessary to collect higher-resolution images to allow precise labeling. Investments in equipment upgrades are then justified. }

{\color{black}
\subsection{Theoretical Results in Transfer Monitoring}
\label{sec:4.2}
Our primary result in transfer monitoring is for quantifying the sample size reduction. Once it is determined that our trained network is applicable in the new setting, we only need to utilize the IC image set with our AB algorithm for searching a new control limit. In other words, the size of the IC set is tied to the efficiency of the AB algorithm. Our next theorem establishes the convergence rate of the algorithm in terms of the IC sample size. 

Before formally stating the theorem, let us introduce some notations. For any given $\rho$, let $\mathrm{ARL}(\rho)$ and $\widehat{\mathrm{ARL}}(\rho)$ denote the true $\mathrm{ARL}$ on the population level  and the $\mathrm{ARL}$ determined by the AB algorithm, respectively. Let $F_0$ denote the cumulative distribution function (CDF) of $\log\left(\mathscr{F}(\bm{X}; \mathcal{V})/(1 - \mathscr{F}(\bm{X}; \mathcal{V}) \right)$ when $\bm{X}$ is IC. Let $\mathscr{S}$ denote the collection of finite-variation functions that satisfy the run length regularity condition \eqref{eq:transfer-condition} in the appendix. 
\begin{theorem}
    \label{thm:transfer}
    Assume that $\mathscr{S}$ contains an open neighborhood of $F_0$ and the origin where the distance is defined by the $L^\infty$-norm. Then we have 
    \begin{align*}
        \left| \widehat{\mathrm{ARL}}(\rho) - \mathrm{ARL}(\rho) \right| = \mathcal{O}_p\left( \frac{1}{n^{1/2}_{\texttt{IC}}}\right), 
    \end{align*}
    where $\mathcal{O}_p(\cdot)$ denotes the convergence in probability.
\end{theorem}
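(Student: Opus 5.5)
We will view the run-length average as a functional of the in-control distribution and apply a functional delta method. Write $\mathrm{ARL}(\rho)=\Psi(F_0)$, where $\Psi(F)$ is the expected run length of the reflected EWMA recursion \eqref{eq:1} with threshold \eqref{eq:2}, driven by i.i.d.\ inputs whose logit has CDF $F$. The AB algorithm draws images i.i.d.\ from $\mathcal{C}_0$, possibly after a random operation from $\mathcal{T}_0$. Hence its (ideal, $R\to\infty$) output is $\widehat{\mathrm{ARL}}(\rho)=\Psi(\widehat F_0)$, with $\widehat F_0$ the empirical CDF of the $6n_{\texttt{IC}}$ augmented logits. Under the IC distribution, which we take to be invariant under the operations in $\mathcal{T}_0$, this is a mixture of six empirical CDFs, each centred at $F_0$. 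The bisection step and the Monte Carlo error in $R$ are treated separately: the latter is $\mathcal{O}_p(R^{-1/2})$ and is negligible once $R\gtrsim n_{\texttt{IC}}$, so it can be ignored. The estimated $\widehat\mu_{\texttt{IC}},\widehat\sigma_{\texttt{IC}}$ are themselves $\sqrt{n_{\texttt{IC}}}$-consistent smooth functionals of $\widehat F_0$, so we absorb them into $\Psi$.

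\textbf{Step 1: empirical process rate.} By the Dvoretzky--Kiefer--Wolfowitz inequality applied to each of the six components and a union bound, we get $\|\widehat F_0-F_0\|_\infty=\mathcal{O}_p(n_{\texttt{IC}}^{-1/2})$. The dependence across the six components does not matter for sup-norm bounds. Consequently, with probability tending to one, $\widehat F_0$ lies in the open $L^\infty$-neighbourhood of $F_0$ contained in $\mathscr{S}$. The assumption that this neighbourhood also contains the origin is what lets us use segments $F_0+t(\widehat F_0-F_0)$ and keep them inside $\mathscr{S}$.

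\textbf{Step 2: Lipschitz/Hadamard regularity of $\Psi$ on $\mathscr{S}$.} This is the main obstacle. We would represent the run length through the Markov chain $E_t$ on $[0,h]$, with $h=\rho\sqrt{\lambda\sigma^2/(2-\lambda)}$. Its transition kernel is
\[
P_F(e,[0,u])=F\bigl(\mathrm{logit}((u-(1-\lambda)e)/\lambda+\mu)\bigr)
\]
for $u<h$, with the atom at $0$ given by the same formula at $u=0$. Then $\Psi(F)=\mathbf{1}^\prime(I-Q_F)^{-1}\delta_0$, where $Q_F$ is the substochastic kernel restricted to the continuation region; this is Fredholm's integral equation for the ARL. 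The run length regularity condition \eqref{eq:transfer-condition} should give $\|Q_F\|<1$, or a geometric tail of the run length, uniformly on $\mathscr{S}$. We would then use the resolvent identity
\[
(I-Q_G)^{-1}-(I-Q_F)^{-1}=(I-Q_G)^{-1}(Q_G-Q_F)(I-Q_F)^{-1}
\]
together with the bound $\|Q_G-Q_F\|\le C\|G-F\|_{TV\text{ or }\infty}$. Kernels built from CDFs can be integrated by parts against test functions of finite variation, which is why $\mathscr{S}$ consists of finite-variation functions. This yields $|\Psi(G)-\Psi(F)|\le C\|G-F\|_\infty$ on $\mathscr{S}$. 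The delicate part is controlling the operator norm in sup-norm rather than total variation: the jump at the reflection point $0$ and at the threshold $h$ need care. If this fails, we would instead work with the geometric bound on $\mathbb{P}(\mathrm{RL}>k)$ and bound each $k$-step probability difference by $k\|G-F\|_\infty$ via a telescoping/coupling argument, then sum $\sum_k k\,q^{k}<\infty$.

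\textbf{Step 3: conclusion.} Combining Steps 1--2 gives
\[
|\widehat{\mathrm{ARL}}(\rho)-\mathrm{ARL}(\rho)|\le C\|\widehat F_0-F_0\|_\infty+\mathcal{O}_p(R^{-1/2})=\mathcal{O}_p(n_{\texttt{IC}}^{-1/2}).
\]
Optionally, Hadamard differentiability would strengthen this to asymptotic normality via the functional delta method, but the stated $\mathcal{O}_p$ rate needs only local Lipschitz continuity.
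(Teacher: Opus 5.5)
\paragraph{Verdict: genuine gap in Step 2.}
Your outer structure matches the paper's: you treat $\mathrm{ARL}(\rho)$ as a functional of the IC distribution, prove a sup-norm Lipschitz bound for it, and finish with Dvoretzky--Kiefer--Wolfowitz. But Step 2, the only nontrivial step, is not established, and the tools you propose for it do not work as stated.

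\emph{The resolvent route.} As an operator norm on bounded functions, $\|Q_G-Q_F\|=\sup_e|Q_G-Q_F|(e,\cdot)$ is a total-variation quantity. The version with $\|G-F\|_{TV}$ is true but useless here: the empirical CDF is at total-variation distance $1$ from a continuous $F_0$, so in general $\|Q_{\widehat F_0}-Q_{F_0}\|\not\to 0$. The version with $\|G-F\|_\infty$ fails. The resolvent identity therefore only helps if $Q_G-Q_F$ is applied to a function whose variation you control, and you never show that.

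\emph{The fallback.} It has the same hole: bounding a telescoping term $\int dF\cdots d(G-F)\cdots dG$ by $\|G-F\|_\infty$ again needs bounded variation of the integrand in the perturbed coordinate. Moreover, from $|P_G(\mathrm{RL}>k)-P_F(\mathrm{RL}>k)|\le k\varepsilon$ (with $\varepsilon=\|G-F\|_\infty$) plus geometric tails, you only get $\sum_k\min\{k\varepsilon,Cq^k\}\asymp\varepsilon\log^2(1/\varepsilon)$. That is $\mathcal{O}_p(n_{\texttt{IC}}^{-1/2}\log^2 n_{\texttt{IC}})$, not the claimed rate; to sum against $kq^k$, each mixed term must itself carry the decay factor.

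\emph{The hypothesis.} You also misread it: \eqref{eq:transfer-condition} does not yield $\|Q_F\|<1$ or geometric tails. With all $H_j=F$ it only says $P_F(\mathrm{RL}=t)=\mathcal{O}(t^{-3-\xi})$. What it does supply is exactly the missing sup-norm control of the mixed terms.

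\paragraph{How the paper closes this step.}
The paper uses the condition directly. Set $\mathscr{M}(F)=\sum_t t\int_{A_t}\prod_j dF(x_j)=\mathrm{ARL}(\rho)$ and $F_\gamma=F_0+\gamma(F_{n_{\texttt{IC}}}-F_0)$. The paper writes $\mathscr{M}(F_{n_{\texttt{IC}}})-\mathscr{M}(F_0)=\int_0^1\mathcal{D}_g\mathscr{M}(F_\gamma;F_{n_{\texttt{IC}}}-F_0)\,d\gamma$. The Gateaux differential is $\sum_t t\,\eta_t$, where $\eta_t$ is a sum of $t$ integrals over $A_t$ with one factor $d(F_{n_{\texttt{IC}}}-F_0)$ and the rest $dF_\gamma$.

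The neighbourhood hypothesis puts $F_\gamma$ (near $F_0$) and the direction $F_{n_{\texttt{IC}}}-F_0$ (near the origin) in $\mathscr{S}$. That is the role of the origin, not keeping the segment inside $\mathscr{S}$. So \eqref{eq:transfer-condition} together with $\|F_\gamma\|_\infty\le 1$ gives $|\eta_t|\le C\|F_{n_{\texttt{IC}}}-F_0\|_\infty t^{-2-\xi}$. Hence $|\widehat{\mathrm{ARL}}(\rho)-\mathrm{ARL}(\rho)|\le C\|F_{n_{\texttt{IC}}}-F_0\|_\infty\sum_t t^{-1-\xi}$, and DKW finishes.

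\paragraph{Repairing your own route.}
If you want to keep the Markov-chain argument, the missing idea is monotonicity. The reflected EWMA is nondecreasing in both the previous state and the current input. Hence $L_F(e)=\mathbb{E}_F[\mathrm{RL}\mid E_0=e]$ is nonincreasing in $e$. Let $s_e(x)$ be the next state from $e$ under input $x$. Then the integrand $x\mapsto\mathbf{1}\{s_e(x)\le h\}L_F(s_e(x))$ appearing in $(Q_G-Q_F)L_F(e)$ is monotone and bounded by $L_F(0)$. A layer-cake argument then gives $\|(Q_G-Q_F)L_F\|_\infty\le\Psi(F)\|G-F\|_\infty$. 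The resolvent identity plus positivity yield $|\Psi(G)-\Psi(F)|\le\Psi(G)\Psi(F)\|G-F\|_\infty$.

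Here $\Psi(G)$ stays bounded for $G$ sup-norm close to $F_0$. A finite ARL forces $F_0$ to charge inputs beyond the control limit, and that property survives small sup-norm perturbations. This would be a genuinely different proof that needs no assumption like \eqref{eq:transfer-condition}.

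\paragraph{A secondary point.}
Absorbing $\widehat\mu_{\texttt{IC}},\widehat\sigma_{\texttt{IC}}$ into $\Psi$ makes the stopping region depend on $F$. Step 2 would then also need smoothness of the ARL in the threshold and centering under $F_0$. The paper sidesteps this by holding $A_t$ fixed.
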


\textbf{Remark.} It can be seen that the AB algorithm achieves the typical convergence rate of $n^{-1/2}_{\texttt{IC}}$, which is much faster than $N^{-2/(2 + d)}\log^{3/2}N$, the convergence rate in Theorem \ref{ester}, given that $d$ is often in the range of tens of thousands for images. Indeed, our transfer monitoring technique estimates a single number instead of a high-dimensional distribution, allowing significant sample size reduction. Theorem \ref{thm:transfer} provides the theoretical basis for this useful property. 
}


\section{Numerical Studies}
\label{sec:5}

In this section, we conduct a series of numerical studies to demonstrate the performance of the proposed MODERN framework in different settings and data environments. {\color{black}First, we conduct numerical experiments regarding the impact of image resolution on MODERN-Net's performance.} Second, we evaluate MODERN-Chart and MODERN-Diagnosis using a benchmark dataset called DAGM. Then, we compare our method with a state-of-the-art approach in simulated experiments. Finally, we apply the proposed method in {\color{black}two manufacturing settings (commutator manufacturing and leather manufacturing)} and test it on the real-life data. 

{\color{black}
\subsection{The Impact of Image Resolution}
\label{sec:5.0}

Our theoretical results in Subsection \ref{sec:4.1} suggest that increasing image resolutions can be detrimental to our neural network's performance in the asymptotic setting. In this subsection, we analyze the impact of image resolution with finite samples. We first simulate both IC and OC images in the low-resolution ($128\times 128$), medium-resolution ($256\times 256$) and high-resolution ($512\times 512$) scenarios. Then we train our MODERN-Net and evaluate its classification accuracy in the three scenarios, respectively. Specifically, all IC image pixel intensities are generated from I.I.D. standard normal distributions. The OC images are generated in the same way except that the intensity level is shifted to a fixed value for the five pixels in the center of the image. Figure \ref{fig:resolution} shows an IC and OC image in our experiment. 

\begin{figure}[htp]
  \centering
  \caption{Simulated IC (left) and OC (right) image}
  \label{fig:resolution} 
  \begin{tabular}{cc}
    \hspace{-0.75cm}\includegraphics[width = 0.25\textwidth]{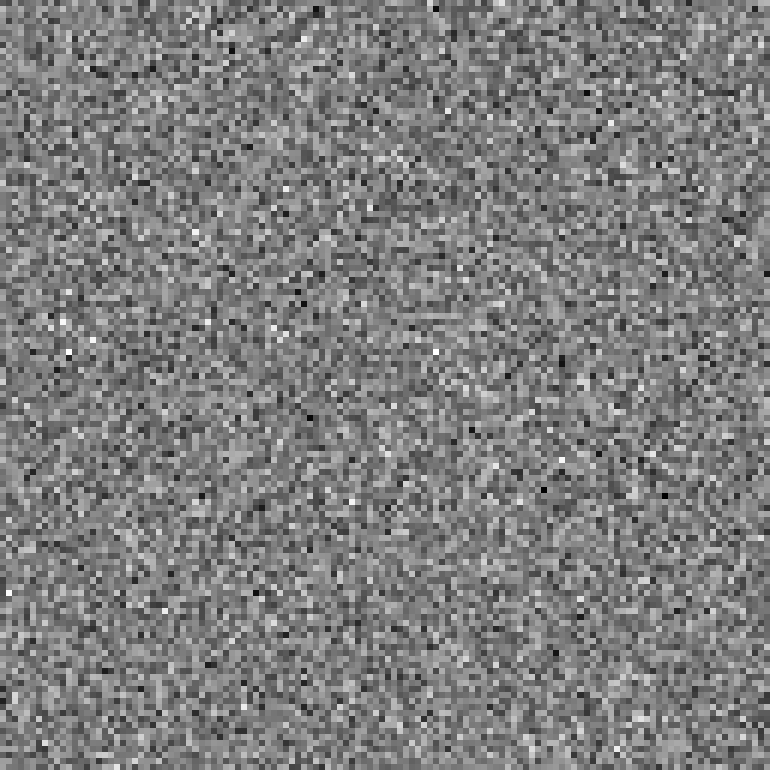} &
    \includegraphics[width = 0.25\textwidth]{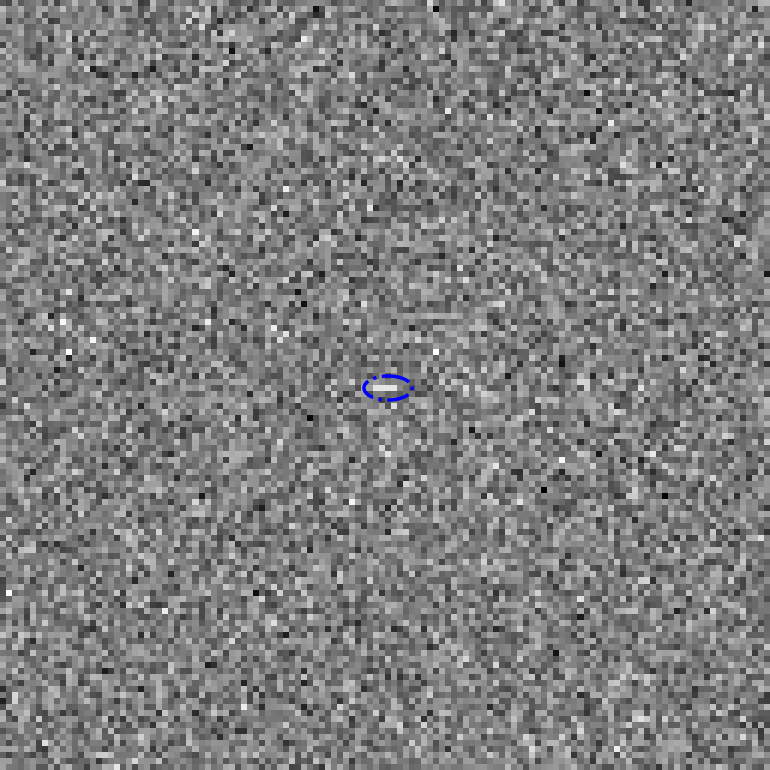} 
  \end{tabular}
  \fignote{The blue circle in the OC image indicates the pixel locations where the intensity levels have been shifted.}
\end{figure}

For each resolution level, we generate 100 images for training, validation and testing, respectively, with 50\% of all the images being IC and others being OC. The classification accuracy is defined to be the proportion of correctly classified images in the testing set. The shift size of 1, 2 and 3 are considered. The simulation results are summarized in Table \ref{tab:resolution}. It can be seen from the table that, as the shift size grows (i.e., the OC signal strengthens), the neural network in the low-resolution scenario is able to detect the signal quite well. However, this is not the case in the high-resolution scenario, consistent with our theoretical findings. 

\begin{table}[ht]
  \centering
  \caption{The impact of image resolution on classification accuracy.}
  \label{tab:resolution}
  \begin{tabular}{cccc}
    \hline
    Shift size& Low  & Medium & High \\ \hline
      1        & 0.46 & 0.47   & 0.47 \\
      2        & 0.59 & 0.51   & 0.46 \\
      3        & 1.00 & 0.96   & 0.46 \\
      \hline
  \end{tabular}  
  \vspace{2pt}
   \fignote{Entries in the table are the proportion of correctly classified images.}
\end{table}

}

\subsection{Performance Evaluation Using DAGM}
\label{sec:5.1}

\subsubsection*{DAGM: A Benchmark Dataset.} 
DAGM is a dataset created for the industrial image processing competition at the 29th Annual Symposium of the German Association for Pattern Recognition. After the competition, it has been used as a benchmark dataset in academic research (e.g., \citealt{weimer2016design, wang2018fast, zhang2023prototypical}). The images are artificially generated by a number of different texture and defect models. Although synthetic, the images are quite similar to real world textures of manufactured materials. The dataset consists of 10 classes of textured surfaces, encompassing a good range of industrial applications. In classes 1 through 6, each contains 1000 IC images and 150 OC images. In classes 7 through 10, each contains 2000 IC images and 300 OC images. 
The defect in each OC image is labeled by a surrounding ellipse. 
Each image has $512\times 512$ pixels and the dataset is publicly available from the Heidelberg Laboratory for Image Processing at the University of Heidelberg (\url{https://hci.iwr.uni-heidelberg.de/content/weakly-supervised-learning-industrial-optical-inspection}). 
We provide a sample of DAGM images in Appendix \ref{app:num} to show what the images look like. From the sample, we can observe that some surface defects are hard for human eyes to discern; moreover, all these background textures contain certain random patterns (i.e., no gold standard), and some of them clearly do not meet the locality and stationarity assumptions of the Markov Field approach. Therefore, this dataset serves as a perfect benchmark to evaluate the performance of our proposed method for quality monitoring and diagnosis.

\subsubsection*{Numerical Setup and Results.}
We divide DAGM dataset into training (60\%), validation (25\%), and testing (15\%) sets. Our model is calibrated using the training and validation sets. Its performance is examined using the testing set. 
First, we augment the training set to enlarge the data size. Recall that $\mathcal{T}$ is the set of five image transformations defined in Section \ref{sec:3.2}. For each OC image in the training set, apply all five transformations in $\mathcal{T}$, whereas for each IC image in the training set, apply randomly one of the transformations in $\mathcal{T}$. 
This results in 16,850 IC images and 7410 OC images in the augmented training set. 
Next, we train MODERN-Net $\mathscr{F}(\cdot; \mathcal{V})$ with the following implementation details. The cross-entropy loss is minimized using stochastic gradient descent with batch size $N_k = 16$. The variable learning rate schedule is implemented with $\alpha_L = 0.001$, $\alpha_U = 0.01$, and $C = 2000$. The validation set is used for determining when to terminate the iterations in the stochastic gradient descent algorithm. Stopping the iterations too early results in underfitting while letting the algorithm run for too long causes overfitting. Three measures are calculated on the validation set as the number of epochs increases: the proportions of correct classification of OC images (sensitivity), IC images (specificity) and the average of the two. 
We observe from the training progression that the validation performance stabilizes after 120 epochs, so we stopped the training process at the 150-th epoch; see details in Appendix \ref{app:dagm}. 
Lastly, we examine the monitoring performance of our control chart using the testing set. Let $t$ be the time index and $t=1$ denotes the beginning of the monitoring. For each class of DAGM, we randomly draw with replacement from the testing set an IC image for $t = 1, \cdots, 20$ and OC image for $t \geq 21$. After an image is drawn, randomly apply one transformation in $\mathcal{T}_0$ to the image and compute the charting statistic by \eqref{eq:1} and \eqref{eq:2}. This process continues until the control chart delivers a signal. Therefore, $t=21$ is the change point when the process becomes OC and the time point of the signal is the control chart's OC run length.  We repeat this process 100 times and report the OC average run length (ARL$_1$) in Table \ref{tab:1}. ARL$_0$ is set equal to 1,000 and the control limit is found to be $0.0275$ by the AB algorithm\footnote{ {\color{black} Our numerical experiment can be done similarly given a different ARL$_0$ value. For instance, a smaller value (e.g., 500) would lead to a lower control limit, rendering that the control chart signals an alarm every 500 observations (instead of 1,000) while the process is IC. For simplicity, here we only report the results for ARL$_0=$ 1,000.}}. We also report in Table \ref{tab:1} the proportions of runs our chart signals before and exactly at the change point (the column ``Prop.Early'' and ``Prop.On'' respectively). Results show that the proposed chart gives a signal right at the change point most of the time. 

\begin{table}[ht]
  \centering
  \caption{The numerical performance of the proposed method on the DAGM testing set.}
  \label{tab:1}
  \begin{tabular}{ccccccc}
    \hline
    Class & ARL$_1$ & sdARL$_1$ & Prop.Early & Prop.On & SDSC  & sdSDSC \\ \hline
      1   & 20.64   & 0.2533   & 0.02        &  0.98    & 0.8293 & 0.0280 \\
      2   & 20.96   & 0.0400   & 0.01        &  0.99    & 0.7157 & 0.0270 \\
      3   & 21.00   & 0.0000   & 0.00        &  1.00    & 0.7974 & 0.0131 \\
      4   & 20.82   & 0.1218   & 0.03        &  0.97    & 0.7696 & 0.0348 \\
      5   & 21.00   & 0.0000   & 0.00        &  1.00    & 0.8719 & 0.0135 \\
      6   & 21.00   & 0.0000   & 0.00        &  1.00	 & 0.6548 & 0.0425 \\
      7   & 20.45   & 0.2618   & 0.06        &  0.94    & 0.8820 & 0.0133 \\
      8   & 21.01   & 0.0100   & 0.00        &  0.99    & 0.6036 & 0.0336 \\
      9   & 20.88   & 0.1104   & 0.02        &  0.98    & 0.8352 & 0.0101 \\
     10   & 20.88   & 0.0844   & 0.02        &  0.98    & 0.7072 & 0.0159 \\ 
    \hline
  \end{tabular}
  \vspace{8pt}
  \fignote{Column sdARL reports the standard error of ARL$_1$ of the 100 repeated simulations. Column sdSDSC shows the standard error of the average SDSC in each class.} 
\end{table}

Now, we turn to evaluate the fault isolation performance -- how close MODERN-Diagnosis can identify the ellipse defective area in OC images? We use the Sorensen-Dice similarity coefficient (SDSC) as the performance metric: 
\begin{equation}
\text{SDSC} = \frac{2 \times \mbox{Area}( \mbox{EstimatedRegion} \cap \mbox{TrueRegion} ) }{\mbox{Area}( \mbox{EstimatedRegion} ) + \mbox{Area}(\mbox{TrueRegion})}.  \non
\end{equation}
The metric is between $0$ and $1$, with 1 being the case of perfect estimation. 

To begin the evaluation, we first train the network $\mathscr{G}(\cdot; \mathcal{U})$ on the OC images from the previously augmented training set. Note that even though the training set is expanded, the number OC images is still relatively small. To address this issue and to save training time, we utilize the following transfer learning technique. The parameter values in $\mathcal{U}$ are initialized to be the same as those in $\widehat{\mathcal{V}}$ except for the last layer, which are randomly initialized (recall that last fully connected layer of $\mathscr{G}$ has 5 outputs while $\mathscr{F}$ has only 2 in the last layer). Here the idea is to transfer the knowledge learned by $\mathscr{F}(\cdot; \widehat{\mathcal{V}})$ to $\mathscr{G}(\cdot; \mathcal{U})$ because those image features crucial for IC/OC classification should also be important for fault isolation. Then $\widehat{\mathcal{U}}$ is obtained by minimizing the weighted $L_1$ loss with the weight $\bm{w}$ chosen to be $(2, 2, 1, 1, 1)^\prime$; see equation \eqref{eq:wtL1}. Assigning more weight to the center coordinates prioritizes fault location over fault size. 
In our training progression, we observe that the validation SDSC stabilizes after 250 epochs, so we stopped the training process at the 275-th epoch; see details in Appendix \ref{app:dagm}

Then, we apply $\mathscr{G}(\cdot; \widehat{\mathcal{U}})$ to all the test OC images in each class and report the average SDSC and its standard error in Table \ref{tab:1}. Our findings suggest that $\mathscr{G}(\cdot; \widehat{\mathcal{U}})$ is able to estimate the faulty region reasonably well. To further visualize the fine performance of our faulty region estimator, we show in Figure \ref{fig:SDSC} the OC images corresponding to the median SDSC in each class.
The figure clearly shows that, even though the estimated ellipses do not fully overlap with the true ones, the defective areas have all been accurately highlighted, which is sufficient for practical use. 
\begin{figure}[htp]
  \centering
  \caption{Performance of MODERN-Diagnosis: Estimated vs true faulty regions.}
  \label{fig:SDSC} 
  \begin{tabular}{ccccc}
    \hspace{-0.75cm}\includegraphics[width = 0.19\textwidth]{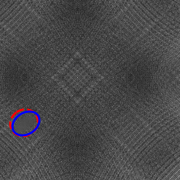} &
    \includegraphics[width = 0.19\textwidth]{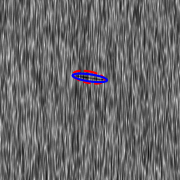} &
    \includegraphics[width = 0.19\textwidth]{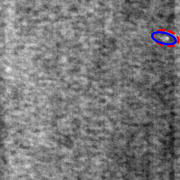} &
    \includegraphics[width = 0.19\textwidth]{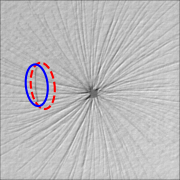}&
    \includegraphics[width = 0.19\textwidth]{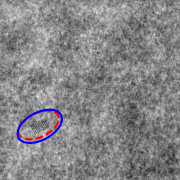} \\
    \hspace{-0.75cm}\includegraphics[width = 0.19\textwidth]{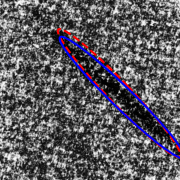} &
    \includegraphics[width = 0.19\textwidth]{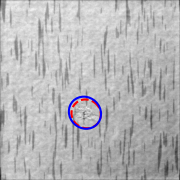} &
    \includegraphics[width = 0.19\textwidth]{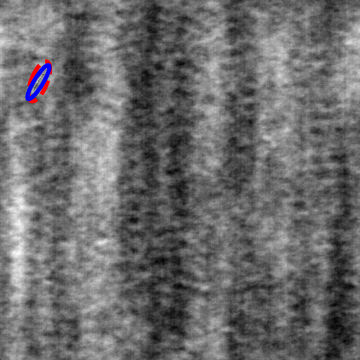} &
    \includegraphics[width = 0.19\textwidth]{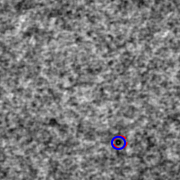}&
    \includegraphics[width = 0.19\textwidth]{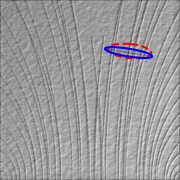}
  \end{tabular}
  \vspace{8pt}
  \fignote{The blue solid and red dashed ellipses represent the true and estimated faulty regions, respectively. }
\end{figure}
%

\subsection{Comparison with Markov Field Method}
\label{sec:5.3}

In this subsection, we compare our MODERN framework with a state-of-the-art image-based quality control method, namely, the Markov Field (MF) approach suggested by \citet{bui2018monitoring}. While MODERN framework can deal with images of any kind, the MF method is designed particularly for images that satisfy the locality and stationarity conditions; i.e., MF works the best for those images. Hence, to have a fair comparison, we apply both methods to a set of simulated images with the above Markov Field properties. In this way, we can also demonstrate the performance of the proposed MODERN framework with synthetic data. The images in our simulated example are generated by the following auto-regressive (AR) model:
\begin{align*}
  \bm{X}(i, j) = \phi_1 \bm{X}(i - 1, j) + \phi_2 \bm{X}(i, j - 1) + \varepsilon(i, j), \quad 1 \leq i, j \leq n ,
\end{align*}
where $\bm{X}(i, j)$ is the intensity level of image $\bm{X}$ at the $(i, j)$-th pixel, $\phi_1$ and $\phi_2$ are the AR coefficients, $n=512$ represents the image resolution, and $\{ \varepsilon(i, j) \}$ are i.i.d. random noises with the distribution $N(0, \sigma^2)$. When the production process is IC, we set $\phi_1 = 0.65$, $\phi_2 = 0.35$ and $\sigma = 0.1$. 
We consider two types of defects in the center area where pixel intensities are also generated by the AR model. Specifically, type-1 defect has $\phi_1 = 0.65$, $\phi_2 = 0.35$ and $\sigma = 10^{-6}$, whereas type-2 defect has $\phi_1 = \phi_2 = 0$ and $\sigma = 0.1$. Thus, type-1 defect is a change in the noise level and type-2 defect is a change in the correlation. Note that, since our MODERN-Net is trained on the benchmark dataset DAGM, we need to employ the transfer monitoring technique (introduced in Section \ref{sec:3.4}) before we apply the trained network to the simulated AR images. To that end, we conduct the applicability test first and the result shows that MODERN-Net is capable of distinguishing the IC and OC (of both types) images quite well even though it has not been trained on any of these AR images. 
For visual illustrations of the IC/OC images from the AR model and the applicability test, refer to Appendix \ref{app:sim} for details.

For each method, we simulate the monitoring process with the change point occurring at $t=21$ (i.e., feeding the first OC image as the 21-st input) and repeat the simulation 100 times. Table \ref{tab:2} summarizes the results concerning the ARL$_1$, where ARL$_0$ is set equal to $200$ for both methods. Here, we focus on comparing the timeliness of signal delivery, so we discard those runs in which a signal is given before the change point in the repeated simulations (hence only ``Prop.On'' is shown in the table and ``Prop.Early'' is excluded). 
From Table \ref{tab:2}, it is clear that our method delivers the signal in a much timelier manner than the MF approach does. In addition, the performance of MODERN is stable, with the standard error of ARL$_1$ being zero. By contrast, the MF method almost never signals at the change point and the ARL$_1$ is large, meaning the cost of delayed signaling would be high. 
\begin{table}[ht]
  \centering
  \caption{Comparison between MODERN framework and the MF method in simulated experiments.}
  \label{tab:2}
  \begin{tabular}{ccccccc}
    \hline
    Fault Type     & Method  & ARL$_1$   & sdARL$_1$ &  Prop.On & SDSC      & sdSDSC \\ \hline
                   & MODERN  & 21.00     & 0.0000   &  1.00     & 0.6286    & 0.0084 \\
        1          & MF      & 153.75    & 6.7987   &  0.01     & 0.0000    & 0.0000 \\
                   & {\color{black}t-stat}  & {\color{black}$19.5258^{**}$} &          &           & {\color{black}$74.8333^{**}$} &  \\
                   & MODERN  & 21.00   & 0.0000   &  1.00     & 0.6208 & 0.0057 \\
        2          & MF      & 150.58  & 6.3956   &  0.00     & 0.5762 & 0.0097  \\
                   & {\color{black}t-stat}   & {\color{black}$20.2608^{**}$} &          &           & {\color{black}$3.9642^{**}$} &   \\
    \hline
  \end{tabular}
  \vspace{8pt}
  \fignote{{\color{black} The row of \textit{t-stat} shows the t-statistics that compare the two methods, and $\;^{**}$ means p-values $<.01$.}}
\end{table}

Next, we compare the fault isolation performance by respectively applying MODERN-Diagnosis and MF method to 100 simulated OC images. The average values of SDSC and their standard errors are shown in Table \ref{tab:2}. Several observations from the results are noteworthy. First, our proposed method isolates the fault reasonably well for both types of defect. The SDSC metric is reasonably high and the standard error is quite small. Second, the MF approach is better at locating type-2 fault (change in correlation) than type-1 fault (change in noise level). In fact, it fails to detect any defective pixels for any type-1 OC image. Lastly, the proposed method outperforms the MF approach in both cases. For type-1 OC images, the MF method is dominated by MODERN-Diagnosis, whereas for type-2 defect both methods can achieve comparable performances. In Appendix \ref{app:sim}, we provide representative visual results for fault isolation given by the two methods, confirming our observations above.

\subsection{Application to Manufacturing of Electric Commutators}
\label{sec:5.4}

Electric commutators are an integral part of every electric motor that powers windshield wipers or washing machines. Quality monitoring and diagnosis are critical in their manufacturing process, which has entered the Industry 4.0 era. Recognizing such an important application area, we demonstrate our MODERN framework in the context of electric commutator manufacturing based on real-life images. These images are made available by Kolektor Mobility, a global commutation system supplier (\url{https://www.kolektor.com/commutators}). Our sample includes 14 IC images and 1 OC image with a surface defect. 
Before we use the trained MODERN-Net in this specific setting, we conduct the applicability test as in Section \ref{sec:3.4}. Both test statistic value and graphical tool indicate that the network $\mathscr{F}(\cdot; \widehat{\mathcal{V}})$ is able to separate the OC image from the others in the manufacturing process considered here. Representative electric commutator images and the details of the applicability test are provided in Appendix \ref{app:steel}. 
 


We setup the quality monitoring and diagnosis process as follows. From all the IC images, ten are used to determine the charting parameters, including the control limit. The remaining images are used for monitoring, where the first three images are IC, and the OC image is input from the fourth time point onward. Because of the small sample size, we let IC average run length ARL$_0 = 25$, and the control limit is found to be $0.708$ by the AB algorithm. 

Then, we apply the proposed MODERN framework to monitor the process. Including the 10 IC images used in control limit calculation, we plot MODERN-Chart for 16 time points; moreover, the fault isolation is given by MODERN-Diagnosis. The results are visualized by Figure \ref{fig:steel}(a). The figure shows that our chart delivers the signal at the exact change point (i.e, $t=15$). For the identified faulty region, although the ellipse does not surround the entire defective area, it flags the part requiring attention, providing helpful information for the quality personnel investigating the root cause. 
\begin{figure}[ht!]
\centering
\caption{Comparison between MODERN framework and MF method in electric commutator manufacturing.} \label{fig:steel}
\begin{subfigure}[t]{\linewidth}
\centering
			 \includegraphics[width = .32\linewidth]{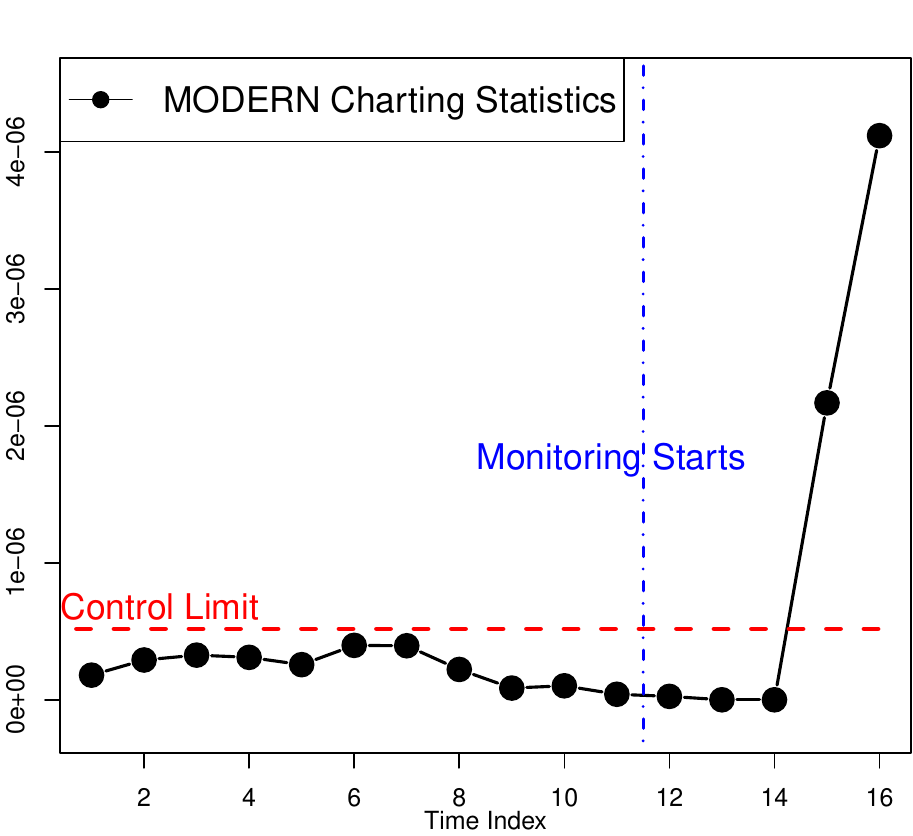}
			 \includegraphics[width = .32\linewidth]{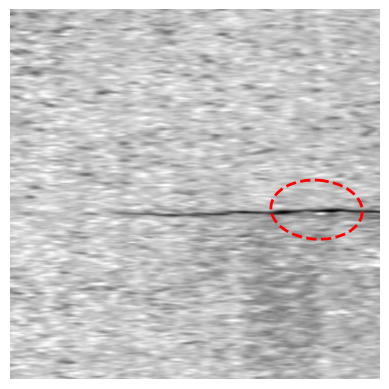}
             \caption{MODERN framework: charting statistics (left) and the faulty isolation (right).}  
\end{subfigure}
~ 
\begin{subfigure}[t]{\linewidth}
\centering
			\includegraphics[width=.32\linewidth]{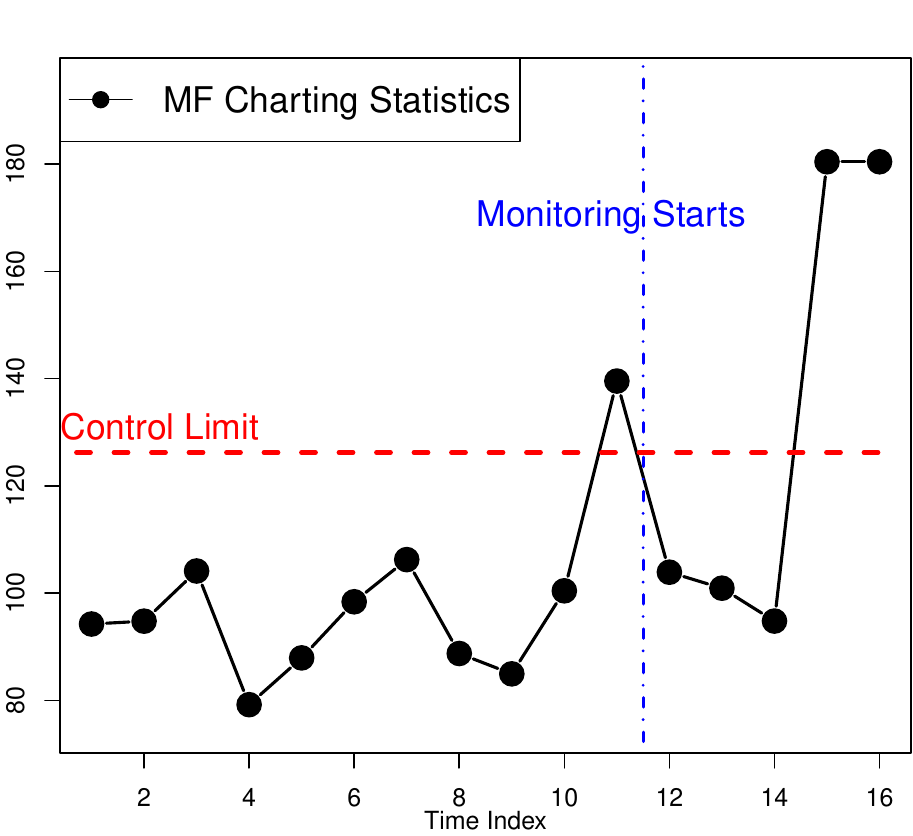} %
            \includegraphics[width = .32\linewidth]{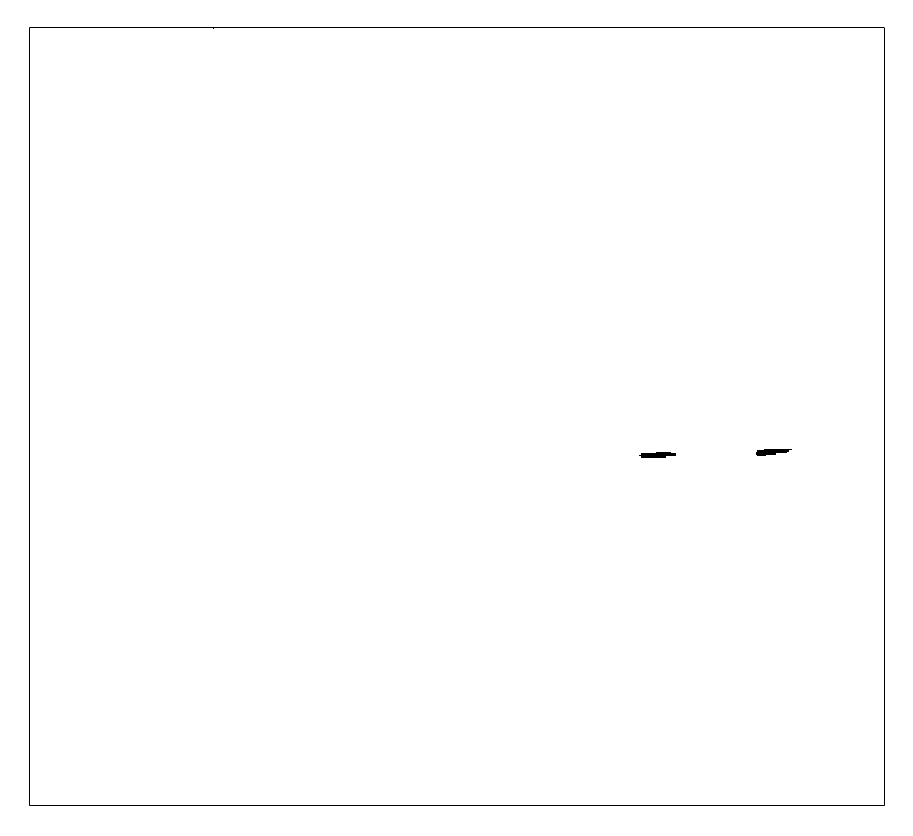}
            \caption{MF method: charting statistics (left) and the faulty isolation (right).}  
\end{subfigure}
\end{figure}
Moreover, we also apply the MF approach to these images for comparison. The corresponding monitoring and fault isolation results are shown in Figure \ref{fig:steel}(b). For the control chart, we observe that, although it gives the signal at the change point, it also has a \emph{false} signal before the process becomes OC (the point above the control limit before the monitoring starts). As for the fault isolation, the MF approach is able to flag some of the defective pixels, a comparable diagnosis performance to our network $\mathscr{G}(\cdot; \widehat{\mathcal{U}})$. 

{\color{black}
\subsection{Application to Leather Manufacturing}
\label{sec:5.5}

In this subsection, we apply our method to some real images in leather manufacturing. The leather images were obtained from the MVTec Anomaly Detection Database \citep{bergmann2021mvtec}, which is made available by MVTec Software GmbH  (\url{https://www.mvtec.com/}), a leading international manufacturer of machine vision software. There are 277 IC images and 19 OC images with cut defects. Our applicability test statistic has a value of $16.204$ and a p-value $< 0.001$. Therefore, we deem our pre-trained MODERN $\mathscr{F}(\cdot; \widehat{\mathcal{V}})$ applicable in this setting. Next, we used 245 (out of 277) IC images to determine the control limit to achieve $\mathrm{ARL}_0 = 50$. The remaining 32 IC images together with the OC images are reserved for the monitoring phase. As monitoring begins, the first 10 images are randomly drawn from the 32 IC images and the subsequent images are randomly drawn from the OC images. In Figure \ref{fig:mvtec}, with the same setup as in Figure \ref{fig:steel}, we compare our results with those given by the MF approach. 

\begin{figure}[ht!]
\centering
\caption{Comparison between MODERN framework and MF method in leather manufacturing.} \label{fig:mvtec}
\begin{subfigure}[t]{\linewidth}
\centering
			 \includegraphics[width = .32\linewidth, height = 0.30\linewidth]{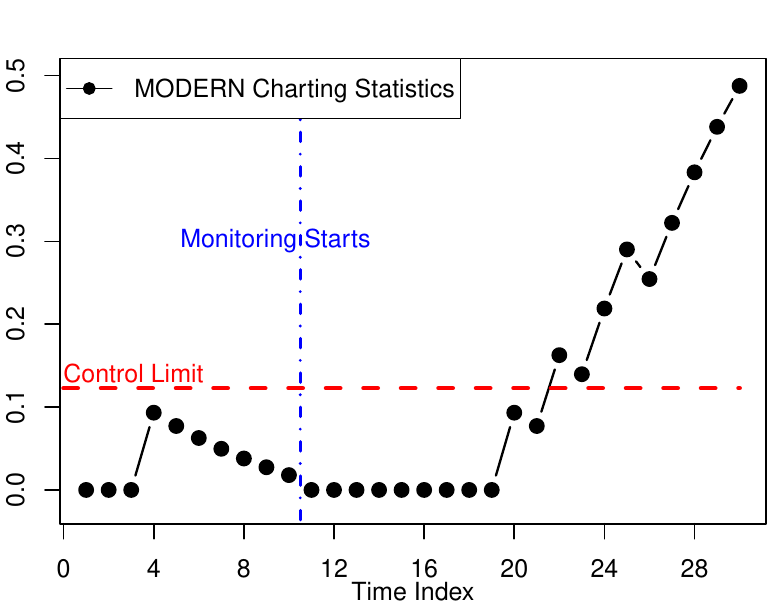}
			 \includegraphics[width = .32\linewidth, height = 0.30\linewidth]{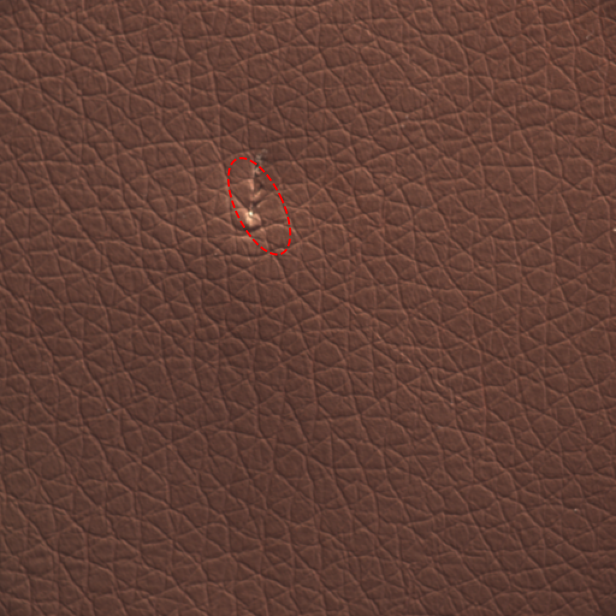}
             \caption{MODERN framework: charting statistics (left) and the faulty isolation (right).}  
\end{subfigure}
~ 
\begin{subfigure}[t]{\linewidth}
\centering
			\includegraphics[width=.32\linewidth]{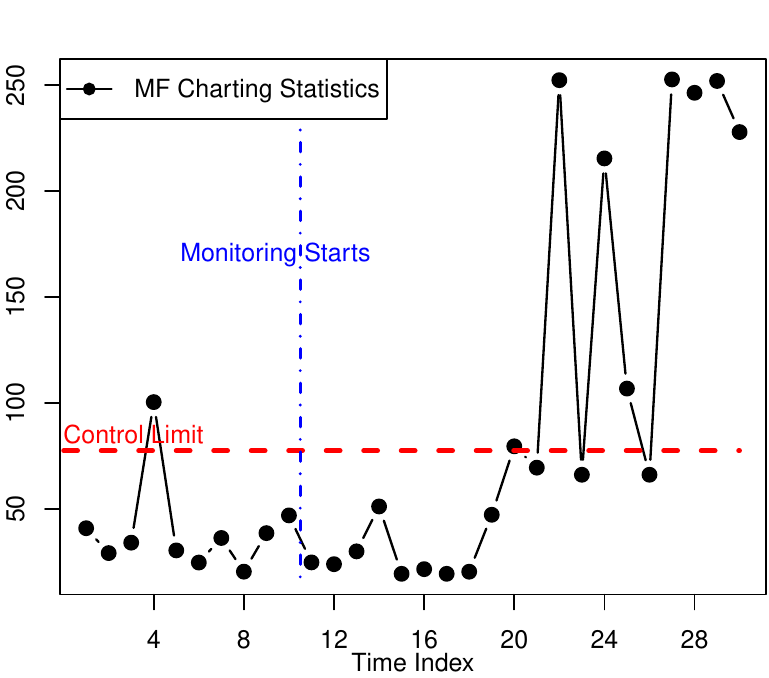} %
            \includegraphics[width = .32\linewidth]{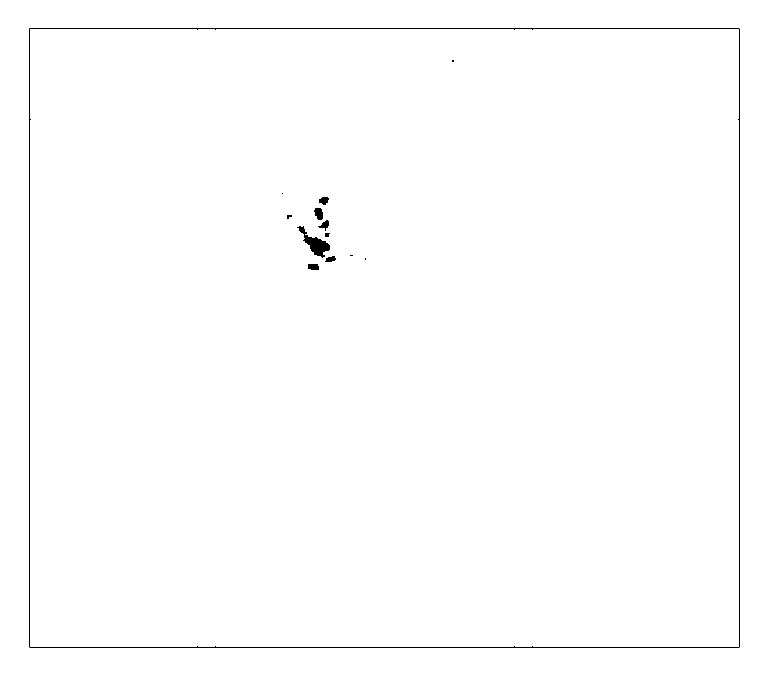}
            \caption{MF method: charting statistics (left) and the faulty isolation (right).}  
\end{subfigure}
\end{figure}

In the control charts, we have included 10 IC images prior to the monitoring phase. Therefore, the process is OC at $t = 21$ and onward. It can be seen that both methods provide a warning at $t = 22$. However, the MF method has two false signals while the process is still IC. Additionally, after the process has shifted, the MF chart returned to the IC state a few times, whereas our MODERN chart correctly suggests a persistent shift. This is because the EWMA mechanism in our MODERN chart is good at accumulating OC evidence overtime. As for the fault isolation, both methods have labeled the defective region reasonably well.} 


\section{Concluding Remarks and Possible Generalizations}
\label{sec:6}

We have proposed a deep learning-based framework (MODERN) for quality monitoring and fault diagnosis in a smart manufacturing system. By utilizing the powerful architecture of inception residual neural networks, our charting method achieves fast signaling when the process becomes abnormal, and our faulty region estimator identifies the defective area well in post-signal diagnosis. To extend our method to cases where there are not enough image data to sufficiently train the networks, we also have proposed a transfer monitoring technique that only requires a small sample size and suggested a hypothesis testing approach for assessing the method's applicability. Theoretically, we have established the optimal convergence rate for both our defect likelihood estimation and fault isolation. This result leads to an interesting and important implication - upgrading monitoring equipment (even for free) may not help a manufacturer's quality control practice. The decision whether to upgrade should instead be based on the fidelity level of image labels. This applies to both monitoring and diagnosis. In numerical studies, comparisons with a state-of-the-art approach show that the proposed method performs well. Finally, we have demonstrated our method with real applications to electric commutator and leather manufacturing. {\color{black}As an important next step in advancing this research, we plan to conduct randomized controlled experiments with industry partners to evaluate the efficacy of MODERN framework in real settings and to gain additional operational insights.}

The proposed quality control framework is not limited to industrial manufacturing. In the food industry, for instance, advanced optics are now available for visualizing both the physical appearance (e.g., shape and size) and the nutritional parameters (e.g., fat, sugar and fibers) of a food product. Our method can be integrated into the production process for automated food quality monitoring and inspection. In the farming business, images from drones can provide farmers with real-time information about when the fruits are ready to be picked. In this setting, our framework can be used for crop surveillance in a smart farming facility. In environmental monitoring,  the Landsat project of NASA (\url{https://landsat.gsfc.nasa.gov/}) has launched 8 satellites to monitor the Earth surface. The image sequences can be used for monitoring changes in forestation, water resource availability, and urban landscapes. We anticipate that our method can be useful in these applications as well.

\begin{flushleft}
\bibliographystyle{ormsv080}
\bibliography{myref}
\end{flushleft}

\newpage
\setcounter{page}{1}
\begin{APPENDICES}


%
%
%


\section{Proofs of Theorems \ref{ester}, \ref{esterlad}, \ref{esternoisy} and \ref{thm:transfer}}
\begin{lemma}\label{lem1}
Let $({X},Y)$ be a random variable sampled from  $ p({x},y)$  with the Bernoulli  random variable $Y \sim \mathrm{Bernoulli}(\sigma(f^*(X)))$, where $\sigma(t) = 1/(1+e^{-t})$.
 Let  $$D^*(x) = \arg\min\limits_{D}\mathbb{E}_{(X, Y) \sim p({x},y)} -[Y\log (\sigma(D(x))) + (1-Y)\log(1-\sigma(D(x)))].$$
 Then $D^* =f^*$, a.s.
\end{lemma}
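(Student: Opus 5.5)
The plan is to condition on $X$ and reduce the population cross-entropy minimization to a pointwise problem. By the tower property,
\begin{align*}
\mathbb{E}_{(X,Y)}\big[-Y\log\sigma(D(X)) - (1-Y)\log(1-\sigma(D(X)))\big] = \mathbb{E}_{X}\big[\ell_{p(X)}(D(X))\big],
\end{align*}
where $p(x) = \sigma(f^*(x))$ and, for $p\in(0,1)$ and $t\in\mathbb{R}$, $\ell_p(t) = -p\log\sigma(t) - (1-p)\log(1-\sigma(t))$. This uses only that $\mathbb{E}[Y\mid X] = \sigma(f^*(X))$ under the Bernoulli model. The minimization over measurable $D$ then separates: any $D$ that minimizes $\ell_{p(x)}(D(x))$ for $\nu$-a.e.\ $x$ minimizes the expectation. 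Conversely, a minimizer must agree a.e.\ with the pointwise minimizer, provided that minimizer is unique and the minimal value is finite. Finiteness holds because $f^*$ is assumed Lipschitz on the bounded set $[-B,B]^d$, so $\mathbb{E}\,\ell_{p(X)}(f^*(X)) < \infty$.

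Next I analyse $\ell_p$ for fixed $p\in(0,1)$. Using $\log\sigma(t) = -\log(1+e^{-t})$ and $\log(1-\sigma(t)) = -t - \log(1+e^{-t})$, one gets
\begin{align*}
\ell_p(t) = \log(1+e^{t}) - p\,t,
\end{align*}
so that
\begin{align*}
\ell_p'(t) = \sigma(t) - p, \qquad \ell_p''(t) = \sigma(t)\big(1-\sigma(t)\big) > 0 .
\end{align*}
Hence $\ell_p$ is strictly convex. Its unique stationary point is $t = \sigma^{-1}(p) = \log\{p/(1-p)\}$, and $\ell_p(t)\to\infty$ as $|t|\to\infty$ because $0<p<1$. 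Plugging in $p = p(x) = \sigma(f^*(x))$ gives the unique pointwise minimizer $t = f^*(x)$.

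To conclude, I argue by contradiction. Suppose $D^*$ minimizes the risk but $\nu(\{D^*\neq f^*\}) > 0$. Strict convexity gives $\ell_{p(x)}(D^*(x)) > \ell_{p(x)}(f^*(x))$ on that set, so
\begin{align*}
\mathbb{E}\,\ell_{p(X)}(D^*(X)) - \mathbb{E}\,\ell_{p(X)}(f^*(X)) = \mathbb{E}\big[\big(\ell_{p(X)}(D^*(X)) - \ell_{p(X)}(f^*(X))\big)\mathbf{1}\{D^*\neq f^*\}\big] > 0,
\end{align*}
since the integrand is strictly positive on a set of positive measure. This contradicts minimality, so $D^* = f^*$ $\nu$-a.s.

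The only delicate point is this last step: making sure the argmin is taken over measurable functions and that the risk of $f^*$ is finite, so that the difference of expectations is well defined. Both follow from the standing assumptions. The rest is the one-dimensional convex calculus above.
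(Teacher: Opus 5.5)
Your proposal is correct and follows the same route as the paper: condition on $X$, reduce the risk to a convex problem in the value $D(X)$, and identify the stationary point $\log\{p/(1-p)\} = f^*$. You also make explicit the strict convexity and the a.s.-uniqueness step that the paper compresses into ``setting the first variation to zero''; note that finiteness of the risk at $f^*$ does not need the Lipschitz assumption from Theorem~\ref{ester}, because the pointwise minimum $\ell_p(\log\{p/(1-p)\}) = -p\log p-(1-p)\log(1-p)$ is always at most $\log 2$.
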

\begin{proof}
$D^*({x})$ is the minimizer of
\begin{align*}
& \mathbb{E}_{(X, Y) \sim p({x},y)} -[Y\log (\sigma(D(X))) + (1-Y)\log(1-\sigma(D(X)))]\\
=& \mathbb{E}_{(X, Y) \sim p({x},y)} [Y\log (1+\exp^{-D(X)}) + (1-Y)\log(1+\exp^{D(X)})]\\
=& \mathbb{E}_{X}[\mathbb{E}_{Y|X}[Y\log (1+\exp^{-D(X)}) + (1-Y)\log(1+\exp^{D(X)})]]\\
=& \mathbb{E}_{X}[\log (1+\exp^{-D(X)})/(1+\exp^{-f^*(X)})+\log(1+\exp^{D(X)})/(1+\exp^{f^*(X)})]
\end{align*}
The above criterion is a convex functional of $D(\cdot)$. By setting  the first variation to  zero yields
 $D^* = f^*,$ a.s. 
\end{proof}
\subsubsection*{Proof of Theorem \ref{ester}.}
According to Lemma \ref{lem1},  the underlying $f^*$  is the minimizer of the cross entropy loss at the population level
\begin{equation}\label{lsp}
\mathcal{L}(D) =  \mathbb{E}_{X}[\mathbb{E}_{Y|X}[Y\log (1+\exp^{-D(X)}) + (1-Y)\log(1+\exp^{D(X)})]].
\end{equation}
Then we can estimate $f^*$
non-parametrically with deep neural networks.
Let $\{{Z}_i\}_{i=1}^N= \{({X}_i,Y_i)\}_{i=1}^N$  be samples from   $p({x},y)$.
Let $\mathcal{N}_{\mathcal{D}, \mathcal{W}, \mathcal{B}}$ be the set of ReLU  neural networks  $D_{\phi}$ with parameter $\phi$,  depth   $\mathcal{D}$,   width $\mathcal{W}$, and $ \|D_{\phi}\|_{\infty} \leq \mathcal{B}.$
Define the estimator as the minimizer of  the following cross entropy loss at the sample level
\begin{align}\label{lr}
\widehat{D}_{\phi}({x}) &=\arg\min\limits_{D_{\phi} \in \mathcal{N}_{\mathcal{D}, \mathcal{W},  \mathcal{B}}} \widehat{\mathcal{L}}(D_{\phi})\nonumber\\
& = \frac{1}{N} \sum_{i=1}^N Y_i\log (1+\exp^{-D_{\phi}(X_i)}) + (1-Y_i)\log(1+\exp^{D_{\phi}(X_i)}).
\end{align}
Next we bound the nonparametric estimation error  $\|D({x}) - \widehat{D}_{\phi}({x})\|_{L^2(\nu)}$ where $\nu$ is the marginal distribution of $X$.

First we show  the  landscape of the restriction of $\mathcal{L}$ on   $\mathcal{N}_{\mathcal{D}, \mathcal{W}, \mathcal{B}}$ behaves like  quadratic functional around $D^{*}$.
\begin{equation}\label{A6}
C_{1,\mathcal{B}}\|{D} - D^{*}\|_{L^2(\nu)}^2 \leq \mathcal{L}({D})-\mathcal{L}(D^{*})\leq C_{2,\mathcal{B}}\|{D} - D^{*}\|_{L^2(\nu)}^2, \ \ \forall {D} \in \mathcal{N}_{\mathcal{D}, \mathcal{W}, \mathcal{B}},
\end{equation}
where $C_{1,\mathcal{B}} = \frac{\exp^{-\mathcal{B}}}{(1+\exp^{\mathcal{B}})^3},$ $C_{2,\mathcal{B}}=\exp^{\mathcal{B}}.$
We use $C_1,...,C_5$ to denote universal numerical constants in the following.\\
 \textbf{Proof of \eqref{A6}}.\\
 By the proof of Lemma \ref{lem1}, we have $\forall D \in \mathcal{N}_{\mathcal{D}, \mathcal{W}, \mathcal{B}}$
 $$\mathcal{L}(D)-\mathcal{L}(D^{*}) =\mathbb{E}_{X}[\log \frac{1+\exp^{-D(X)}}{1+\exp^{-D^*(X)}}/(1+\exp^{-D^*(X)})+\log\frac{1+\exp^{D(X)}}{1+\exp^{D^*(X)}}/(1+\exp^{D^*(X)})].$$
 $\forall a \in [-\mathcal{B},\mathcal{B}]$, define  $$g(b) = (1+\exp^{-a})^{-1}\log(\frac{1+\exp^{-b}}{1+\exp^{-a}})+(1+\exp^{a})^{-1}\log(\frac{1+\exp^{b}}{1+\exp^{a}}) :b\in  [-\mathcal{B},\mathcal{B}]\rightarrow \mathbb{R}.$$
 The fact $g(a) = g^{\prime}(a) = 0$ and  Taylor expansion implies,
 $$g(b) = \frac{g^{\prime\prime}(\xi)}{2}(b-a)^2,$$ where $\xi \in [-\mathcal{B},\mathcal{B}].$
  The desired results in \eqref{A6} follow from  $$\frac{2\exp^{-\mathcal{B}}}{(1+\exp^{\mathcal{B}})^3}\leq g^{\prime\prime}(\xi) = (1+\exp^{-a})^{-1}\frac{\exp^{-\xi}}{(1+\exp^{-\xi})^2}+ (1+\exp^{a})^{-1}\frac{\exp^{\xi}}{(1+\exp^{\xi})^2}\leq 2\exp^{\mathcal{B}}.$$
  It follow from  \eqref{A6}  that $\forall \bar{D}_{\phi} \in \mathcal{N}_{\mathcal{D}, \mathcal{W}, \mathcal{B}}$,
  \begin{align}
&C_{1,\mathcal{B}}\|\widehat{D}_{\phi} - D^{*}\|_{L^2(\nu)}^2 \leq \mathcal{L}(\widehat{D}_{\phi})-\mathcal{L}(D^{*}) \nonumber\\
 & =\mathcal{L}(\widehat{D}_{\phi}) - \widehat{\mathcal{L}}(\widehat{D}_{\phi}) +    \widehat{\mathcal{L}}(\widehat{D}_{\phi})-  \widehat{\mathcal{L}}(\bar{D}_{\phi}) \nonumber +   \widehat{\mathcal{L}}(\bar{D}_{\phi}) - \mathcal{L}(\bar{D}_{\phi}) +\mathcal{L}(\bar{D}_{\phi})  - \mathcal{L}(D^*)\nonumber\\
  &\leq  2 \sup_{D \in \mathcal{N}_{\mathcal{D}, \mathcal{W},  \mathcal{B}}} |\mathcal{L}(D) - \widehat{\mathcal{L}}(D) |+ C_{2,\mathcal{B}}\inf_{D\in  \mathcal{N}_{\mathcal{D}, \mathcal{W},  \mathcal{B}}}\|D - D^{*}\|_{L^2(\nu)}^2, \label{A7}
  \end{align}
  where we use the definition of   $\widehat{D}_{\phi}$ and $\bar{D}_{\phi}$.
   Let ${Z} = ({X},Y)$ be the  random variable pair drawn from $p({x},y)$.
  Define  $$b(D,{Z}) = Y\log (1+\exp^{-D(X)}) + (1-Y)\log(1+\exp^{D(X)}).$$
  It is easy to check that $b(D,{Z})$ is 1-Lipschitz on $D$, i.e.,
  \begin{equation}\label{lip}
|b(D,{Z})-b(\tilde{D},{Z})|\leq |D({x}) - \tilde{D}({x})|.
  \end{equation}
  Let $\widetilde{{Z}}_i$ be a i.i.d. copy of ${Z}_{i},$ and $\sigma_i (\epsilon_i) $ be the i.i.d. Rademacher random (standard  normal) variables that are independent with
  $\widetilde{{Z}}_i$ and ${Z}_{i}$,  $i = 1,...N.$
  We need the following
  results \eqref{A8}-\eqref{A11} to upper bounding the expected value of the right hand side term in \eqref{A7}.
 \begin{equation}\label{A8}
 \mathbb{E}_{\{{O}_i\}_{i=1}^N} [\sup_{D} |\mathcal{L}(D) - \widehat{\mathcal{L}}(D) | ] \leq 4C_1 \mathcal{G}(\mathcal{N}),
 \end{equation}
  where $\mathcal{G}(\mathcal{N})$ is the Gaussian complexity (Bartlett and Mendelson 2002) of $\mathcal{N}_{\mathcal{D}, \mathcal{W}, \mathcal{B}}$ defined as
  $$\mathcal{G}(\mathcal{N}) =  \mathbb{E}_{\{{Z}_i, \epsilon_i \}_{i}^N}[\sup_{D\in \mathcal{N}_{\mathcal{D}, \mathcal{W}, \mathcal{B}}}|\frac{1}{N}\sum_{i=1}^N\epsilon_i b(D,{Z}_i)|].$$
  \textbf{Proof of} \eqref{A8}.\\
  Clearly,
   $$\mathcal{L}(D) = \mathbb{E}_{{Z}} [b(D,{Z})] = \frac{1}{N}\mathbb{E}_{\widetilde{{Z}}_i} [b(D,\widetilde{{Z}}_i],$$ and
   $$ \widehat{\mathcal{L}}(D) = \frac{1}{N}\sum_{i=1}^N b(D,{Z}_i).$$
      Let $$\mathcal{R}(\mathcal{N}) = \frac{1}{N} \mathbb{E}_{\{{Z}_i, \sigma_i\}_{i}^N}[\sup_{D\in \mathcal{N}_{\mathcal{D}, \mathcal{W}, \mathcal{B}}}|\sum_{i=1}^N\sigma_i b(D,{Z}_i)|]$$ be the 	
Rademacher complexity of $\mathcal{N}_{\mathcal{D}, \mathcal{W}, \mathcal{B}}$ (Bartlett and Mendelson 2002).
    Then,
  \begin{align*}
  &\mathbb{E}_{\{{Z}_i\}_{i=1}^N} [\sup_{D} |\mathcal{L}(D) - \widehat{\mathcal{L}}(D) | ] \\
  &=\frac{1}{N} \mathbb{E}_{\{{Z}_i\}_{i}^N} [\sup_{D} |\sum_{i=1}^N (\mathbb{E}_{\widetilde{{Z}}_i} [b(D,\widetilde{{Z}}_i)] - b(D,{Z}_i))|]\\
  & \leq  \frac{1}{N} \mathbb{E}_{\{{Z}_i, \widetilde{{Z}}_i\}_{i}^N} [\sup_{D} |b(D,\widetilde{{Z}}_i) - b(D,{{Z}}_i)|]\\
  & =  \frac{1}{N} \mathbb{E}_{\{{Z}_i, \widetilde{{Z}}_i,\sigma_i \}_{i}^N} [\sup_{D } |\sum_{i=1}^N\sigma_i(b(D,\widetilde{{Z}}_i) - b(D,{{Z}}_i))|]\\
  & \leq \frac{1}{N}  \mathbb{E}_{\{{Z}_i, \sigma_i \}_{i}^N} [\sup_{D } |\sum_{i=1}^N\sigma_i b(D,{{Z}}_i)| ] + \frac{1}{N}  \mathbb{E}_{\{\widetilde{{Z}}_i,\sigma_i \}_{i}^N} [\sup_{D } |\sum_{i=1}^N\sigma_i b(D,\widetilde{{Z}}_i)| ] \\
  &= 2\mathcal{R}(b\circ\mathcal{N})\\
  &\leq 4\mathcal{R}(\mathcal{N})\\
  & \leq 4C_1 \mathcal{G}(\mathcal{N}),
  \end{align*}
  where, the first inequality follows from the Jensen's inequality, and the second equality holds since  both $\sigma_i(b(D,\widetilde{{Z}}_i) - b(D,{{Z}}_i))$ and $b(D,\widetilde{{Z}}_i) - b(D,{{Z}}_i)$ are governed by the same law, and the last  equality holds since the distribution of the two terms are the same, and in the  third inequality we use the   Lipschitz contraction property of Rademacher complexity, see Theorem 12 in Bartlett and Mendelson (2002),  and \eqref{lip}, and  the last inequality holds since the  relationship between the Gaussian complexity and  the Rademacher complexity, see for Lemma 4 in Bartlett and Mendelson (2002). 
  Next, we bound the Gaussian complexity.
  \begin{equation}\label{A9}
  \mathcal{G}(\mathcal{N}) \leq
  C_2\mathcal{B} \sqrt{\frac{N}{(\mathcal{D}\mathcal{W})^2\log \mathcal{\mathcal{D}\mathcal{W}}}}\log \frac{N}{(\mathcal{D}\mathcal{W})^2\log \mathcal{\mathcal{D}\mathcal{W}}} \exp^{-\log^2 \frac{N}{(\mathcal{D}\mathcal{W})^2\log \mathcal{\mathcal{D}\mathcal{W}}}}.
  \end{equation}
  \textbf{Proof of \eqref{A9}}.\\
  Since $\mathcal{N}$ is  closed under negation,
\begin{align*}
&\mathcal{G}(\mathcal{N}) =  \mathbb{E}_{\{{Z}_i, \epsilon_i \}_{i}^N}[\sup_{D\in \mathcal{N}_{\mathcal{D}, \mathcal{W}, \mathcal{B}}}\frac{1}{n}\sum_{i=1}^N\epsilon_i b(D,{{Z}}_i)]\\
& = \mathbb{E}_{{Z}_i}[ \mathbb{E}_{\epsilon_i}[\sup_{D\in \mathcal{N}_{\mathcal{D}, \mathcal{W}, \mathcal{B}}}\frac{1}{N}\sum_{i=1}^N\epsilon_i b(D,{{Z}}_i)]|\{{Z}_i\}_{i=1}^N].
\end{align*}
Conditioning on $\{{Z}_i\}_{i =1}^N$,
$\forall D, \widetilde{D} \in \mathcal{N}_{\mathcal{D}, \mathcal{W}, \mathcal{B}}$ it easy to check $$\mathbb{V}_{\epsilon_i} [\frac{1}{N}\sum_{i=1}^n\epsilon_i (b(D,{{Z}}_i) - b(\widetilde{D},{{Z}}_i))] = \frac{d_{\mathcal{N},2}(D,\widetilde{D})}{\sqrt{N}},$$
where, $$d_{\mathcal{N},2}(D,\widetilde{D}) = \frac{1}{\sqrt{N}} \sqrt{\sum_{i =1}^N (b(D,{{Z}}_i) - b(\widetilde{D},{{Z}}_i))^2}.$$
Denote  $\mathfrak{C}(\mathcal{N},d_{\mathcal{N}},\delta)$ as the covering number of $\mathcal{N}_{\mathcal{D}, \mathcal{W},  \mathcal{B}}$
under the metric $d_{\mathcal{N},2}$ with radius $\delta$, and let $\mathrm{Pdim}_{\mathcal{N}}$ be the Pseudo-dimension of $\mathcal{N}_{\mathcal{D}, \mathcal{W}, \mathcal{B}}$.
Since the diameter of $\mathcal{N}_{\mathcal{D}, \mathcal{W},  \mathcal{B}}$ under $d_{\mathcal{N},2} $ is at most $\mathcal{B}$,  we have
  \begin{align*}
  \mathcal{G}(\mathcal{N}) &\leq \frac{C_3}{\sqrt{N}} \mathbb{E}_{\{{Z}_i\}_{i=1}^N}[\int_{0}^{\mathcal{B}} \sqrt{\log \mathfrak{C}(\mathcal{N},d_{\mathcal{N},2},\delta)} \mathrm{d} \delta]\\
  &\leq\frac{C_3}{\sqrt{N}} \mathbb{E}_{\{{Z}_i\}_{i=1}^N}[\int_{0}^{\mathcal{B}} \sqrt{\log  \mathfrak{C}(\mathcal{N},d_{\mathcal{N},\infty},\delta)} \mathrm{d} \delta]\\
  &\leq \frac{C_3}{\sqrt{N}}\int_{0}^{\mathcal{B}} \sqrt{\mathrm{Pdim}_{\mathcal{N}} \log \frac{2e\mathcal{B}N}{\delta \mathrm{Pdim}_{\mathcal{N}} }} \mathrm{d} \delta, \\
  & \leq C_4 \mathcal{B}(\frac{N}{\mathrm{Pdim}_{\mathcal{N}}})^{1/2}\log (\frac{N}{\mathrm{Pdim}_{\mathcal{N}}}) \exp^{ -\log^2(\frac{N}{\mathrm{Pdim}_{\mathcal{N}}})}\\
  & \leq C_2\mathcal{B} \sqrt{\frac{N}{(\mathcal{D}\mathcal{W})^2\log \mathcal{\mathcal{D}\mathcal{W}}}}\log \frac{N}{(\mathcal{D}\mathcal{W})^2\log \mathcal{\mathcal{D}\mathcal{W}}} \exp^{-\log^2 \frac{N}{(\mathcal{D}\mathcal{W})^2\log \mathcal{\mathcal{D}\mathcal{W}}}}
  \end{align*} 
  where, the first inequality follows from the chaining  Theorem 8.1.3 in Vershynin (2018), and the second inequality holds due to
  $\mathfrak{C}(\mathcal{N},d_{\mathcal{N},2},\delta)\leq \mathfrak{C}(\mathcal{N},d_{\mathcal{N},\infty},\delta)$, and in the third inequality we used
  the relationship between the matric entropy and the Pseudo-dimension of the ReLU networks  $\mathcal{N}_{\mathcal{D}, \mathcal{W},  \mathcal{B}}$ (Anthony and Bartlett 1999), i.e., 
  $$\log \mathfrak{C}(\mathcal{N},d_{\mathcal{N},\infty},\delta)) \leq \mathrm{Pdim}_{\mathcal{N}} \log \frac{2e\mathcal{B}N}{\delta\mathrm{Pdim}_{\mathcal{N}}},$$
  and the fourth inequality follows by  some calculation,
  and the last inequality  holds due to the  upper bound of Pseudo-dimension for the ReLU network $\mathcal{N}_{\mathcal{D}, \mathcal{W},  \mathcal{B}}$ satisfying  $$\mathrm{Pdim}_{\mathcal{N}} \leq C_5 (\mathcal{D}\mathcal{W})^2\log \mathcal{\mathcal{D}\mathcal{W}},$$ see Bartlett et al. (2019). 
  Last,  we bound the approximation error $\inf_{D\in \mathcal{N}_{\mathcal{W},\mathcal{D},\mathcal{B}}}\|D - D^{*}\|_{L^2(\nu)}$.

  \begin{lemma}\label{shen}
For any function $D^*:[-B,B]^d\rightarrow \mathbb{R}$ with Lipschitz constant $L$
there exist a ReLU network $\bar{D}$ with depth $\mathcal{O}(12\mathcal{D}+C_{1,d}) $ and width $\mathcal{O}(C_{2,d} \mathcal{W})$
such that
\begin{equation}\label{A11}
\|D^*-\bar{D}\|_{L^{\infty}} \leq 19L\sqrt{d} B (\mathcal{D}\mathcal{W})^{-2/d},
\end{equation}
where $C_{1,d} = 14+2d$, $C_{2,d} = 3^{d+3}.$
\end{lemma}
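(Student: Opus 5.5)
This is the approximation result of \citet{shen2019deep} for Lipschitz functions by deep ReLU networks. The plan is to build the network in three stages: piecewise-constant approximation on a grid, a network that outputs the grid cell of an input, and a network that memorizes the cell values.

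\textbf{Rescaling and grid.} First rescale the cube $[-B,B]^d$ to $[0,1]^d$ through the affine map $x\mapsto (x+B)/(2B)$, which can be absorbed into the first layer. Then $\tilde D^*(u)=D^*(2Bu-B)$ is Lipschitz with constant $2BL$. Set $K=\lfloor (\mathcal{D}\mathcal{W})^{2/d}\rfloor$, up to constants, and partition $[0,1]^d$ into $K^d\approx(\mathcal{D}\mathcal{W})^2$ subcubes $Q_\beta$ with $\beta\in\{0,\dots,K-1\}^d$. On each $Q_\beta$, approximate $\tilde D^*$ by its value at a corner. The Lipschitz property bounds the piecewise-constant error by $2BL\sqrt d/K$, which is of order $L\sqrt d B(\mathcal{D}\mathcal{W})^{-2/d}$.

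\textbf{Index map.} Construct a one-dimensional step network $\psi$ of width $\mathcal{O}(\mathcal{W})$ and depth $\mathcal{O}(\mathcal{D})$ mapping $[k/K,(k+1)/K-\eta]$ to $k$. This is the standard construction: compose a coarse floor with width $\mathcal{W}$ and a refined floor with depth $\mathcal{D}$, giving about $\mathcal{W}\mathcal{D}$ steps, with a further bit-extraction refinement to reach $K\sim(\mathcal{D}\mathcal{W})^{2/d}$ levels per coordinate. Apply $\psi$ coordinatewise, which is the source of the width factor depending on $d$. Then combine the coordinates into a scalar index $\sum_j \beta_j K^{j-1}$ with a linear layer.

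\textbf{Memorization.} Next, build a network that, given the integer index, outputs the value of $\tilde D^*$ at the corresponding point within accuracy $\mathcal{O}((\mathcal{D}\mathcal{W})^{-2/d})$. Here Lipschitz continuity is used once more. Consecutive values differ by at most $2BL\sqrt d/K$, so they can be encoded as bounded increments quantized to $\{-1,0,1\}$ times the step size. These bits are stored in $\mathcal{O}(\mathcal{W})$ real parameters and extracted by a bit-extraction network of depth $\mathcal{O}(\mathcal{D})$, following Bartlett et al.'s bit extraction. This lets $(\mathcal{D}\mathcal{W})^2$ points be fit with only $\mathcal{O}(\mathcal{D}\mathcal{W})$ parameters. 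I expect this to be the main obstacle: achieving the squared rate $(\mathcal{D}\mathcal{W})^{-2/d}$ rather than $(\mathcal{D}\mathcal{W})^{-1/d}$ depends entirely on this bit-extraction counting, and it is what fixes the constants $12\mathcal{D}$ and $3^{d+3}$.

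\textbf{Trifling region.} Finally, remove the thin strips of width $\eta$ near cell boundaries, where $\psi$ misbehaves. Use the middle-value trick: take the median of $3^d$ shifted copies of the network, computed by ReLU min/max layers. This yields a uniform $L^\infty$ bound on the whole cube, explains the $3^{d}$ factor in the width and the extra $2d$ depth, and gives the stated constant $19L\sqrt d B$.
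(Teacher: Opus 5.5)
Your proposal is correct and is essentially the paper's proof. The paper proves the lemma solely by invoking Theorem 1.1 of Shen, Yang and Zhang (2019) in its $L^\infty$ form (width factor $3^{d+3}$, depth $12\mathcal{D}+14+2d$), and your three-stage outline is the construction behind that theorem. Since the citation carries the argument, several parts of your sketch are loose but none is load-bearing: the squared rate requires storing the $(\mathcal{D}\mathcal{W})^2$ increments in the $\Theta(\mathcal{W}^2\mathcal{D})$ weights of the network at $\mathcal{O}(\mathcal{D})$ bits each, since only the neuron count is $\mathcal{O}(\mathcal{D}\mathcal{W})$ and $\mathcal{O}(\mathcal{W})$ stored reals would support only the $(\mathcal{D}\mathcal{W})^{-1/d}$ rate; under the lexicographic index $\sum_j\beta_jK^{j-1}$, consecutive indices are not adjacent cells at the row wrap-arounds when $d\ge 2$, so your small-increment encoding needs separate treatment there; the middle-value step is $d$ successive three-term medians $\phi_i(x)=\mathrm{mid}\bigl(\phi_{i-1}(x-\delta e_i),\phi_{i-1}(x),\phi_{i-1}(x+\delta e_i)\bigr)$, not a single median of all $3^d$ shifts, which can fail for $d\ge 2$ because only $2^d<3^d/2$ of the shifts are guaranteed to miss the trifling region; and rescaling $[-B,B]^d$ to $[0,1]^d$ doubles the Lipschitz constant, so the cited bound gives $38L\sqrt{d}B(\mathcal{D}\mathcal{W})^{-2/d}$ rather than $19L\sqrt{d}B(\mathcal{D}\mathcal{W})^{-2/d}$ (a factor-$2$ slip the paper's statement shares, harmless for the rates).
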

\begin{proof}
This Lemma follows directly from  Theorem 1.1 of Shen et al. (2019). 
\end{proof}

Now, combing the results  $\eqref{A7} - \eqref{A11}$,
 we have
 \begin{align*}
 & \mathbb{E}_{\{{Z}_{i}\}_{i=1}^N}\|\widehat{D}_{\phi} - D^{*}\|_{L^2(\nu)}^2\\
&\leq C_{d,\mathcal{B}}(\sqrt{\frac{N}{(\mathcal{D}\mathcal{W})^2\log \mathcal{\mathcal{D}\mathcal{W}}}}\log \frac{N}{(\mathcal{D}\mathcal{W})^2\log \mathcal{\mathcal{D}\mathcal{W}}} \exp^{-\log^2 \frac{N}{(\mathcal{D}\mathcal{W})^2\log \mathcal{\mathcal{D}\mathcal{W}}}}+     (19L\sqrt{d} B (\mathcal{D}\mathcal{W})^{-2/d})^2)\\
&\leq C_{d,\mathcal{B},B,L} (\sqrt{\frac{(\mathcal{\mathcal{D}\mathcal{W}})^2\log(\mathcal{W}\mathcal{D})}{N}}\log N+(\mathcal{\mathcal{D}\mathcal{W}})^{-4/d})\\
& \leq  C_{d,\mathcal{B},B,L}N^{-2/(2+ d)}\log^{3/2}N,
 \end{align*}
 where, the last  inequality holds since we set  $\mathcal{D}\mathcal{W} = \mathcal{O}(N^{d/(2d+4)})$  and some algebra.

\subsubsection*{Proof of Theorem \ref{esterlad}.}
 With $L_1$ loss, it is easy to check that 
\begin{equation}\label{lossp}
g^*\in \arg \min_{\mathrm{measurable}  \ \ g} \mathcal{L}(g) = \mathbb{E}_{X,z} [w^\prime|g(X)-z|].
\end{equation}
To estimate $g^*$, we consider  the following deep estimator 
  $$\hat{g} \in \arg\min_{g\in \mathcal{N}_{\mathcal{W}, \mathcal{D},\mathcal{B}}} \widehat{L}(g)= \sum_{j=1}^n \omega^\prime |g(X_j)-z_j|,$$
where $X_j, j=1,..., n$ are OC images and $z_j \in \mathbb{R}^5$ denote the elliptic parameter corresponding to the defect domain,
$\omega $ is the weight (we set  $ \omega = 1/5$ for ease of presentation.)

We first in Lemma (\ref{errdec}) show that  the prediction error $\mathcal{L}(\hat{g})-\mathcal{L}(g^*)$ can be bounded  by a  empirical process (variance) that indexed by network class $\mathcal{N}_{\mathcal{W},\mathcal{D},\mathcal{B}}$ plus the approximation  error (bias)
\begin{lemma}\label{errdec}
$$
\mathbb{E}[\mathcal{L}(\hat{g})-\mathcal{L}(g^*)]\leq \mathbb{E}[ 2\sup_{g \in \mathcal{N}_{\mathcal{D}, \mathcal{W},  \mathcal{B}}} |\mathcal{L}(g) - \widehat{\mathcal{L}}(g) |+ \inf_{g\in  \mathcal{N}_{\mathcal{D}, \mathcal{W},  \mathcal{B}}}\|g - g^{*}\|_{L^1(\nu)}].$$
\end{lemma}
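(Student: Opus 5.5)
The plan is to reuse the standard oracle decomposition from \eqref{A7} in the proof of Theorem \ref{ester}, now without the quadratic-landscape step. Let $\bar g\in\mathcal{N}_{\mathcal{D},\mathcal{W},\mathcal{B}}$ be arbitrary, and write $\widehat{\mathcal{L}}$ for the normalized empirical loss $\frac1n\sum_j \omega'|g(X_j)-z_j|$. Since $\hat g$ minimizes $\widehat{\mathcal{L}}$ over the class, we have $\widehat{\mathcal{L}}(\hat g)-\widehat{\mathcal{L}}(\bar g)\le 0$. Hence
\begin{align*}
\mathcal{L}(\hat g)-\mathcal{L}(g^*) &= \big[\mathcal{L}(\hat g)-\widehat{\mathcal{L}}(\hat g)\big]+\big[\widehat{\mathcal{L}}(\hat g)-\widehat{\mathcal{L}}(\bar g)\big]+\big[\widehat{\mathcal{L}}(\bar g)-\mathcal{L}(\bar g)\big]+\big[\mathcal{L}(\bar g)-\mathcal{L}(g^*)\big]\\
&\le 2\sup_{g\in\mathcal{N}_{\mathcal{D},\mathcal{W},\mathcal{B}}}|\mathcal{L}(g)-\widehat{\mathcal{L}}(g)|+\big[\mathcal{L}(\bar g)-\mathcal{L}(g^*)\big].
\end{align*}
Both empirical-process terms are dominated by the supremum. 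This is exactly the telescoping used before, and it gives the variance part of the bound.

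For the bias term, use the Lipschitz property of the weighted absolute loss instead of the quadratic comparison \eqref{A6}. By the reverse triangle inequality, coordinatewise $\big||\bar g(X)-z|-|g^*(X)-z|\big|\le|\bar g(X)-g^*(X)|$. Therefore
\begin{align*}
\mathcal{L}(\bar g)-\mathcal{L}(g^*)\le \mathbb{E}_{X,z}\big[\omega'|\bar g(X)-g^*(X)|\big]\le \|\bar g-g^*\|_{L^1(\nu)},
\end{align*}
using $\omega=1/5$ (so that $\omega'|v|$ is the normalized $\ell_1$ norm; any fixed weights only change a constant). The $z$-dependence disappears because the bound no longer involves $z$, so the expectation reduces to one over $\nu$.

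Finally, take the infimum over $\bar g\in\mathcal{N}_{\mathcal{D},\mathcal{W},\mathcal{B}}$, which is deterministic, and then take expectations over the sample. This yields the stated inequality of Lemma \ref{errdec}. There is no real obstacle; the only points needing care are the measurability and existence of the minimizer $\hat g$ (assumed, as in \eqref{lr}), and matching the normalization of $\widehat{\mathcal{L}}$ to $\mathcal{L}$, so that the $\frac1n$ factor in the estimator's definition is used consistently.
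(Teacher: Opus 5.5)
Your proposal is correct and matches the paper's proof: the same four-term telescoping around an arbitrary $\bar g$ in the network class, the minimizing property of $\hat g$, and the 1-Lipschitz property of the weighted absolute loss to bound $\mathcal{L}(\bar g)-\mathcal{L}(g^*)$ by $\|\bar g-g^*\|_{L^1(\nu)}$, followed by an infimum over $\bar g$ and an expectation. Your remark that $\widehat{\mathcal{L}}$ must carry the $\frac{1}{n}$ normalization for the decomposition to work correctly fixes a notational slip in the paper's definition of the empirical loss.
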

\begin{proof}
$\forall \bar{g} \in \mathcal{N}_{\mathcal{D}, \mathcal{W},  \mathcal{B}}$
\begin{align}
& \mathcal{L}(\hat{g})-\mathcal{L}(g^{*}) \nonumber\\
 & =\mathcal{L}(\hat{g}) - \widehat{\mathcal{L}}(\hat{g}) +    \widehat{\mathcal{L}}(\hat{g})-  \widehat{\mathcal{L}}(\bar{g}) \nonumber +   \widehat{\mathcal{L}}(\bar{g}) - \mathcal{L}(\bar{g}) +\mathcal{L}(\bar{g})  - \mathcal{L}(g^*)\nonumber\\
  &\leq  2 \sup_{g \in \mathcal{N}_{\mathcal{D}, \mathcal{W},  \mathcal{B}}} |\mathcal{L}(g) - \widehat{\mathcal{L}}(g) |+ \inf_{D\in  \mathcal{N}_{ \mathcal{W},\mathcal{D},  \mathcal{B}}}\|g - g^{*}\|_{L^1(\nu)}, \label{A17}
  \end{align}
  where we use the definition of   $\hat{g}_{\phi}$ and $\bar{g}$ and 1-Lipschitz property of $\mathcal{L}(g)$ in (\ref{lossp}).
\end{proof}
Next, we bound the two terms in right hand side of Lemma \ref{errdec} separately in the following two Lemmas, respectively.
\begin{lemma}\label{empp}
$$
\mathbb{E}[ 2\sup_{g \in \mathcal{N}_{\mathcal{D}, \mathcal{W},  \mathcal{B}}} |\mathcal{L}(g) - \widehat{\mathcal{L}}(g) |]\leq  C_2\mathcal{B} \sqrt{\frac{N}{(\mathcal{D}\mathcal{W})^2\log \mathcal{\mathcal{D}\mathcal{W}}}}\log \frac{N}{(\mathcal{D}\mathcal{W})^2\log \mathcal{\mathcal{D}\mathcal{W}}} \exp^{-\log^2 \frac{N}{(\mathcal{D}\mathcal{W})^2\log \mathcal{\mathcal{D}\mathcal{W}}}}.$$
\end{lemma}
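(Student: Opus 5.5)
The plan is to repeat, essentially verbatim, the empirical-process argument used for \eqref{A8} and \eqref{A9} in the proof of Theorem \ref{ester}. The only changes are that the per-sample loss is now the weighted absolute deviation
\[
b(g,(X,z)) = \omega'|g(X)-z|,
\]
and that the output is $\mathbb{R}^5$-valued. The first step is to record the analogue of \eqref{lip}. Since $\omega = 1/5$ componentwise,
\[
|b(g,(X,z)) - b(\tilde g,(X,z))| \le \tfrac15\sum_{k=1}^5 |g_k(X)-\tilde g_k(X)| \le \max_k |g_k(X)-\tilde g_k(X)|.
\]
So the loss is 1-Lipschitz, coordinatewise in the network output.

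The second step is symmetrization, carried out exactly as in the proof of \eqref{A8}. Introduce an i.i.d. ghost sample $\{\tilde Z_j\}$ and Rademacher signs $\sigma_j$. Jensen's inequality then bounds $\mathbb{E}\sup_g|\mathcal{L}(g)-\widehat{\mathcal{L}}(g)|$ by twice the Rademacher complexity $\mathcal{R}(b\circ\mathcal{N})$.

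Next, the contraction step reduces $\mathcal{R}(b\circ\mathcal{N})$ to the complexity of the network class. Because the loss is a sum over five coordinates, I will use a vector-valued contraction (Maurer's inequality), or more simply split the sum and apply Theorem 12 of Bartlett and Mendelson coordinatewise. This gives $\mathcal{R}(b\circ\mathcal{N})\le C\sum_k \mathcal{R}(\mathcal{N}_k)$, where $\mathcal{N}_k$ is the class of $k$-th output coordinates. Each $\mathcal{N}_k$ is a scalar ReLU network class of depth $\mathcal{D}$ and width $\mathcal{W}$ bounded by $\mathcal{B}$. The constant $5$ is absorbed into $C_2$, and Lemma 4 of Bartlett and Mendelson then passes to Gaussian complexity.

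The last step bounds the Gaussian complexity of each $\mathcal{N}_k$ by the chain already displayed in the proof of \eqref{A9}:
\begin{itemize}
\item Dudley chaining (Vershynin, Thm.~8.1.3) with diameter at most $\mathcal{B}$ (or $2\mathcal{B}$, absorbed into the constant), after which the $d_{2}$ covering numbers are dominated by the $d_\infty$ ones;
\item the covering-number/pseudo-dimension bound of Anthony and Bartlett;
\item the bound $\mathrm{Pdim}\le C_5(\mathcal{D}\mathcal{W})^2\log(\mathcal{D}\mathcal{W})$ of Bartlett et al.\ (2019).
\end{itemize}
Together these reproduce the right-hand side of \eqref{A9}, and multiplying by $2$ and the constant $4C_1$ yields the claimed inequality. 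Condition (iii) of Theorem \ref{ester} ensures $N/\mathrm{Pdim}>e$, so the logarithmic factors are positive and the "some calculation" step in \eqref{A9} goes through unchanged.

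The main obstacle is the boundedness and Lipschitz bookkeeping. Unlike the logistic loss, $|g(X)-z|$ is bounded only if $z$ is bounded. I will therefore assume, as is natural for ellipse parameters on an image of fixed size, that $z$ has bounded support, so that the loss class has diameter $O(\mathcal{B})$. Chaining requires that diameter and a bounded envelope, and the $\mathcal{B}$ prefactor depends on it. The vector-valued contraction is the other point needing care. If Maurer's inequality is to be avoided, the coordinatewise split above suffices at the cost of a factor $5$.
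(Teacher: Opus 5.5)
Your route is the paper's route: its proof of this lemma is literally ``repeat \eqref{A8}--\eqref{A9}''. Your symmetrization, coordinatewise contraction and chaining steps are fine, and the coordinatewise split and the bounded-$z$ caveat are more careful than the paper. The gap is the last step, where you assert that the ``some calculation'' in \eqref{A9} goes through unchanged. It does not. Write $P=(\mathcal{D}\mathcal{W})^2\log(\mathcal{D}\mathcal{W})$, so that $\mathrm{Pdim}_{\mathcal{N}}\le C_5P$, and substitute $\delta=\mathcal{B}u$. The entropy integral then gives, up to constants,
\[
\frac{C_3}{\sqrt N}\int_0^{\mathcal{B}}\sqrt{\mathrm{Pdim}_{\mathcal{N}}\log\frac{2e\mathcal{B}N}{\delta\,\mathrm{Pdim}_{\mathcal{N}}}}\,d\delta
\;\lesssim\; \mathcal{B}\sqrt{\frac{P}{N}}\int_0^1\sqrt{\log\frac{2eN}{uP}}\,du
\;\lesssim\; \mathcal{B}\sqrt{\frac{P}{N}\log\frac{N}{P}} .
\]
This is $\sqrt{P/N}$, not $\sqrt{N/P}$, and there is no factor $\exp(-\log^2(N/P))$. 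Put $y=\log(N/P)$; condition (iii) gives $y>1$. The right-hand side in the statement equals this correct bound times $\sqrt{y}\,e^{y-y^2}$, which is $<1$ and decays faster than any power of $N/P$.

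In fact the inequality as stated is false, so no argument can close this step. Take a single fixed $g_0\in\mathcal{N}_{\mathcal{D},\mathcal{W},\mathcal{B}}$ with $\mathrm{Var}(\omega'|g_0(X)-z|)>0$. Then
\[
\mathbb{E}\sup_{g}|\mathcal{L}(g)-\widehat{\mathcal{L}}(g)|\ge\mathbb{E}|\mathcal{L}(g_0)-\widehat{\mathcal{L}}(g_0)|\gtrsim N^{-1/2},
\]
which exceeds the stated bound once $N/P$ is moderately large. This happens for fixed architecture as $N\to\infty$, and also under $\mathcal{D}\mathcal{W}=\mathcal{O}(N^{d/(2d+4)})$, since then $N/P\to\infty$. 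What your argument (and the paper's) actually proves is
\[
\mathbb{E}\Bigl[2\sup_{g}|\mathcal{L}(g)-\widehat{\mathcal{L}}(g)|\Bigr]\le C\mathcal{B}\sqrt{\frac{(\mathcal{D}\mathcal{W})^2\log(\mathcal{D}\mathcal{W})}{N}\log\frac{N}{(\mathcal{D}\mathcal{W})^2\log(\mathcal{D}\mathcal{W})}} .
\]
This corrected form is exactly what the final display in the proof of Theorem~\ref{esterlad} uses, via the term $\sqrt{(\mathcal{D}\mathcal{W})^2\log(\mathcal{D}\mathcal{W})/N}\,\log N$, so the theorem's rate survives. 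You should state and prove this version rather than claim the displayed one.
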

\begin{proof}
The same proof process as for deriving (\ref{A8})-(\ref{A9})
\end{proof} 

\begin{lemma}\label{applad}
For any function $g^*:[-B,B]^d\rightarrow \mathbb{R}$ with Lipschitz constant $L$
there exist a ReLU network $\bar{g}$ with depth $\mathcal{O}(12\mathcal{D}+C_{1,d}) $ and width $\mathcal{O}(C_{2,d} \mathcal{W})$
such that
\begin{equation*}
\inf_{{g} \in \mathcal{N}_{\mathcal{W},\mathcal{D},\mathcal{B}}} \|g^*-{g}\|_{L^{1}(\mu)} \leq \|g^*-\bar{g}\|_{L^{\infty}} \leq 19L\sqrt{d} B (\mathcal{D}\mathcal{W})^{-2/d},
\end{equation*}
where $C_{1,d} = 14+2d$, $C_{2,d} = 3^{d+3}.$
\end{lemma}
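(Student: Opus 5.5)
The plan is to reduce Lemma \ref{applad} to the approximation result already used for Theorem \ref{ester}, namely Lemma \ref{shen} (Theorem 1.1 of Shen et al.\ 2019). There are three steps: handle the vector-valued target $g^*:\mathbb{R}^d\to\mathbb{R}^5$ one coordinate at a time, compare the $L^1(\mu)$ error with the $L^\infty$ error, and check that the constructed network lies in the constrained class $\mathcal{N}_{\mathcal{W},\mathcal{D},\mathcal{B}}$.

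\textbf{Coordinatewise approximation.} Since $g^*$ is $L$-Lipschitz, each coordinate $g^*_k$, $k=1,\dots,5$, is also $L$-Lipschitz on $[-B,B]^d$. Lemma \ref{shen} then gives, for each $k$, a ReLU network $\bar g_k$ of depth $\mathcal{O}(12\mathcal{D}+C_{1,d})$ and width $\mathcal{O}(C_{2,d}\mathcal{W})$ with
\[
\|g^*_k-\bar g_k\|_{L^\infty}\le 19L\sqrt{d}B(\mathcal{D}\mathcal{W})^{-2/d}.
\]
Stacking the five subnetworks in parallel gives $\bar g=(\bar g_1,\dots,\bar g_5)$. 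This has the same depth and width multiplied by $5$, which is absorbed into the $\mathcal{O}(\cdot)$. If instead $\|\cdot\|_{L^\infty}$ is read as the maximum over coordinates, the same bound holds for $\bar g$ directly.

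\textbf{From $L^\infty$ to $L^1$, and membership in the class.} With the weight normalization $\omega=1/5$ (or any weights summing to one), the weighted $L^1$ norm satisfies $\int \omega'|g^*-\bar g|\,d\mu\le \sup_x\max_k|g^*_k(x)-\bar g_k(x)|$, because $\mu$ is a probability measure supported in $[-B,B]^d$. This gives the first inequality, once $\bar g$ is shown to be admissible, i.e.\ $\bar g\in\mathcal{N}_{\mathcal{W},\mathcal{D},\mathcal{B}}$. Membership needs the bound $\|\bar g\|_\infty\le\mathcal{B}$. On the compact cube, $\|g^*\|_\infty\le |g^*(0)|+L\sqrt{d}B$, so $\|\bar g\|_\infty\le |g^*(0)|+L\sqrt d B+19L\sqrt dB$, and we choose $\mathcal{B}$ at least this large. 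If $\mathcal{B}$ is fixed in advance, we instead compose with the clipping map $t\mapsto \min(\max(t,-\mathcal{B}),\mathcal{B})$. This map is realizable by two extra ReLU layers and does not increase the pointwise error, provided $|g^*|\le\mathcal{B}$.

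\textbf{Main obstacle.} The delicate point is architectural. Lemma \ref{shen} produces a fully connected ReLU network, whereas $\mathcal{N}$ is meant to model the CNN. In keeping with the paper's treatment, I would take $\mathcal{N}_{\mathcal{W},\mathcal{D},\mathcal{B}}$ to be indexed by depth and width with depth $\mathcal{O}(12\mathcal{D}+C_{1,d})$ and width $\mathcal{O}(C_{2,d}\mathcal{W})$. The required observation is that a fully connected layer is a special case of the fully connected final layers, or equivalently of convolutions with full-size kernels. This allows the approximating network to be embedded. Once that embedding is granted, the lemma follows immediately from the chain $\inf_g\|g^*-g\|_{L^1(\mu)}\le\|g^*-\bar g\|_{L^1(\mu)}\le\|g^*-\bar g\|_{L^\infty}$.
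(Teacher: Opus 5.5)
Your proposal is correct and matches the paper's proof. That proof consists solely of the bound $\inf_{g}\|g^*-g\|_{L^1(\mu)}\le\inf_{g}\|g^*-g\|_{L^\infty}$ for a probability measure, combined with Lemma~\ref{shen}. Your extra bookkeeping (stacking coordinates for the $\mathbb{R}^5$-valued target, clipping to enforce $\|\bar g\|_\infty\le\mathcal{B}$, and reading the class with the inflated depth and width so that $\bar g$ is admissible) only makes explicit what the paper leaves implicit. The CNN-embedding concern does not arise, because the paper defines $\mathcal{N}_{\mathcal{D},\mathcal{W},\mathcal{B}}$ simply as a class of ReLU networks.
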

\begin{proof}
Since $
\inf_{{g} \in \mathcal{N}_{\mathcal{W},\mathcal{D},\mathcal{B}}} \|g^*-{g}\|_{L^{1}(\mu)}\leq 
\inf_{{g} \in \mathcal{N}_{\mathcal{W},\mathcal{D},\mathcal{B}}} \|g^*-{g}\|_{L^{\infty}}
$, Lemma \ref{applad} fellows from Lemma \ref{shen}.
\end{proof}

Now, combing the results in Lemma \ref{errdec}-Lemma \ref{applad},
 we have
 \begin{align*}
 & \mathbb{E}[\mathcal{L}(\hat{g})-\mathcal{L}(g^*)]\\
&\leq C_{d,\mathcal{B}}(\sqrt{\frac{N}{(\mathcal{D}\mathcal{W})^2\log \mathcal{\mathcal{D}\mathcal{W}}}}\log \frac{N}{(\mathcal{D}\mathcal{W})^2\log \mathcal{\mathcal{D}\mathcal{W}}} \exp^{-\log^2 \frac{N}{(\mathcal{D}\mathcal{W})^2\log \mathcal{\mathcal{D}\mathcal{W}}}}+     19L\sqrt{d} B (\mathcal{D}\mathcal{W})^{-2/d})\\
&\leq C_{d,\mathcal{B},B,L} (\sqrt{\frac{(\mathcal{\mathcal{D}\mathcal{W}})^2\log(\mathcal{W}\mathcal{D})}{N}}\log N+(\mathcal{\mathcal{D}\mathcal{W}})^{-2/d})\\
& \leq  C_{d,\mathcal{B},B,L}N^{-1/(2+ d)}\log^{3/2}N,
 \end{align*}
 where, the last  inequality holds since we set  $\mathcal{D}\mathcal{W} = \mathcal{O}(N^{d/(2d+4)})$  and some algebra.

 Next, we show part (ii) of the theorem. Since we set $\mathcal{B} = 2\|g^*\|_{L^{\infty}}$, then
$\|\hat{g}-g^*\|_{L_2(\nu)}^2
\leq \min \{ \|\hat{g}-g^*\|_{L_2(\nu)}^2, 1.5\mathcal{B}\|\hat{g}-g^*\|_{L_1(\nu)}\}$.
By the above display and the additional calibration assumption, we can show (similar as that in \citealt{belloni2011L1, madrid2022risk})
\begin{equation}\label{calb}
\|\hat{g}-g^*\|_{L^2(\nu)}^2\leq C_{c_1,c_2,\mathcal{B}} (\mathcal{L}(\hat{g}) -\mathcal{L}(g^*)),
\end{equation}
where $C_{c_1,c_2,\mathcal{B}} $ is a constant only depends on $c_1,c_2, \mathcal{B}.$ 
This immediately implies that
$$\mathbb{E}_{(X_i,z_i)_{i=1}^N}[\|\hat{g}-g^*\|_{L^2(\nu)}^2] = \mathcal{O}(N^{-1/(2+ d)}\log^{3/2}N). $$

\vspace{10pt}

Lastly, we make a remark concerning the convergence rate established in Theorem \ref{esterlad} for $\mathbb{E}\|\mathscr{G}(\cdot; \widehat{\mathcal{U}}) - g^*\|_{L^2(\nu)}^2$. It is slower than the minimax optimal rate as established in Theorem \ref{ester}. This is due to the technical difficulty brought up by the $L_1$ loss. If the least square loss is adopted and the objective function in the minimization, we can still obtain the same optimal rate according to our above proof. 

{\color{black}
\subsubsection*{Proof of Theorem \ref{esternoisy}.}
We use $(X,\tilde{Y})\sim \tilde{p}(x,y)$ to denote the noisy random pair of $(X,\tilde{Y})$ whose distribution is $p(x,y)$ in Lemma \ref{lem1}. Define 
\begin{equation}\label{lspnoisy}
\tilde{\mathcal{L}}(D) =  \mathbb{E}_{X}[\mathbb{E}_{\tilde{Y}|X}[\tilde{Y}\log (1+\exp^{-D(X)}) + (1-\tilde{Y})\log(1+\exp^{D(X)})]].
\end{equation}  
By some routine calculations, we have  $$
\tilde{Y}=
\left\{
  \begin{array}{ll}
    1, & \mathrm{with \quad probablity} \quad (1-\delta)\sigma(f^{*}(X))+\delta(1-\sigma(f^*{X})); \\
    0, & \mathrm{with \quad probability} \quad  (1-\delta)(1-\sigma(f^{*}(X)))+\delta\sigma(f^*{X}).
  \end{array}
\right.
$$
Plugging the above expression into (\ref{lspnoisy}) and using the definition of 
$\mathcal{L}(D)$ in (\ref{lsp}) we have
$$\tilde{\mathcal{L}}(D)={\mathcal{L}}(D)+\mathbb{E}_{X}[\log(\frac{1+\exp^{-D(X)}}{1+\exp^{D(X)}})\delta(1-2\sigma(f^{*}(X)))].$$
It follows from the boundedness of $D$ that 
\begin{equation}\label{relation}
|\tilde{\mathcal{L}}(D)-{\mathcal{L}}(D)|\leq |\exp^{D(X)}+\exp^{D(X)}||1-2\sigma(f^{*}(X))|\delta \leq 3(1+C_{2,\mathcal{B}})\delta.
\end{equation}
Let \begin{align}\label{lrnoisy}
\tilde{\widehat{D}}_{\phi}({x}) &=\arg\min\limits_{D_{\phi} \in \mathcal{N}_{\mathcal{D}, \mathcal{W},  \mathcal{B}}} \widehat{\mathcal{L}}(D_{\phi})\nonumber\\
& = \frac{1}{N} \sum_{i=1}^N \tilde{Y}_i\log (1+\exp^{-D_{\phi}(X_i)}) + (1-\tilde{Y}_i)\log(1+\exp^{D_{\phi}(X_i)}),
\end{align}
where $\{(X_i,\tilde{Y}_i)\}$ are i.i.d. random pairs with distribution $\tilde{p}(x,y)$.
By the same arguments in Theorem \ref{ester}, we can derive 
$$\tilde{\mathcal{L}}(\tilde{\widehat{D}})-\tilde{\mathcal{L}}(f^*)
=\tilde{\mathcal{L}}(\tilde{\widehat{D}})-\tilde{\mathcal{L}}(D^*)\leq \mathcal{O}(N^{-2/(2+ d)}\log^{3/2}N).$$
Combing the above equation, (\ref{relation}) and (\ref{A6}), we have 
$$C_{1,\mathcal{B}}\|\tilde{\widehat{D}}-D^*\|^{2}_{L^{2}(\nu)}\leq \mathcal{L}(\tilde{\widehat{D}})-\mathcal{L}(D^*)\leq \tilde{\mathcal{L}}(\tilde{\widehat{D}})-\tilde{\mathcal{L}}(D^*)+6(1+C_{2,\mathcal{B}})\delta\leq \mathcal{O}(N^{-2/(2+ d)}\log^{3/2}N)+\mathcal{O}(\delta).$$
}
{\color{black}
\subsubsection*{Proof of Theorem \ref{thm:transfer}.}
Next, we prove Theorem \ref{thm:transfer}. We first introduce some notations. For a given sequence $\bm{x} = \{x_t: t \geq t \}$, denote the corresponding EWMA transformation by $\mathcal{E}(x_t)$. That is,
\begin{align*}
    \mathcal{E}(x_0) = 0, \quad \mathcal{E}(x_t) = \max\left\{0, \lambda x_t + (1 - \lambda) \mathcal{E}(x_{t - 1})\right\}, \; \text{ for } t \geq 1.
\end{align*}
Let $A_t$ denote the subset in $\mathbb{R}^t$ defined by 
\begin{align*}
    A_t = \left\{(x_1, \ldots, x_t) \in \mathbb{R}^t: \mathcal{E}(x_1) \leq \rho\sqrt{\frac{\lambda\sigma^2_{\texttt{IC}}}{2 - \lambda}}, \ldots, \mathcal{E}(x_{t - 1}) \leq \rho\sqrt{\frac{\lambda\sigma^2_{\texttt{IC}}}{2 - \lambda}}, \mathcal{E}(x_t) > \rho\sqrt{\frac{\lambda\sigma^2_{\texttt{IC}}}{2 - \lambda}} \right\}.
\end{align*}
Let $F_0$ denote the cumulative distribution function (CDF) of $\log\left(\mathscr{F}(\bm{X}; \mathcal{V})/(1 - \mathscr{F}(\bm{X}; \mathcal{V}) \right)$ when $\bm{X}$ is IC. Let $\mathcal{B}(F; \epsilon)$ denote the open ball of radius $\epsilon$ with the $L^\infty$ norm.

\noindent\underline{The Run Length Regularity Condition}.
Let $\mathscr{S}$ be a collection of linear combinations of CDFs such that 
\begin{align}
    \label{eq:transfer-condition}
    \left| \int_{A_t} \prod_{j=1}^t d H_j(x_j) \right| = \prod_{j=1}^t \| H_j \|_\infty \mathcal{O}\left( \frac{1}{t^{3 + \xi}}\right), \; t = 1, 2, \ldots,
\end{align}
where $H_j \in \mathscr{S}$ for $j = 1, 2, \ldots$, $\| \cdot \|_\infty$ is the $L^\infty$ norm, and $\xi > 0$ is a constant. 

\begin{lemma}
    \label{lemma:gateaux}
    Let $F$ be a CDF. Assume that $\mathscr{S}$ contains an open neighborhood of $F$ and the origin. Define the functional $\mathscr{M}(F) = \sum_{t = 1}^\infty t \int_{A_t} \prod_{j = 1}^t d F(x_j)$. Then the Gateaux differential of $\mathscr{M}(F)$ is 
    \begin{align*}
        \mathcal{D}_g \mathscr{M}(F; G) = \sum_{t = 1}^\infty t \eta_t(F; G) = \sum_{t = 1}^\infty t \left[ \sum_{k = 1}^t \int_{A_t} \prod_{ j = 1}^{k -1} d F(x_j) d G(x_k) \prod_{j = k + 1}^t d F(x_j) \right],
    \end{align*}
    for any $G \in \mathscr{S}$.
\end{lemma}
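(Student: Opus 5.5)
The plan is to compute the Gateaux differential directly from its definition, $\mathcal{D}_g \mathscr{M}(F;G) = \lim_{s\to 0} [\mathscr{M}(F+sG) - \mathscr{M}(F)]/s$. The first step is to justify that $\mathscr{M}$ is well defined on the whole segment $\{F + sG: |s| \le s_0\}$. Since $\mathscr{S}$ contains an open $L^\infty$-neighborhood $\mathcal{B}(F;\epsilon)$ of $F$ and a neighborhood of the origin, I will pick $s_0>0$ so small that $F + sG \in \mathscr{S}$ for all $|s|\le s_0$. Then the run length regularity condition \eqref{eq:transfer-condition} gives
\[
\Bigl|\int_{A_t}\prod_{j=1}^t d(F+sG)(x_j)\Bigr| \le C\,\|F+sG\|_\infty^t\, t^{-3-\xi}.
\]
With $\|F\|_\infty = 1$ and a small $s$, the factor $\|F+sG\|_\infty^t$ could in principle blow up geometrically. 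To handle this I will not expand the full product this way; instead I will apply \eqref{eq:transfer-condition} to each multilinear term separately.

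The second step is to expand each summand. By multilinearity of the product measure,
\[
\int_{A_t}\prod_{j=1}^t d(F+sG)(x_j) = \sum_{S\subseteq\{1,\dots,t\}} s^{|S|}\int_{A_t}\prod_{j\notin S} dF(x_j)\prod_{j\in S} dG(x_j).
\]
The $S=\emptyset$ term is the corresponding term of $\mathscr{M}(F)$. The $|S|=1$ terms sum to exactly $s\,\eta_t(F;G)$. For $|S|=k\ge 2$, I will apply \eqref{eq:transfer-condition} with $H_j\in\{F,G\}$, both of which lie in $\mathscr{S}$, to bound each term by $C|s|^k\|G\|_\infty^k t^{-3-\xi}$. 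Summing over $S$ then gives a remainder of at most $C t^{-3-\xi}\bigl[(1+|s|\|G\|_\infty)^t - 1 - t|s|\|G\|_\infty\bigr]$.

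The third step is the interchange of limit and sum, which I expect to be the main obstacle. The claimed formula requires two things:
\begin{itemize}
\item $\sum_t t\,|\eta_t(F;G)| < \infty$, which follows from $|\eta_t|\le C t\cdot t^{-3-\xi}\|G\|_\infty$, so that $t|\eta_t| = O(t^{-1-\xi})$ is summable;
\item $s^{-1}\sum_t t\cdot(\text{remainder}_t)\to 0$.
\end{itemize}
The crude binomial bound above grows exponentially in $t$, so it is not enough on its own. I plan to control the remainder by truncation. For $t\le T$, the remainder is $O(s^2 t^2 e^{t|s|\|G\|})\,t^{-3-\xi}$, so after multiplying by $t/s$ the contribution is $O(|s|\sum_{t\le T} t^{-\xi}e^{t|s|\|G\|})$. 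Choosing $T = |s|^{-1/2}$ makes this $o(1)$. For the tail $t>T$, I will instead use the regularity condition applied to the whole measure $F+sG\in\mathscr{S}$, together with the fact that $F+sG$ stays in a fixed ball on which $\|F+sG\|_\infty$ is bounded near $1$. If the $O(\cdot)$ constant in \eqref{eq:transfer-condition} is read as uniform over that neighborhood (the intended reading), the tails of $\mathscr{M}(F+sG)$, $\mathscr{M}(F)$, and $s\sum t\eta_t$ are each $o(s)$ as $T\to\infty$, via dominated convergence on $\sum_t t\cdot t^{-3-\xi}$.

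Combining the three steps yields $\mathscr{M}(F+sG)-\mathscr{M}(F) = s\sum_t t\,\eta_t(F;G) + o(s)$, which is the stated differential. If the tail control proves delicate, my fallback is to invoke dominated convergence directly on the series. Each term is a polynomial in $s$, and the regularity condition on $\mathscr{S}$ supplies a summable majorant $t\cdot t^{-3-\xi}$ for the difference quotients uniformly in $|s|\le s_0$.
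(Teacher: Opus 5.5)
You have the same approach as the paper: expand $\mathscr{M}(F+sG)$ multilinearly, isolate the linear term, and pass to the limit. Your bound on the $|S|\ge 2$ terms and your head estimate for $t\le T$ are correct. You are also right that interchanging $\lim_{s\to0}$ with $\sum_t$ is the crux. The paper's proof never addresses it: it drops the ``$\tau^2$'' remainder termwise and checks only $\sum_t t|\eta_t(F;G)|<\infty$.

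\textbf{The gap is in your tail step.} Applying \eqref{eq:transfer-condition} with every $H_j=F+sG$ gives $|\int_{A_t}\prod_j d(F+sG)(x_j)|\le C\|F+sG\|_\infty^{t}\,t^{-3-\xi}$. Knowing that $\|F+sG\|_\infty$ is close to $1$ does not help, because any excess over $1$ compounds geometrically. For example, $G=\epsilon F'$ with $F'$ a CDF lies in $\mathscr{S}$, since $\mathscr{S}$ contains a neighborhood of the origin. Then $\|F+sG\|_\infty=1+s\epsilon$ for $s>0$, and $\sum_{t>T}t\,(1+s\epsilon)^t t^{-3-\xi}=\infty$ for every $s>0$. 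This is the blow-up you flagged in your first step, reappearing in the tail. It also defeats your fallback: no majorant $t\cdot t^{-3-\xi}$ uniform in $|s|\le s_0$ follows from the hypothesis, which does not even guarantee $\mathscr{M}(F+sG)<\infty$.

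There is a secondary problem. Even with a uniform bound $Ct^{-3-\xi}$, the separate tails of $\mathscr{M}(F+sG)$ and $\mathscr{M}(F)$ beyond $T=|s|^{-1/2}$ are of order $T^{-1-\xi}=|s|^{(1+\xi)/2}$. That is $o(|s|)$ only when $\xi>1$. Dominated convergence in $T$ gives $o(1)$, not $o(s)$.

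\textbf{How to repair it.} The argument works under the restriction the paper implicitly uses in the proof of Theorem \ref{thm:transfer}. Take a one-sided limit $s\downarrow 0$, as the paper does, along directions with $\|F+sG\|_\infty\le 1$ for $s\in[0,s_0]$. An example is $G=F'-F$ with $F'$ a CDF, so that $F+sG$ is a mixture of CDFs. In that regime you need no truncation. Use the telescoping identity
\begin{align*}
\int_{A_t}\prod_{j=1}^t d(F+sG)(x_j)-\int_{A_t}\prod_{j=1}^t dF(x_j)=s\sum_{k=1}^t\int_{A_t}\prod_{j<k}d(F+sG)(x_j)\,dG(x_k)\prod_{j>k}dF(x_j),
\end{align*}
together with \eqref{eq:transfer-condition} for $H_j\in\{F+sG,\,G,\,F\}\subset\mathscr{S}$. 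This shows that $t$ times the $t$-th difference quotient is bounded by $C\|G\|_\infty t^{-1-\xi}$, uniformly in $s\in(0,s_0]$. Each such quotient is a polynomial in $s$, so it converges to $t\,\eta_t(F;G)$. Dominated convergence for series then gives the stated differential. The same bound with all $H_j=F+sG$ gives $\mathscr{M}(F+sG)<\infty$.

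Without such a restriction, or a stronger decay assumption (e.g.\ geometric) that absorbs $\|F+sG\|_\infty^t$, the interchange cannot be justified for arbitrary $G\in\mathscr{S}$ from the stated hypotheses. That holds for your route and for the paper's.
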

\begin{proof}
    Write 
    \begin{align*}
        \mathscr{M}(F + \tau G) &= \sum_{t = 1}^\infty t \int_{A_t} \prod_{j = 1}^t \left[ d F(x_j) + \tau d G(x_j)\right] \\
        & = \sum_{t = 1}^\infty t \int_{A_t} \prod_{j = 1}^t d F(x_j) + \tau \left\{\sum_{t = 1}^\infty t \left[ \int_{A_t} dG(x_1) \cdots dF(x_t) + \cdots + \int_{A_t} dF(x_1) \cdots d G(x_t) \right] \right\} + \cdots,
    \end{align*}
    where $\cdots$ denotes the remainder that involves $\tau^2$. Continuing with the Gateaux differential calculation, we have 
    \begin{align*}
        \lim_{\tau \rightarrow 0+} \frac{\mathscr{M}(F + \tau G) - \mathscr{M}(F)}{\tau} = \sum_{t = 1} ^\infty t \eta_t(F; G).
    \end{align*}
    To see that the summation in $\mathcal{D}_g\mathscr{M}(F; G)$ is finite, write 
    \begin{align*}
        \left| \eta_t(F; G) \right| \leq t \| F \|_\infty^{t - 1} \| G \|_\infty \frac{C}{t^{3 + \xi}} \leq C \| G \|_\infty \frac{1}{t^{2 + \xi}},
    \end{align*}
    where $C$ is a constant. Here we have used the fact that $F$ is a CDF and thus $\|F \|_\infty = 1$. It follows that 
    \begin{align*}
        \sum_{t = 1}^\infty t |\eta_t(F; G) | \leq C \| G \|_\infty\sum_{t = 1}^\infty \frac{1}{t^{1 + \xi}} < \infty. 
    \end{align*}
\end{proof}

It is easy to verify that $\mathrm{ARL}(\rho) = \mathscr{M}(F_0)$. Therefore, $F_0 \in \mathscr{S}$ effectively ensures that the average run length is finite and the second moment of the run length also exists. We are now ready to prove Theorem \ref{thm:transfer}.

\subsubsection*{Proof of Theorem \ref{thm:transfer}}. The bootstrap algorithm amounts to sampling from the IC empirical distribution. Let $F_{n_{\texttt{IC}}}$ denote the empirical version of $F_0$. Then we have $\widehat{\mathrm{ARL}}(\rho) = \mathscr{M}(F_{n_{\texttt{IC}}})$. It follows from Lemma \ref{lemma:gateaux} that  
\begin{align*}
    \left| \widehat{\mathrm{ARL}}(\rho) - \mathrm{ARL}(\rho) \right| &= \left| \mathscr{M}(F_{n_{\texttt{IC}}}) - \mathscr{M}(F_{0}) \right| \\
    &= \left| \int_0^1 \mathcal{D}_g \mathscr{M} \left(F_0 + \gamma(F_{n_{\texttt{IC}}} - F_0); F_{n_{\texttt{IC}}} - F_0 \right) \; d\gamma \right| \\
    &\leq \sum_{t = 1}^\infty t \left| \int_0^1 \eta_t \left(F_0 + \gamma(F_{n_{\texttt{IC}}} - F_0); F_{n_{\texttt{IC}}} - F_0 \right) \; d\gamma \right| \\
    &\leq \sum_{t = 1}^\infty t \int_0^1 C \cdot \frac{t \left\| F_0 + \gamma(F_{n_{\texttt{IC}}} - F_0) \right\|^{t - 1}_\infty \left\| F_{n_{\texttt{IC}}} - F_0 \right\|_\infty}{t^{3 + \xi}} \; d\gamma.
\end{align*}
Since $ \sup_{0 \leq \gamma \leq 1}\left\| (1 - \gamma) F_0 + \gamma F_{n_{\texttt{IC}}} \right\|_\infty \leq 1$, we have that 
\begin{align*}
    \left| \widehat{\mathrm{ARL}}(\rho) - \mathrm{ARL}(\rho) \right| = \mathcal{O}\left( \left\| F_{n_{\texttt{IC}}} - F_0  \right\|_\infty \right). 
\end{align*}
By the Dvoretzky-Kiefer-Wolfowitz Theorem, we have $\sqrt{n_{\texttt{IC}}}\left\| F_{n_{\texttt{IC}}} - F_0  \right\|_\infty = \mathcal{O}_p(1)$. This completes the proof. 
}

\section*{References Cited in the Proofs}

\begin{flushleft}
\par \noindent
Anthony, M., P. L. Bartlett. 1999. Neural network learning: Theoretical foundations, vol. 9. {\it Cambridge
University Press}, Cambridge.
\par \noindent
Bartlett, P. L., N. Harvey, C. Liaw, A. Mehrabian. 2019. Nearly-tight vc-dimension and pseudodimension bounds for piecewise linear neural networks. \textit{The Journal of Machine Learning Research} 20(1) 2285–2301.
\par \noindent
Bartlett, P. L., S. Mendelson. 2002. Rademacher and gaussian complexities: Risk bounds and structural
results. \textit{Journal of Machine Learning Research} 3(Nov) 463–482.
\par \noindent
Belloni, A., V. Chernozhukov. 2011. L1-penalized quantile regression in high-dimensional sparse models.
\textit{The Annals of Statistics} 39 82–130.
\par \noindent
Madrid Padilla, O. H., S. Chatterjee. 2022. Risk bounds for quantile trend filtering. \textit{Biometrika} 109(3) 751–768.
\par \noindent
Shen, Z., H. Yang, S. Zhang. 2019. Deep network approximation characterized by number of neurons.
arXiv preprint arXiv:1906.05497.
\par \noindent
Vershynin, R. 2018. High-dimensional probability: An introduction with applications in data science,
vol. 47. {\it Cambridge University Press}, Cambridge.
\end{flushleft}

\section{Supplemental Details for the Numerical Studies}\label{app:num}

\subsection{Details for Section \ref{sec:5.1}}\label{app:dagm}
First, we show a sample of DAGM images for a better understanding of this benchmark dataset. Figure \ref{fig:sample} shows the image sample. It can be seen from the figure that some surface defects are hard for human eyes to discern. Notably, all these background textures have certain random patterns involved (i.e., no gold standard), and some of them clearly do not meet the locality and stationarity assumptions of the  Markov Field approach (e.g., class 4, 7 and 10). 

\begin{figure}[ht!]
  \centering
\caption{Sample DAGM images: examples of IC and OC images.}
  \label{fig:sample} 
  \begin{tabular}{cccc}
    \includegraphics[width = 0.23\textwidth]{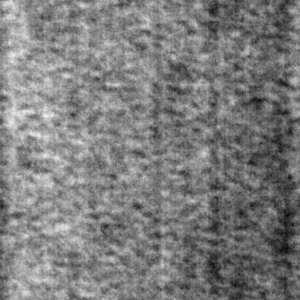} &
    \includegraphics[width = 0.23\textwidth]{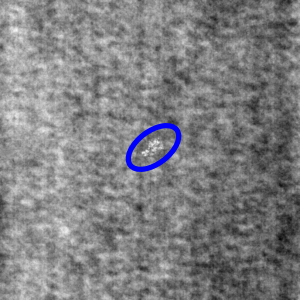} &
    \includegraphics[width = 0.23\textwidth]{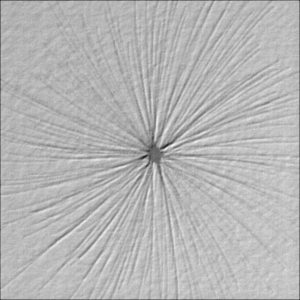} &
    \includegraphics[width = 0.23\textwidth]{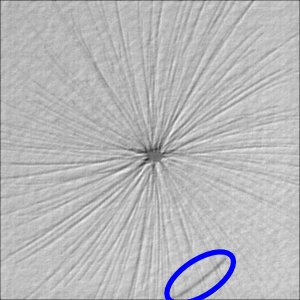} \\
    \includegraphics[width = 0.23\textwidth]{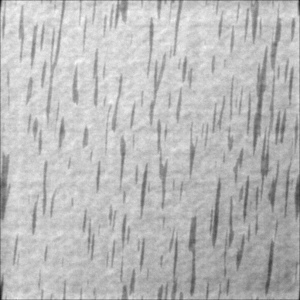} &
    \includegraphics[width = 0.23\textwidth]{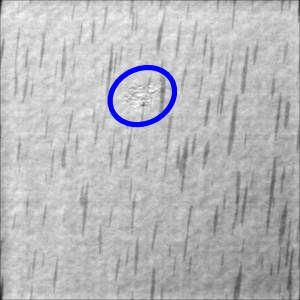} &
    \includegraphics[width = 0.23\textwidth]{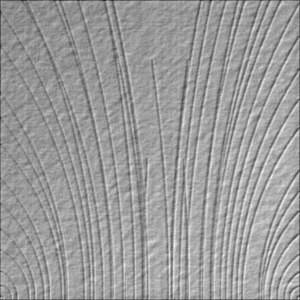} &
    \includegraphics[width = 0.23\textwidth]{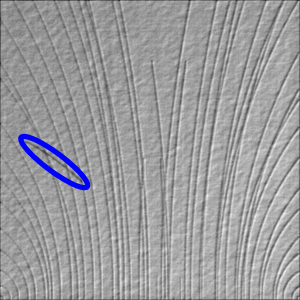} 
  \end{tabular}
 \vspace{8pt}
\fignote{Four groups of IC/OC images from class 3, 4, 7 and 10 in DAGM are shown Left-to-right and top-to-bottom.} 
\end{figure}

Second, to show more details about how we choose the stopping epoch for training MODERN-Net, we show the training progression for both networks $\mathscr{F}(\cdot; \mathcal{V})$ and $\mathscr{G}(\cdot; \mathcal{U})$ in Figure \ref{fig:epochs}. 
\begin{figure}[htp]
  \centering
    \caption{The training progression of $\mathscr{F}(\cdot; \mathcal{V})$ (left) and $\mathscr{G}(\cdot; \mathcal{U})$ (right) as the number of epochs increases.}
  \label{fig:epochs}
  \begin{tabular}{cc}
    \includegraphics[width = 0.48\textwidth, height = 0.28\textheight]{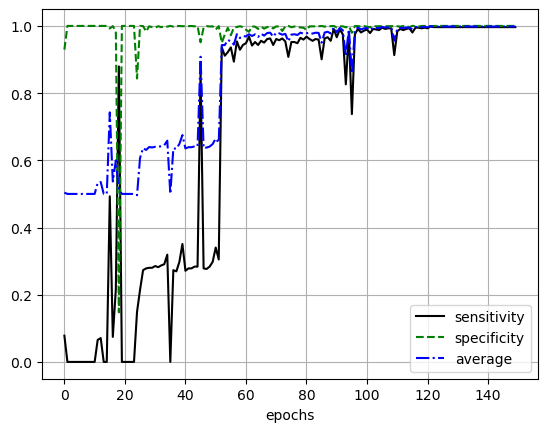} & \includegraphics[width = 0.48\textwidth, height = 0.28\textheight]{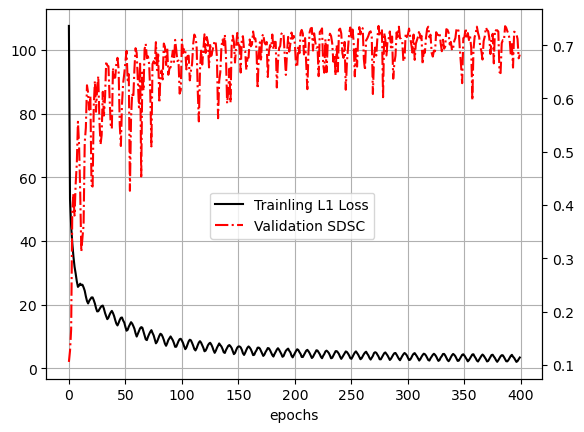} \\
  \end{tabular}
\end{figure}
For $\mathscr{F}(\cdot; \mathcal{V})$, the progression stabilizes from epoch 120, and thus we choose epoch 150 to stop training; on the other hand, for $\mathscr{G}(\cdot; \mathcal{U})$, the training progression stabilizes from epoch 250, and we stop at epoch 275.

\subsection{Details for Section \ref{sec:5.3}}\label{app:sim}

First, we provide some examples of the images simulated by the AR model described in Section \ref{sec:5.3}. Figure \ref{appfig:AR} shows an IC image along with two types of OC images to serve this purpose. It is clear that these images satisfy the Markov Field conditions required by the MF method. 
\begin{figure}[ht!]
\centering
\caption{Representative images in the AR simulated example.} \label{appfig:AR}
\begin{subfigure}[t]{.32\linewidth}
\centering
			 \includegraphics[width = \linewidth]{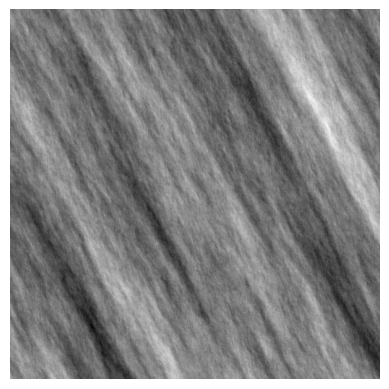}
             \caption{An IC image.}  
\end{subfigure}
~ 
\begin{subfigure}[t]{.32\linewidth}
\centering
			\includegraphics[width=\linewidth]{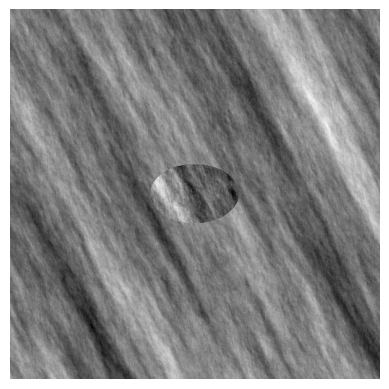} %
            \caption{A type-1 OC image.}  
\end{subfigure}
\begin{subfigure}[t]{.32\linewidth}
\centering
			\includegraphics[width=\linewidth]{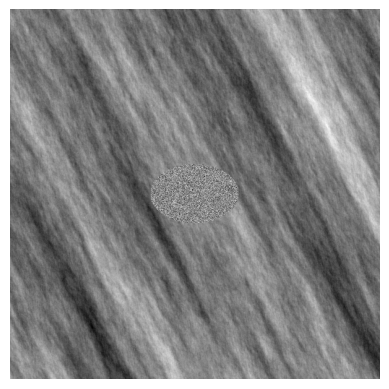} %
            \caption{A type-2 OC image.}  
\end{subfigure}
\end{figure}
Then, we assess the extent to which our MODERN-Net $\mathscr{F}(\cdot; \widehat{\mathcal{V}})$ is applicable to the AR images. After simulating 500 IC images and the two types of OC images (500 each type), we apply the applicability test to the images and obtain t-test statstics of values $415.13$ and $464.59$ for type-1 and type-2 faults, respectively, both rendering p-values effectively $0$. Note that, in addition to calculating the p-value, one can plot a histogram and observe whether $\mathscr{F}(\cdot; \widehat{\mathcal{V}})$ is able to separate the IC and OC images reasonably well. Such a visualization tool will greatly help with the assessment. Here, Figure \ref{fig:histAR} shows the histogram of the applicability test. It can be seen that the network is capable of distinguishing the IC and OC images quite well even though it has not been trained on any of these AR images. This is consistent with our p-value results. 
\begin{figure}[htp]
  \centering
  \caption{The histogram of the applicability test in which $\mathscr{F}(\cdot; \widehat{\mathcal{V}})$ is applied to 500 IC images and the two types of OC images (500 each type) generated by the AR model.}
  \label{fig:histAR}
  \includegraphics[width = 0.5\textwidth]{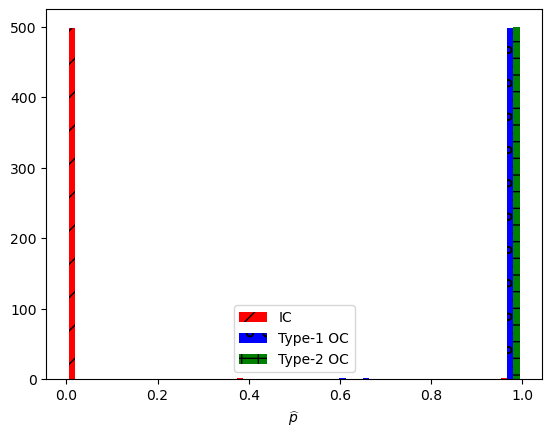}
\end{figure}
Lastly, Figure \ref{fig:AR} shows the fault isolation results corresponding to the case of median SDSC value in the repeated simulations. The MF approach did not detect any defective pixels in the case of type-1 fault, so its image is not shown in the figure. The true faulty area is marked by the solid blue ellipse. The MF approach detects faults in a pixel-by-pixel fashion; therefore, any pixels deemed abnormal are in black. It can be seen that our network $\mathscr{G}(\cdot; \widehat{\mathcal{U}})$ locates the faulty region accurately in both cases, whereas the MF approach can only detect type-2 fault (although also reasonably well).  
\begin{figure}[htp]
  \centering
  \caption{Fault isolation results give by MODERN-Diagnosis and the MF method in simulations.}
  \label{fig:AR}
\begin{subfigure}[t]{.32\linewidth}
\centering
			 \includegraphics[width = \textwidth]{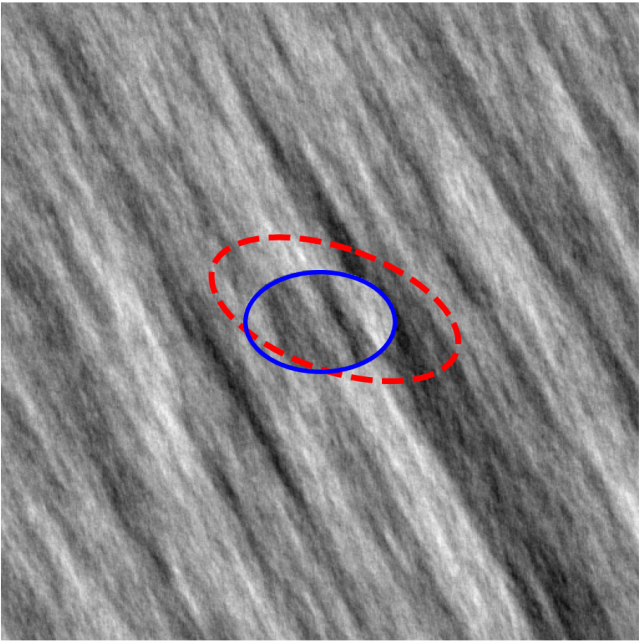}
             \caption{MODERN-Diagnosis result for type-1 fault.}  
\end{subfigure}
~ 
\begin{subfigure}[t]{.32\linewidth}
\centering
		 \includegraphics[width = \textwidth]{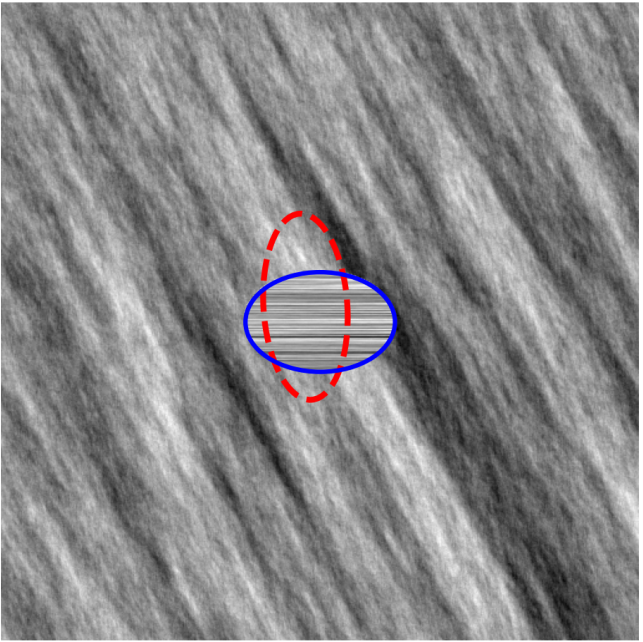}
            \caption{MODERN-Diagnosis result for type-2 fault.}  
\end{subfigure}
\begin{subfigure}[t]{.32\linewidth}
\centering
		\includegraphics[width = \textwidth, height = \textwidth]{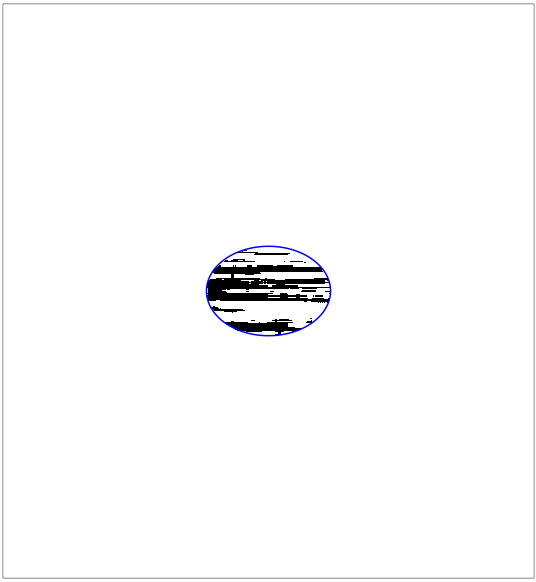}
            \caption{MF method result for type-2 fault.}  
\end{subfigure}
\vspace{8pt}
\fignote{The fault isolation results are corresponding to the case of median SDSC value in their repeated simulations.} 
\end{figure}

\subsection{Details for Section \ref{sec:5.4}}\label{app:steel}

First, to provide some examples of the image from the electric commutator manufacturing setting, we use Figure \ref{appfig:steel} to show one of the IC images and the (only) OC image, respectively. 
\begin{figure}[ht!]
\centering
\caption{Representative images in the electric commutator manufacturing.} \label{appfig:steel}
\begin{subfigure}[t]{.3\linewidth}
\centering
			 \includegraphics[width = \linewidth]{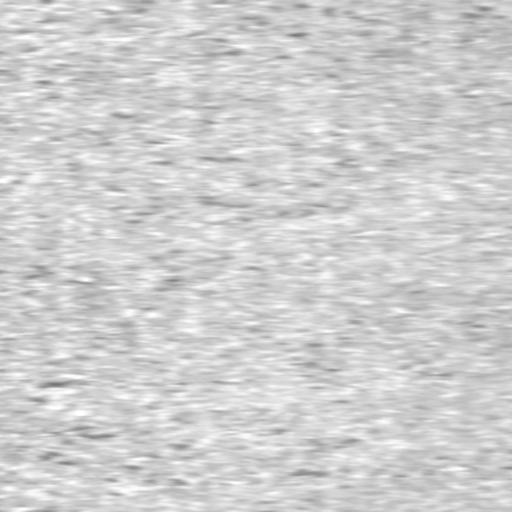}
             \caption{One of the 14 IC images.}  
\end{subfigure}
~ 
\begin{subfigure}[t]{.3\linewidth}
\centering
			\includegraphics[width=\linewidth]{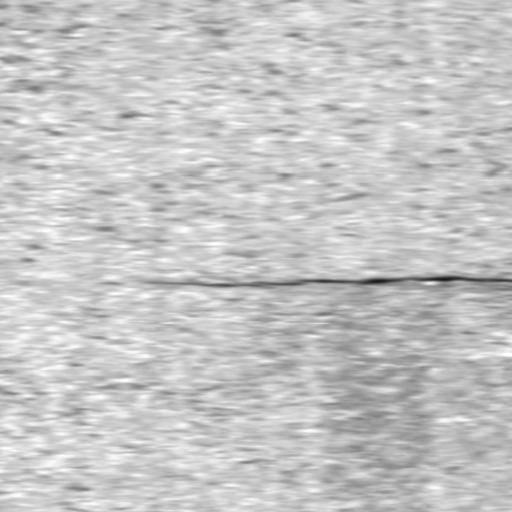} %
            \caption{The (only) OC image.}  
\end{subfigure}
\end{figure}
Second, we conduct the applicability test to ensure that the network trained on the benchmark dataset can be used in the considered manufacturing setting. Following Section \ref{sec:3.4}, we obtain the test statistic value of $22.67$, and the resulting p-value is smaller than $10^{-11}$. Moreover, as a direct visualization, Figure \ref{fig:steel_hist} clearly shows that the trained network can separate the OC image from other images quite well. 
\begin{figure}[htp]
  \centering
  \caption{The histogram of the applicability test for the commutator images}\label{fig:steel_hist}
  \includegraphics[width = 0.5\textwidth]{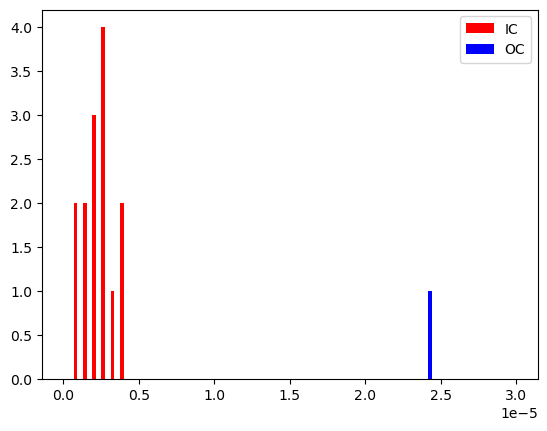}
\end{figure}

{\color{black}
\subsection{Comparison with AlexNet} \label{app:alex}
The specific ``'Inception ResNet'' architecture is highly regarded in the CNN literature and thus adopted by our MODERN framework. It is worth noting that other CNN architectures can be used in our framework without much difficulty. In this section, we benchmark our MODERN-Net performance against the well-known ``AlexNet'' \citep{krizhevsky2012imagenet}, the winner of the 2012 ImageNet competition. The architecture of ``AlexNet'' is detailed in Table \ref{tab:4}. It can be seen that ``AlexNet'' consists of the same CNN building blocks: convolution layers, ReLU activations, and pooling operations. Although less deep, ``AlexNet'' has 10 million more parameters than our MODERN-Net (33.4 million vs. 23.4 million).  We have modified the fully connected layers of the network so that it can work with $512\times 512$ images and output IC-OC probabilities. 

\begin{table}[ht]
  \centering
  \caption{The architecture of AlexNet.}
  \label{tab:4}
  \begin{tabular}{lcccc}
    \hline
    Layer         & Size                     & Kernel Size           & Stride & Pad  \\ \hline
    Input Image   & $512\times 512\times 3$  &                       &        &       \\
    Convolution 1 & $128\times 128\times 96$ & $11\times 11\times 3$ & 4      & 2     \\
    Batch Norm 1  & $128\times 128\times 96$ &                       &        &       \\
    Leaky ReLU 1  & $128\times 128\times 96$ &                       &        &        \\
    Max Pooling 1 & $63\times 63\times 96$   & $3\times 3$           & 2      & 0      \\
    Convolution 2 & $63\times 63\times 256$  & $5\times 5\times 96$  & 1      & 2     \\
    Batch Norm 2  & $63\times 63\times 256$  &                       &        &       \\
    Leaky ReLU 2  & $63\times 63\times 256$  &                       &        &        \\
    Max Pooling 2 & $31\times 31\times 256$  & $3\times 3$           & 2      & 0      \\
    Convolution 3 & $31\times 31\times 384$  & $3\times 3\times 256$ & 1      & 1     \\
    Batch Norm 3  & $31\times 31\times 384$  &                       &        &       \\
    Leaky ReLU 3  & $31\times 31\times 384$  &                       &        &        \\
    Convolution 4 & $31\times 31\times 384$  & $3\times 3\times 384$ & 1      & 1     \\
    Batch Norm 4  & $31\times 31\times 384$  &                       &        &       \\
    Leaky ReLU 4  & $31\times 31\times 384$  &                       &        &        \\
    Convolution 5 & $31\times 31\times 256$  & $3\times 3\times 384$ & 1      & 1     \\
    Batch Norm 5  & $31\times 31\times 256$  &                       &        &       \\
    Leaky ReLU 5  & $31\times 31\times 256$  &                       &        &        \\
    Max Pooling 5 & $15\times 15\times 256$  & $3\times 3$           & 2      & 0      \\
    Dropout  6    & 65536                    & \multicolumn{3}{c}{dropout rate = 0.5} \\
    Fully Connected 6          & 512                      &                       &        &        \\
    Leaky ReLU 6  & 512                      &                       &        &         \\
    Dropout  7    & 512                      & \multicolumn{3}{c}{dropout rate = 0.5} \\
    Fully Connected 7          & 256                      &                       &        &        \\
    Leaky ReLU 7  & 256                      &                       &        &         \\
    Fully Connected  8          & 2                        &                       &        &        \\
    \hline
  \end{tabular}
\end{table}

With the same training and validation DAGM datasets, we evaluate the classification performance of the two networks on the testing set. Regarding the training of ``AlexNet'', we employ the same stochastic gradient descent method with the triangular learning rate schedule as in Subsection \ref{sec:3.1} and augment the training set using the same image transformations $\mathcal{T}$ as in Subsection \ref{sec:5.1}. The comparison results are summarized in Table \ref{tab:5}. It can be seen that our MODERN-Net performed stably across the 10 image classes, whereas the performance of AlexNet varied substantially. In particular, AlexNet did not do well in classifying OC images. 

\begin{table}[h]
  \centering
  \caption{Performance comparison with AlexNet using DAGM.}
  \label{tab:5}
  \begin{tabular}{ccccc}
    \hline
          & \multicolumn{2}{c}{Specificity} & \multicolumn{2}{c}{Sensitivity} \\ 
    Class & MODERN         & AlexNet        & MODERN        &   AlexNet       \\ \hline
     1    & 100\%          & 100\%          & 100\%         & 100\%           \\ 
     2    & 100\%          & 100\%          & 100\%         & 100\%           \\ 
     3    & 100\%          & 100\%          & 100\%         & 100\%           \\ 
     4    & 100\%          & 98.63\%        & 100\%         & 42.31\%         \\ 
     5    & 100\%          & 100\%          & 100\%         & 90.48\%         \\ 
     6    & 100\%          & 100\%          & 100\%         & 90.48\%         \\ 
     7    & 100\%          & 100\%          & 100\%         & 100\%           \\ 
     8    & 100\%          & 100\%          & 97.06\%       & 97.06\%         \\ 
     9    & 100\%          & 100\%          & 100\%         & 100\%           \\ 
     10   & 100\%          & 100\%          & 100\%         & 98.11\%          \\ \hline
  \end{tabular}
\end{table}

}

\end{APPENDICES}

\end{document}